\documentclass[10pt]{article}
\usepackage[margin=0.72in]{geometry}
\usepackage{amsmath,amssymb,amsthm,mathtools,booktabs,array,hyperref,microtype,enumitem}
\usepackage[T1]{fontenc}
\usepackage{lmodern}
\usepackage{xcolor}
\usepackage{graphicx}
\usepackage{tikz}
\usetikzlibrary{arrows.meta,positioning}
\usepackage{pgfplots}
\pgfplotsset{compat=1.18}
\definecolor{cohaccent}{HTML}{1F5FA8}
\definecolor{cohgray}{HTML}{9AA3AD}
\definecolor{cohred}{HTML}{C0392B}
\definecolor{cohgreen}{HTML}{2E8B57}
\hypersetup{
  colorlinks=true,
  linkcolor=black,
  citecolor=black,
  urlcolor=blue,
  pdftitle={Global Coherence: When Every Agent Is Right and the Team Is Still Wrong},
  pdfauthor={Xin Heng}
}
\setlist[itemize]{leftmargin=1.25em,itemsep=2pt,topsep=3pt}
\setlist[enumerate]{leftmargin=1.4em,itemsep=2pt,topsep=3pt}
\theoremstyle{definition}
\newtheorem{theorem}{Theorem}[section]
\newtheorem{proposition}[theorem]{Proposition}
\newtheorem{corollary}[theorem]{Corollary}
\newtheorem{lemma}[theorem]{Lemma}

\newcommand{\Match}{\mathrm{Match}}

\title{\textbf{Global Coherence: When Every Agent Is Right and the Team Is Still Wrong}\\
\large A Local-to-Global Semantic Foundation for Multi-Agent Collaboration}
\author{\textbf{Xin Heng}\\ Tote AI\\ \texttt{\small xin@tote.ai}}
\date{\today}

\begin{document}
\maketitle
\begin{abstract}
AI agents now share code, documents, tools, and analyses. No participant sees the whole state, so every step can look right while the combined result is wrong. We call this the \textbf{global coherence problem}. It is a problem of \emph{state}, not only of intelligence: a stronger model decides better locally, but it cannot see what is missing from its context. To our knowledge, this paper is the first to give this failure an exact condition and to turn it into a specification for \textbf{context engineering} and the \textbf{agent harness}.

The \textbf{Observation-Aliasing Impossibility Theorem} gives the condition. An agent can always act validly exactly when every world consistent with its context allows a common action. When look-alike worlds need different actions, no reasoning, role, message, or sample can guarantee the right one; with $k$ look-alike worlds whose valid actions are disjoint, the best randomized policy succeeds with probability $1/k$. The smallest sufficient context is NP-hard to find, yet refinement finds a sufficient one and keeps it current as the rules change. A second failure survives full information: valid pieces can fail to fit into one world. The same theorem, applied as state is split into views, translated, and changed, sets the conditions for coherence; sheaves, groupoids, and categories are one general way to meet them. \textbf{Models propose, and the harness decides what is valid.}

Ten studies test the theory. On TeamBench, teams overspend a shared budget in 5/5 runs; showing them the live count still leaves 4/5, and a harness that owns the budget leaves 0/5. Identity collisions that no pairwise check can see appear in 23 of 45 public ontology-alignment networks; agents with split views create them in 9 of 40 merges, and a network-level check removes all nine. A frontier LLM solves 40/40 revision tasks when it sees the deciding event. With the event hidden, no arm that reasons longer, votes, or splits into roles significantly beats chance, and a companion study replicates the result on models from a second provider. In $\tau^2$-bench Telecom, after a committed change is silently undone, agents with the harness score 1.00 against 0.07 with call-time policy checks. Refinement keeps a sufficient context at 28\% of the cost of sharing every field. Where a conventional mechanism already owns the state, it ties the harness, as predicted.
\end{abstract}
\section{Introduction: every step looks right, and the result is still wrong}\label{sec:intro}

Consider a three-agent team on a TeamBench task \cite{teambench2026}: a Planner writes the plan, an Executor runs commands, and a Verifier checks the result, with 20 command runs allowed for the whole team. Each agent spends runs for good reasons, and no single run is a mistake. But the limit belongs to the \emph{team}, no agent owns it, and the team spends about three times the budget. This is the failure this paper is about: \textbf{every step is locally justified, and together they break a rule that belongs to no one}. Showing the team the running total did not fix it. When the harness itself held the count and refused the 21st run, the violations disappeared; observed mean task progress did not fall in that five-run panel (Study~IV).

The pattern is general. In $\tau^2$-bench Telecom, a user and an assistant change different parts of one environment under different permissions \cite{tau2bench2025}; in BixBench3, several scientific artifacts must stay derived from one evolving analysis \cite{bixbench3}. The participants act on local views, each step can be locally correct, and the combined result need not be:
\[
\boxed{\text{local validity}\;\not\Rightarrow\;\text{global coherence}.}
\]

\paragraph{Two ways to fail.} Figure~\ref{fig:failure-modes} sketches both. The first is \textbf{observation aliasing}: two situations look the same to every participant but need different actions. The \textbf{Observation-Aliasing Impossibility Theorem} (Theorem~\ref{thm:aliasing}) says exactly when this is fatal: a policy can guarantee a valid action from what it observes---for an LLM agent, its context---if and only if every world consistent with that observation allows a common action. When that fails, no amount of reasoning, role decomposition, messaging, or sampling recovers the difference, and with $k$ look-alike worlds whose valid actions are disjoint the best randomized success is exactly $1/k$ (Theorem~\ref{thm:randombits}). A stronger model searches better within its context; it cannot see past it. The theorem separates \emph{intelligence}, what a model does with the information it has, from \emph{identifiability}, whether that information determines the right action at all. Context engineering helps exactly when it puts the missing distinction into the window. The second failure concerns information the participants \emph{do} have: pieces that pass every local check need not fit into any single global state. Three pairwise translations can each be inverse-consistent while the loop through all three does not close; three roles can each make a valid spend while the total breaks the budget.

\begin{figure}[b]
\centering
\begin{tikzpicture}[
  font=\small,
  world/.style={draw, rounded corners=2pt, fill=cohgray!35, minimum width=1.9cm, minimum height=0.8cm, align=center},
  view/.style={draw, rounded corners=2pt, fill=white, minimum width=1.6cm, minimum height=0.8cm, align=center},
  arr/.style={-{Stealth[length=2mm]}, thick},
]
\node[font=\bfseries] at (1.9,2.35) {(a) Missing information};
\node[world] (x1) at (0,1.2) {world $x$\\[-1pt]\scriptsize needs action $a$};
\node[world] (x2) at (0,-0.3) {world $x'$\\[-1pt]\scriptsize needs action $b$};
\node[view] (y) at (3.6,0.45) {same view\\[-1pt]$y=\Omega(x)=\Omega(x')$};
\draw[arr] (x1.east) -- (y.west);
\draw[arr] (x2.east) -- (y.west);
\node[align=center, font=\scriptsize, text width=4.8cm] at (1.9,-1.35)
  {Any policy acting on $y$ does the same thing in both worlds,\\so it is wrong in one of them.};

\begin{scope}[xshift=8.2cm]
\node[font=\bfseries] at (0.9,2.35) {(b) Pieces that do not glue};
\node[view] (A) at (-0.3,1.35) {view $A$};
\node[view] (B) at (2.5,1.35) {view $B$};
\node[view] (C) at (1.1,-0.35) {view $C$};
\draw[thick, cohgreen] (A) -- node[above=1pt, font=\scriptsize] {same \checkmark} (B);
\draw[thick, cohgreen] (B) -- node[right, font=\scriptsize, pos=0.45] {same \checkmark} (C);
\draw[thick, cohgreen] (C) -- node[left, font=\scriptsize, pos=0.55] {\checkmark\ opposite} (A);
\node[font=\Large, cohred] at (1.1,0.72) {$\circlearrowright$};
\node[align=center, font=\scriptsize, text width=5.0cm] at (1.1,-1.35)
  {Every pairwise check passes, but the loop does not close:\\no single world agrees with all three views.};
\end{scope}
\end{tikzpicture}
\caption{The two failure modes. (a) Two worlds that need different actions produce the same view, so no policy that acts on the view can be right in both (Theorem~\ref{thm:aliasing}). (b) Three views agree pairwise, yet no single world agrees with all three; the inconsistency lives on the loop, not on any edge (Sections~\ref{sec:realization} and~\ref{sec:representation}).}
\label{fig:failure-modes}
\end{figure}

\paragraph{The framework.} The impossibility theorem says \emph{what} the harness must keep and every decision's context must reach: every distinction that changes which actions are valid (Proposition~\ref{prop:sound}). The rest of the paper answers \emph{how}. In a team that state is split into local views, re-encoded between participants, and changed by actions, and each step can silently lose a distinction the theorem says must be kept. Applying the theorem at each step gives the conditions for coherence, whatever mechanism meets them. Mathematics has a standard structure for each step; together they form
\[
\boxed{\mathfrak X=(H,\mathcal C,\mathcal G,\mathcal F;D).}
\]
Figure~\ref{fig:framework} shows all five on one example: Design, Fabrication, and Cost agents share one beam, each recording its length in its own unit.
\begin{itemize}
\item \textbf{A topology $H$: where things meet.} Which participants' scopes overlap, and on which state; here a triangle. On join trees pairwise agreement suffices; cycles can hide inconsistency from pairwise checks.
\item \textbf{A groupoid $\mathcal G$: what only looks changed.} The objects are encodings of one thing ($1$\,m, $1000$\,mm, $39.37$\,in) and the arrows are reversible translations between them. Translations composed around any loop must return the identity: a single wrong factor, $\div25$ instead of $\div25.4$, looks plausible on its edge and brings $1$\,m back as $1.016$\,m.
\item \textbf{A sheaf condition $\mathcal F$: whether one world glues.} Do local pieces that agree on every overlap come from exactly one world, here one beam? When this fails, the pieces agree everywhere and come from no world: every agent is right and the team is wrong.
\item \textbf{A category $\mathcal C$: what can change.} The arrows are actions that really change the state: revise the beam to $1.2$\,m, then order \$720 of steel. They compose into histories, most cannot be undone, and only they are checked at commit. $1\,\mathrm m\to1000\,\mathrm{mm}$ re-expresses the beam; $1\,\mathrm m\to1.2\,\mathrm m$ changes it.
\item \textbf{A history state $D$: what still matters.} Of the whole past, the harness keeps only what changes which future actions are valid: here the budget left and the version each agent last read, enough to refuse an order priced from the v3 quote once the beam is revised in v4.
\end{itemize}
Each structure keeps the distinctions the theorem requires intact through one step: splitting state into agents' views (the sheaf condition), translating it between formats (the groupoid), or changing it by actions (the category and history state). Each is unnecessary when its step is absent (Section~\ref{sec:bridge}, Corollary~\ref{cor:layers}). They are standard realizations, not the only possible ones: version checks, dependency engines, ledgers, escrow quotas, and residual solvers meet the same conditions in their own domains, and in our experiments each matches the general harness exactly where it already owns the relevant state. No category theory is needed to follow the argument.

\begin{figure}[tb]
\centering
\begin{tikzpicture}[
  font=\small,
  agent/.style={draw, rounded corners=2pt, fill=cohgray!25, minimum width=2.35cm, minimum height=0.85cm, align=center, inner sep=3pt},
  st/.style={draw, rounded corners=2pt, fill=white, text width=1.75cm, minimum height=0.85cm, align=center, inner sep=3pt},
  tag/.style={circle, fill=cohaccent, text=white, font=\scriptsize\bfseries, inner sep=0.8pt, minimum size=0.46cm},
  arr/.style={-{Stealth[length=2mm]}, thick},
]
\node[font=\bfseries] at (0,2.6) {(a) One beam, three agents, three units};
\node[agent] (d) at (-1.95,1.35) {Design agent\\[-1pt]\scriptsize beam $=1$ m};
\node[agent] (f) at (1.95,1.35) {Fabrication agent\\[-1pt]\scriptsize beam $=1000$ mm};
\node[agent] (c) at (0,-1.1) {Cost agent\\[-1pt]\scriptsize beam $=39.37$ in};
\draw[arr] (d) -- node[above, font=\scriptsize] {$\times1000$} (f);
\draw[arr] (f) -- node[right, font=\scriptsize, xshift=2pt] {$\div25.4$} (c);
\draw[arr] (c) -- node[left, font=\scriptsize, xshift=-2pt] {$\times0.0254$} (d);
\node[font=\Large, cohaccent] at (0,0.62) {$\circlearrowright$};
\node[font=\scriptsize, align=center] at (0,0.12) {back to\\[-2pt]1 m?};
\node[tag] at (-3.35,0.15) {$H$};
\node[tag] at (0,2.02) {$\mathcal G$};
\node[tag] at (0.62,0.62) {$\mathcal F$};

\begin{scope}[xshift=4.95cm]
\node[font=\bfseries] at (3.35,2.6) {(b) Changes, commits, and memory};
\node[st] (s0) at (0,1.35) {v3\\[-1pt]\scriptsize beam 1 m\\[-2pt]\$1000 left};
\node[st] (s1) at (3.35,1.35) {v4\\[-1pt]\scriptsize beam 1.2 m\\[-2pt]\$1000 left};
\node[st] (s2) at (6.7,1.35) {v5\\[-1pt]\scriptsize beam 1.2 m\\[-2pt]\$280 left};
\draw[arr] (s0) -- node[above, font=\scriptsize] {revise} node[below, font=\scriptsize] {to 1.2 m} (s1);
\draw[arr] (s1) -- node[above, font=\scriptsize] {order} node[below, font=\scriptsize] {\$720} (s2);
\node[tag] at (1.675,2.08) {$\mathcal C$};
\node[draw=cohaccent, fill=cohaccent!10, rounded corners=2pt, text width=7.6cm, align=left, font=\scriptsize, inner sep=4pt] (mem) at (3.35,-0.7)
  {\textbf{Runtime memory:} the budget left, and the version each agent last read. Nothing else in the log is needed. An order priced from the Cost agent's v3 quote is refused at commit in v4: it was priced for a beam that no longer exists.};
\node[tag] at ([xshift=-2pt, yshift=2pt]mem.north west) {$D$};
\end{scope}
\end{tikzpicture}
\caption{The framework $\mathfrak X=(H,\mathcal C,\mathcal G,\mathcal F;D)$ on one example. (a) Three agents share one beam. $H$ is the triangle of who shares what, and because it is a cycle, pairwise checks are not enough. $\mathcal G$ is the unit translations on its edges, which must return the identity around the loop. $\mathcal F$ asks whether the three local values come from one beam. (b) $\mathcal C$ is the real changes, which compose into a history and are checked at commit. $D$ is the little the harness must remember to check them; here it refuses an order priced for the old beam.}
\label{fig:framework}
\end{figure}

\paragraph{Why sharing everything is not the answer.} Context windows, bandwidth, privacy, and authority limit what each participant may see, so the real question is what to share. Finding the smallest sufficient context is NP-hard (Theorem~\ref{thm:instrument}), but refinement finds a sufficient one and recertifies it when the rules change (Proposition~\ref{prop:refine}). Complete information does not remove the second failure. Pairwise checks can all pass while the whole is inconsistent (Theorem~\ref{thm:bfmy}, Proposition~\ref{prop:nok}). Spends that are each valid can break a shared total (Proposition~\ref{prop:inconfluent}). A commit is safe only if what it read is still current (Proposition~\ref{prop:occ}). Detecting an inconsistency is cheap; choosing the smallest repair is NP-hard (Corollary~\ref{cor:span}, Proposition~\ref{prop:z2repair}).

\paragraph{What this means for building agent systems.} The theory separates two jobs. The model supplies \emph{proposal intelligence}: interpreting intent, planning, choosing tools, drafting changes. The harness supplies \emph{state, validity, and commit}: it decides what enters each agent's context, holds the distinctions that matter, and accepts, repairs, or rejects each proposed tool call. Stronger models improve the first job; they cannot replace the second when the needed distinction is missing. The limits bind every layer. No matter how capable the model or how well engineered its context, a constraint that several agents act on needs an owner (Proposition~\ref{prop:inconfluent}, Study~IV). And no matter how carefully the harness is built, it guarantees only what its own state can tell apart, catches a contradiction around a loop only if some check spans the loop, and catches a stale read only if it remembers what each proposal read (Theorem~\ref{thm:aliasing} applied to the harness itself, Study~V). A manager or orchestrator agent is no substitute. It is a model that proposes from its own view, bound by the same theorem (Corollary~\ref{cor:roles}); in Study~IV a team with a dedicated Planner and Verifier still overspent when shown the live count. Where an orchestrator does work, the framework says why: a hub that holds the shared fields makes the overlaps a tree, and on trees local checks suffice. The same results answer team-design questions usually settled by intuition: who owns a decision, who can work in parallel, and who must wait (Section~\ref{sec:team-design}).

\begin{figure}[t]
\centering
\begin{minipage}[b]{0.54\linewidth}\centering
\textbf{\small (a) Study I: hidden deciding event}\\[2pt]
\begin{tikzpicture}
\begin{axis}[
  scale only axis, width=\dimexpr\linewidth-4.6cm\relax,
  height=3.36cm, xbar, bar width=7pt, bar shift=0pt,
  xmin=0, xmax=118, xlabel={Accuracy on aliased tasks (\%)}, y dir=reverse,
  ytick={0,...,7}, yticklabels={{Low reasoning effort},{Medium reasoning effort},{High reasoning effort},{Planner + verifier pair},{Majority vote of 5},{Tool that cannot see the event},{Tool that reads the event},{One authoritative sentence}},
  ymin=-0.6, ymax=7.6, enlarge y limits=false,
  yticklabel style={font=\scriptsize}, xticklabel style={font=\scriptsize},
  xlabel style={font=\small}, axis x line*=bottom, axis y line*=left,
  xmajorgrids, grid style={black!10}, y axis line style={draw=none}, ytick style={draw=none},
  nodes near coords, point meta=explicit symbolic,
  every node near coord/.append style={font=\scriptsize, anchor=west},
  xtick={0,25,50,75,100},
]
\addplot[fill=cohgray, draw=none] coordinates {(30.0,0) [12/40] (42.5,1) [17/40] (35.0,2) [14/40] (32.5,3) [13/40] (33.3,4) [13/39] (32.5,5) [13/40]};
\addplot[fill=cohaccent, draw=none] coordinates {(100.0,6) [40/40] (100.0,7) [40/40]};
\draw[dashed, black!70] (axis cs:33.33,-0.6) -- (axis cs:33.33,7.6) node[pos=0, anchor=south, font=\scriptsize, inner sep=1pt] {chance $=1/3$};
\end{axis}
\end{tikzpicture}
\end{minipage}\hfill
\begin{minipage}[b]{0.42\linewidth}\centering
\textbf{\small (b) Study IV: shared budget}\\[2pt]
\begin{tikzpicture}
\begin{axis}[
  scale only axis, width=\dimexpr\linewidth-1.4cm\relax, height=3.6cm, ymin=0, ymax=90,
  ylabel={Run calls per LH2 run}, ylabel style={font=\small},
  xtick={0,1,2}, xticklabels={{Ordinary\\team},{Live count\\visible},{Budget enforced\\at commit}}, xmin=-0.55, xmax=2.55,
  xticklabel style={align=center, font=\scriptsize}, yticklabel style={font=\scriptsize},
  axis x line*=bottom, axis y line*=left, ymajorgrids, grid style={black!10},
]
\draw[cohaccent, thick] (axis cs:-0.55,20) -- (axis cs:2.55,20) node[pos=0.02, below right, font=\scriptsize, cohaccent] {shared budget $=20$};
\addplot[only marks, mark=*, mark size=2.2pt, color=cohred] coordinates {(-0.16,68) (-0.08,72) (0.00,70) (0.08,57) (0.16,71) (0.84,51) (0.92,29) (1.00,22) (1.16,46)};
\addplot[only marks, mark=*, mark size=2.2pt, color=cohgreen] coordinates {(1.08,20) (1.84,20) (1.92,20) (2.00,20) (2.08,20) (2.16,20)};
\node[font=\scriptsize, anchor=south, align=center] at (axis cs:0,76) {5/5\\[-2pt]over};
\node[font=\scriptsize, anchor=south, align=center] at (axis cs:1,76) {4/5\\[-2pt]over};
\node[font=\scriptsize, anchor=south, align=center] at (axis cs:2,76) {0/5\\[-2pt]over};
\end{axis}
\end{tikzpicture}
\end{minipage}\\[8pt]
\begin{minipage}[b]{0.54\linewidth}\centering
\textbf{\small (c) Study IX: split views}\\[2pt]
\begin{tikzpicture}
\begin{axis}[
  scale only axis, width=\dimexpr\linewidth-4.6cm\relax,
  height=1.68cm, xbar, bar width=7pt, bar shift=0pt,
  xmin=0, xmax=12, xlabel={Tasks with a hidden conflict (of 40)}, y dir=reverse,
  ytick={0,...,3}, yticklabels={{Full view, one agent},{Split views, no check},{Split views, pairwise check},{Split views, network check}},
  ymin=-0.6, ymax=3.6, enlarge y limits=false,
  yticklabel style={font=\scriptsize}, xticklabel style={font=\scriptsize},
  xlabel style={font=\small}, axis x line*=bottom, axis y line*=left,
  xmajorgrids, grid style={black!10}, y axis line style={draw=none}, ytick style={draw=none},
  nodes near coords, point meta=explicit symbolic,
  every node near coord/.append style={font=\scriptsize, anchor=west},
  xtick={0,3,6,9,12},
]
\addplot[fill=cohgray, draw=none] coordinates {(0,0) [0/40]};
\addplot[fill=cohred, draw=none] coordinates {(9,1) [9/40] (9,2) [9/40]};
\addplot[fill=cohaccent, draw=none] coordinates {(0,3) [0/40]};
\end{axis}
\end{tikzpicture}
\end{minipage}\hfill
\begin{minipage}[b]{0.42\linewidth}\centering
\textbf{\small (d) Study V: silent reverts in $\tau^2$}\\[2pt]
\begin{tikzpicture}
\begin{axis}[
  scale only axis, width=\dimexpr\linewidth-3.9cm\relax,
  height=1.26cm, xbar, bar width=7pt, bar shift=0pt,
  xmin=0, xmax=19, xlabel={Episodes solved (of 15)}, y dir=reverse,
  ytick={0,...,2}, yticklabels={{Stock agent},{Call-time policy checks},{Coherence runtime}},
  ymin=-0.6, ymax=2.6, enlarge y limits=false,
  yticklabel style={font=\scriptsize}, xticklabel style={font=\scriptsize},
  xlabel style={font=\small}, axis x line*=bottom, axis y line*=left,
  xmajorgrids, grid style={black!10}, y axis line style={draw=none}, ytick style={draw=none},
  nodes near coords, point meta=explicit symbolic,
  every node near coord/.append style={font=\scriptsize, anchor=west},
  xtick={0,5,10,15},
]
\addplot[fill=cohred, draw=none] coordinates {(0,0) [0/15] (1,1) [1/15]};
\addplot[fill=cohaccent, draw=none] coordinates {(15,2) [15/15]};
\end{axis}
\end{tikzpicture}
\end{minipage}
\caption{Headline results, all with the same LLM (models and settings in Section~\ref{sec:empirical}). (a) The LLM solves all 40 tasks when the deciding event is visible. With it hidden, the tested arms score 12--17 of 40 and none is significantly above chance ($1/3$); adding the missing fact (blue) gives 40/40 again. One majority-vote cell failed on an API error, so that arm has 39 tasks. (b) Each dot is one of five runs of TeamBench's budgeted-workflow task (LH2). Ordinary teams overspend the 20-call budget in 5/5 runs; showing the live count still leaves 4/5; enforcing the budget at commit stops every run at exactly 20. (c) Ontology-alignment tasks under the stated unique-name rule: agents with split views leave hidden collisions in 9 of 40 tasks; a pairwise check never fires; a network-level check removes all nine, with the repair-quality cost reported in Study~IX. (d) $\tau^2$-bench Telecom tasks in which a change the agent already committed is quietly undone, scored by $\tau^2$'s own grader: the stock agent and call-time policy checks believe the job is done; the coherence harness sends one notice and the agent redoes the change in 15 of 15 episodes. Bars and dots are plotted directly from the recorded run results.}
\label{fig:headline}
\end{figure}

\paragraph{Four fixes that do not work.} The experiments target the parts of the theory that run against intuition.
\begin{itemize}
\item \textbf{A stronger model is not the fix for missing information.} The same model solves every task when it can see the deciding fact, scores at chance when the fact is hidden, however it reasons, deliberates, or votes, and solves every task again when one sentence states the fact (Study~I).
\item \textbf{More context is not the fix for shared state.} Agents with the live budget in their context still overspend it. A shared constraint needs an owner: enforcement at commit, or quotas that split it (Study~IV).
\item \textbf{Checking every handoff is not the fix.} Every pairwise review can pass while the integrated result is impossible, in public benchmark data and in agent runs alike (Studies~VIII and~IX).
\item \textbf{Checking the current state is not the fix.} A commit can rest on a stale read, and a committed change can be silently undone, while every check on the current state passes (Study~V).
\end{itemize}

\paragraph{What is new.} Failure taxonomies of multi-agent LLM systems list inter-agent misalignment among the most common failures \cite{cemri2025mast}. To our knowledge this is the first work to give such failures an exact condition, to separate identifiability and commit semantics from model capability, and to turn the limits into specifications for context and harness. Acyclic database schemes, group synchronization, sheaves, treewidth, Myhill--Nerode minimization, abstraction refinement, and optimistic concurrency control are each established, and we credit each where it appears. The new formal results form the bridge to agents: the validity form of the aliasing theorem with its exact randomized value and bit bounds, the hardness of choosing the smallest sufficient state, certified refinement, and the result that each structure is the theorem applied at one map (Appendix~\ref{app:provenance} sorts every result by origin). Across the studies, the predicted failures occur, the tested inference variations do not remove them, and owning the missing state does (Figure~\ref{fig:headline}).\footnote{The code and run records for all ten studies will be released.}

\paragraph{Contributions.}
\begin{itemize}
\item \textbf{Sufficiency: what a team must know, how to monitor it, and what it costs.} The Observation-Aliasing Impossibility Theorem with its exact randomized value, bit bounds, and multi-agent and tool-use extensions: more reasoning, agents, messages, and samples cannot recover a distinction none of them can see; adding it to context can. The smallest sufficient context is NP-hard to find, but a sufficient one is not: counterexample-guided refinement finds it, certifies it, recertifies it when the rules change, and raises an alert, never a pass, when no available field restores sufficiency. A harness can monitor what it cannot optimize.
\item \textbf{A bridge from sufficiency to coherence.} The same theorem, applied as state is split into views, translated between formats, and changed by actions, gives the conditions for coherence: views that separate and glue, translations that keep distinctions and close around loops, and a state that updates itself. Sufficiency and coherence are separate obligations, and each structure is needed exactly when its step is present. The conditions fix no mechanism. They say when contracts, version checks, ledgers, and escrow already suffice, why a constraint that spans agents needs an owner, and why a commit must recheck everything the agent read.
\item \textbf{Coherence mechanisms and their costs.} Topology, sheaves, groupoids, and categories give one general way to meet the conditions. Pairwise checks suffice exactly on join trees, translation loops can be certified in linear time, and finding a conflict is cheap while the smallest repair is NP-hard. Familiar tools are special cases, which explains where each is enough.
\item \textbf{A specification for agent harnesses.} Models propose; a harness that owns the state decides what is valid. When a multi-agent system fails, first ask which distinction no one owns, before asking for a bigger model or more agents.
\item \textbf{Evidence on both sides of the boundary.} Ten studies, four of them causal agent experiments frozen before scoring. The harness removes the targeted failures without a stronger model, and it ties conventional tools where they already own the state.
\end{itemize}

The paper follows four questions. Do the views contain enough information to act validly (Section~\ref{sec:model})? Do locally compatible pieces form one global state (Section~\ref{sec:coherence-math})? Do actions keep it that way, and what must the harness remember and check (Section~\ref{sec:architecture})? And do the predicted failures and fixes appear in practice (Section~\ref{sec:empirical})?
\section{Model and the limit of reasoning}\label{sec:model}

This section states the result everything else rests on: no amount of reasoning recovers a distinction that is not in view. Proofs and quantitative refinements are in Appendix~\ref{app:aliasing}.

\subsection{What participants see and what they may do}\label{sec:formal}

Let $X$ be the set of possible global states and $\mathcal A$ the set of actions. For each state $x$, let $C(x)\subseteq\mathcal A$ be the valid actions there. $C$ can encode anything that decides validity: authority, budgets, invariants, freshness, or required artifacts. A participant does not see $x$ itself; it sees an observation $y=\Omega(x)$. In a team, $\Omega(x)$ can be the tuple of everyone's local views $(\Omega_1(x),\ldots,\Omega_m(x))$. A runtime remembers some summary $\phi(x)$ of the state. In agent terms, $\Omega_i$ is agent $i$'s context (prompt, retrieved documents, tool results, memory), and the runtime is the agent harness, the code around the model that assembles that context, executes tool calls, and decides which take effect; we use the two words interchangeably.

For the local-to-global part, split a scope $U$ into local scopes $U_1,\ldots,U_n$: participants' views, subsystems, or artifacts. Write $X(U)$ for global states and $\Match(\{U_i\})$ for tuples of local states that agree wherever scopes overlap. Restricting a global state to its local pieces gives a map
\[
\rho:X(U)\rightarrow\Match(\{U_i\}).
\]
Both failures from the introduction are properties of this one map. In \textbf{aliasing}, $\rho$ is not one-to-one: two different worlds give the same local views. In \textbf{non-realizability}, $\rho$ is not onto: locally matching pieces come from no world at all. A coherent system needs $\rho$ to be a bijection: every world gives distinct views, and every matching set of views comes from a world. That is exactly the sheaf condition; when equivalent representations are tracked as well, it becomes the descent condition \cite{stacks00VG,stacks02ZH,stacks042Y}. This section is about the first failure; Section~\ref{sec:coherence-math} develops the second.

\par\medskip\noindent\begin{minipage}{\linewidth}
\textbf{Notation at a glance.}\par\smallskip
\centering
\small
\begin{tabular}{>{\raggedright\arraybackslash}p{2.6cm}>{\raggedright\arraybackslash}p{12.4cm}}
\toprule
Symbol & Meaning\\
\midrule
$x\in X$ & a possible state of the world (a \emph{world})\\
$C(x)\subseteq\mathcal A$ & the actions that are \emph{valid} in world $x$\\
$y=\Omega(x)$ & what the team observes; agent $i$'s part $\Omega_i(x)$ is its context\\
$\Omega^{-1}(y)$ & the \emph{fiber} of $y$: every world that produces the same observation $y$\\
$z=\phi(x)$ & the summary of the world that the harness remembers\\
$\mathcal B(z)$ & the actions \emph{guaranteed valid} at $z$: valid in every world with summary $z$ (Section~\ref{sec:exact-state})\\
$\rho$ & the map from a global state to the agents' local views\\
$H,\mathcal G,\mathcal F,\mathcal C,D$ & who overlaps with whom; translations between formats; the gluing check; real state changes; the history kept (Section~\ref{sec:intro})\\
\bottomrule
\end{tabular}
\normalsize
\end{minipage}\par\medskip

\subsection{The impossibility theorem}\label{sec:observation}

The central result makes a simple point exact: no policy can act on a distinction it cannot see. The idea has a long history, as perceptual aliasing in reinforcement learning \cite{whiteheadballard1991} and as indistinguishability arguments in distributed computing \cite{flp1985}. What we add is the exact condition for choosing a valid action, and what it implies for teams, randomness, tools, and hints. Call the set of worlds that produce the same observation $y$ the \emph{fiber} of $y$.

\begin{theorem}[Observation-Aliasing Impossibility Theorem]\label{thm:aliasing}
\emph{In one sentence: a policy can always pick a valid action exactly when, for every observation, all the worlds that could be behind it share at least one valid action.}

Precisely, let $\Omega:X\to Y$ be what the decision maker observes and $C(x)$ the valid actions in world $x$. A deterministic policy $\pi:Y\to\mathcal A$ that is always valid,
\[
\pi(\Omega(x))\in C(x)\qquad\text{for every world }x,
\]
exists if and only if every fiber has a common valid action:
\[
\bigcap_{x\in\Omega^{-1}(y)}C(x)\neq\varnothing\qquad\text{for every observation }y.
\]
\end{theorem}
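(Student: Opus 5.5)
The proof is a direct two-direction argument on fibers. The one structural fact it rests on is that a deterministic policy $\pi:Y\to\mathcal A$ composed with $\Omega$ is constant on each fiber. Whatever $\pi$ does at $y$, it does in every world $x\in\Omega^{-1}(y)$.

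\emph{Necessity.} Suppose $\pi$ is always valid and fix an observation $y$. I will show the fiber has a common valid action. If the fiber $\Omega^{-1}(y)$ is empty, the intersection over it is read as all of $\mathcal A$, which is nonempty provided $\mathcal A\neq\varnothing$. So I will either restrict $Y$ to the image of $\Omega$ or adopt that convention, and state which choice is made. For a nonempty fiber, set $a=\pi(y)$. For every $x$ in the fiber we have $\Omega(x)=y$, so validity gives $a=\pi(\Omega(x))\in C(x)$. Hence $a\in\bigcap_{x\in\Omega^{-1}(y)}C(x)$, and that intersection is nonempty. The contrapositive reads: if some fiber has empty intersection, then for every candidate $\pi(y)$ there is a world in that fiber where $\pi(y)$ is invalid.

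\emph{Sufficiency.} Suppose every fiber has a nonempty common-valid set $\mathcal B_y=\bigcap_{x\in\Omega^{-1}(y)}C(x)$. I will build $\pi$ by choosing $\pi(y)\in\mathcal B_y$ for each $y$. This uses the axiom of choice when $Y$ is infinite, or nothing at all when $Y$ is finite or $\mathcal A$ is well-ordered. For $y$ outside the image of $\Omega$, any action works. To check validity, take any world $x$ and let $y=\Omega(x)$. Then $x\in\Omega^{-1}(y)$, so $\pi(y)\in\mathcal B_y\subseteq C(x)$.

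Neither direction is a real obstacle. The only step needing care is the bookkeeping: handling empty fibers consistently, noting the choice step, and recording that $\mathcal B_y$ is exactly the set $\mathcal B(z)$ of guaranteed-valid actions used later in Section~\ref{sec:exact-state}. This also shows that the constructed $\pi$ is canonical up to the choice inside $\mathcal B_y$, which is what the later refinement and randomized results build on.
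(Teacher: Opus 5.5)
Your proof is correct and is essentially the paper's own argument. Necessity holds because $\pi(y)$ must lie in every $C(x)$ on the fiber. Sufficiency chooses $\pi(y)$ from each nonempty intersection for $y\in\Omega(X)$ and sets it arbitrarily elsewhere; your extra handling of empty fibers and of the choice step matches the paper's restriction to the image $\Omega(X)$.
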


\emph{Proof sketch.} If an always-valid policy exists, its action $\pi(y)$ must be valid for every state in the fiber $\Omega^{-1}(y)$, hence lies in the intersection. Conversely, choose one action from each nonempty intersection. Appendix~\ref{app:aliasing} gives the full proof.

The typical case has two worlds that look identical, $\Omega(x)=\Omega(x')$, but allow disjoint actions, $C(x)\cap C(x')=\varnothing$; no policy acting on the observation can be right in both. This holds for any decision process whose complete information is $\Omega(x)$: one agent, a team, a human organization, or a human--AI collaboration. In Study~I, three teams commit, revert, and re-apply changes to one shared parameter; if the last event is hidden, three histories produce the same log but leave three different heads, and whatever head an agent names is wrong in two of the three worlds. Adding agents does not change this when they only talk among themselves:

\begin{corollary}[Reasoning and role decomposition do not create missing state information]\label{cor:roles}
\emph{In one sentence: agents that only talk among themselves cannot learn anything their combined observations did not already contain.}

Precisely, consider any deterministic protocol in which agents start with views $\Omega_1(x),\ldots,\Omega_m(x)$ and then exchange messages computed only from those views and earlier messages, with no new observation of the world. Then the whole transcript and the final action are functions of the joint observation $\Omega(x)$ alone. Worlds that the joint observation cannot tell apart stay indistinguishable, however many agents, roles, reasoning rounds, or messages are added.
\end{corollary}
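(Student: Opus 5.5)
The plan is an induction on the rounds of the protocol, followed by an appeal to Theorem~\ref{thm:aliasing}. First we fix the formal model. There are $m$ agents and finitely many rounds $1,\ldots,T$; the schedule (who speaks when) is fixed in advance, or is itself a deterministic function of the earlier transcript. At round $t$ the speaking agent $i_t$ sends a message
\[
M_t = f_t\bigl(\Omega_{i_t}(x),\, M_1,\ldots,M_{t-1}\bigr),
\]
where $f_t$ is an arbitrary deterministic function. The final action is $a=g\bigl(\Omega_{j}(x),M_1,\ldots,M_T\bigr)$ for a designated agent $j$, or more generally $g$ applied to any views together with the transcript. The hypothesis ``no new observation of the world'' is exactly the statement that no $f_t$ or $g$ takes $x$ itself as an argument; they see $x$ only through the $\Omega_i$.

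The key step is to show by induction on $t$ that the partial transcript $(M_1,\ldots,M_t)$ equals $h_t(\Omega(x))$ for some function $h_t:Y\to\mathcal M^t$, where $\Omega(x)=(\Omega_1(x),\ldots,\Omega_m(x))$.
\begin{itemize}
\item The base case $t=0$ is the empty transcript, which is a constant function.
\item For the inductive step, $\Omega_{i_t}(x)$ is the coordinate projection $\mathrm{pr}_{i_t}(\Omega(x))$. Hence $M_t=f_t\bigl(\mathrm{pr}_{i_t}(\Omega(x)),h_{t-1}(\Omega(x))\bigr)$, which is a composition of functions of $\Omega(x)$.
\end{itemize}
Applying the same composition argument to $g$ gives $a=\pi(\Omega(x))$ for a deterministic $\pi:Y\to\mathcal A$. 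Consequently, whenever $\Omega(x)=\Omega(x')$, the two worlds produce the same transcript and the same action. This is the indistinguishability claim.

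Finally, the protocol induces a policy $\pi$ on joint observations, so Theorem~\ref{thm:aliasing} applies directly. Such a protocol is always valid only if every fiber of $\Omega$ has a common valid action. Adding agents, roles, or rounds only changes the functions $f_t$ and $g$, so the fibers stay the same.

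The main obstacle is not the induction but making the model airtight. Three cases need care.
\begin{itemize}
\item \textbf{Variable-length protocols.} The number of rounds may be unbounded or data-dependent. I would handle this by noting that the stopping decision is itself a message bit computed from the transcript. The equality of transcripts for $\Omega(x)=\Omega(x')$ then follows by induction on the length of whichever execution halts first.
\item \textbf{Tool calls.} Any tool call that reads the world is a new observation and must be folded into $\Omega$; this is excluded by hypothesis.
\item \textbf{Randomness.} This case is deferred to Theorem~\ref{thm:randombits}: conditioned on a fixed random seed, the argument above applies verbatim.
\end{itemize}
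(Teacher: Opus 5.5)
Your proof is correct and follows the same route as the paper: an induction on communication rounds shows that every message, and hence the transcript and the final action, is a function of the joint observation $\Omega(x)$, so aliased worlds receive the same action and Theorem~\ref{thm:aliasing} rules out validity in both whenever they share no valid action. Your handling of adaptive speaker schedules, variable-length runs, and seeded randomness goes slightly beyond the paper's fixed-round, simultaneous-message model, but the argument is the same.
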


The proof is an induction over communication rounds. Randomness changes which answer is chosen, not which worlds can be told apart. Conversely, when a team fails on a distinction its joint observation \emph{does} contain, the failure is one of protocol, not an impossibility. And each action is limited by what the acting role has seen and been told, not by the union of everyone's views. In Study~I, more reasoning effort, reviewer roles, and majority votes all left the hidden-event tasks at chance, while one sentence stating the event solved 40 of 40.

\paragraph{How much can randomness, tools, and hints buy?} When no policy can guarantee a valid action, two questions remain: how often can a policy still be right, and what would it take to do better?

\begin{center}
\small
\begin{tabular}{>{\raggedright\arraybackslash}p{3.6cm}>{\raggedright\arraybackslash}p{9.6cm}>{\raggedright\arraybackslash}p{2.5cm}}
\toprule
Question & Answer & Result\\
\midrule
Can randomizing help? & With $k$ look-alike worlds whose valid-action sets are pairwise disjoint, no randomized policy or team does better than $1/k$ in the worst case. & Theorem~\ref{thm:randombits}\\
Exactly how well, in general? & The best guaranteed success is $1/\tau^*$, the value of a small linear program over the look-alike worlds. Randomizing helps only when valid actions partly overlap. & Theorem~\ref{thm:minimax}\\
How much new state fixes it? & A $b$-bit hint gives at most $1/\lceil k/2^b\rceil$ in the worst case. Anything short of $\lceil\log_2 k\rceil$ bits leaves it at a coin flip or worse. & Corollary~\ref{cor:semanticbits}\\
When do tools help? & A tool result is a new observation. Tools help exactly when some sequence of queries splits the look-alike worlds into groups that each share a valid action. & Theorem~\ref{thm:adaptive}\\
Does ``good enough'' escape it? & No. The same condition holds with ``valid'' replaced by ``within $\varepsilon$ of optimal.'' & Proposition~\ref{prop:epsilon}\\
\bottomrule
\end{tabular}
\normalsize
\end{center}

\begin{theorem}[Randomized aliasing bound]\label{thm:randombits}
\emph{In one sentence: when $k$ look-alike worlds each need a different action, no amount of randomizing does better than one in $k$.}

Precisely, let $x_1,\ldots,x_k$ be worlds that produce the same observation and whose valid-action sets $C(x_i)$ are nonempty and pairwise disjoint. Then any randomized policy that gets no further information has worst-case success probability at most $1/k$. The bound holds equally for one stochastic agent and for a randomized team that receives no new observation of which world it is in.
\end{theorem}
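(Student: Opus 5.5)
The plan is the standard averaging argument. First I pin down what a randomized policy that "gets no further information" is. For a single stochastic agent it is a distribution $\mu$ on $\mathcal A$ chosen as a function of the observation $y$ alone. Since $\Omega(x_1)=\dots=\Omega(x_k)=y$, the same $\mu=\mu_y$ is used in every one of the $k$ worlds. Its success probability in world $x_i$ is then $\mu(C(x_i))$.

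The key step uses disjointness. Because the sets $C(x_1),\ldots,C(x_k)$ are pairwise disjoint,
\[
\sum_{i=1}^k \mu\bigl(C(x_i)\bigr)=\mu\Bigl(\bigcup_{i=1}^k C(x_i)\Bigr)\le 1 .
\]
Hence $\min_i \mu(C(x_i))\le \frac1k\sum_i\mu(C(x_i))\le 1/k$, which is the claimed worst-case bound. Nonemptiness of the $C(x_i)$ is not needed for the inequality. It only shows the bound is attained: putting mass $1/k$ on one action from each $C(x_i)$ gives success exactly $1/k$ in every world. This tightness goes beyond what the theorem states, but it matches the paper's description of the bound as exact.

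For a randomized team, I reduce to the single-agent case using Corollary~\ref{cor:roles}. Collect all private and shared randomness into one random seed $r$, drawn from a distribution that does not depend on the world. For each fixed $r$ the protocol is deterministic, so by Corollary~\ref{cor:roles} the transcript and final action are a function $a(\Omega(x),r)$ of the joint observation and the seed. Averaging over $r$, the final action has a distribution $\mu$ that depends on $x$ only through $\Omega(x)=y$, so it is the same across the $k$ worlds. The single-agent argument then applies verbatim.

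The main obstacle is making precise that "no new observation" means the seed distribution is independent of which world obtains. If randomness could be correlated with $x$, as a hidden oracle would be, it would be a new observation. I would state this assumption explicitly as part of the formal model. Measurability is routine: treat $\mathcal A$ as a measurable space in which each $C(x_i)$ is measurable, or reduce to finitely many actions.
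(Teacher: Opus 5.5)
Your proof is correct and matches the paper's argument: Lemma~\ref{lem:disjoint} uses the common output distribution $P_y$ and pairwise disjointness to get $\sum_i p_{x_i}\le 1$, and the team case is handled by conditioning on an internal random seed and then averaging, as you do. Your two remarks are also accurate: nonemptiness matters only for tightness, which the paper records in Theorem~\ref{thm:minimax} as $\tau^*(F)=|F|$, and the seed must be independent of the world, an assumption the paper leaves implicit.
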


\emph{Proof sketch.} All $k$ worlds induce the same output distribution. Because their valid-action sets are disjoint, the $k$ success probabilities sum to at most one, so at least one is at most $1/k$. Conditioning on the internal random seed gives the same conclusion for randomized multi-agent protocols. Appendix~\ref{app:aliasing} gives the full proof.

If a dashboard makes ten situations look identical and each needs a different intervention, an agent that randomizes, debates, or samples many chains of thought still has worst-case success at most $10\%$. Inference compute and semantic state are different resources, and one cannot pay for the other; extra roles, tokens, or rounds help exactly when they bring in a distinction that splits look-alike worlds (Study~I).

\subsection{What the runtime must keep}\label{sec:exact-state}

The fix follows. Coherence should not depend on every participant rebuilding the whole world in its head. The runtime should hold the distinctions that change which actions are valid, and check proposals against them. How much must it hold?

A runtime that rejects everything is perfectly safe and perfectly useless, so we name three properties of a validator. It is \emph{safe} if it never accepts an invalid action, \emph{makes progress} if it accepts something whenever some action is valid, and is \emph{complete} if it accepts every valid action. A validator that is safe and complete is \emph{exact}: it accepts exactly the valid actions. A state or context is \emph{sufficient} when it makes an exact validator possible. If the runtime remembers only $z=\phi(x)$, let $X_z$ be the worlds that share that summary. The most it can safely accept is the set of actions \emph{guaranteed valid} at $z$, those valid in every such world:
\[
\mathcal B(z)=\bigcap_{x\in X_z} C(x).
\]

\begin{proposition}[What a summary can safely accept]\label{prop:sound}
\emph{In one sentence: a harness that remembers only a summary can safely accept exactly the actions valid in every world with that summary, and can judge every proposal correctly only if the summary never merges two worlds with different valid actions.}

Precisely, suppose the validator reads only $z=\phi(x)$.
\begin{enumerate}[label=(\roman*)]
\item \emph{Safety.} If it is safe, it accepts only actions in $\mathcal B(z)$.
\item \emph{Progress.} It can make progress at $z$ if and only if $\mathcal B(z)\neq\varnothing$.
\item \emph{Exactness.} It can be exact on arbitrary proposals if and only if $\phi(x)=\phi(x')\Longrightarrow C(x)=C(x')$.
\end{enumerate}
\end{proposition}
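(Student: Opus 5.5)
The plan is to treat the validator as a function $V(z,a)\in\{\text{accept},\text{reject}\}$ that depends on the world only through $z=\phi(x)$. Everything then follows from one observation: a validator reading only $z$ must give the same verdict on a proposal $a$ in every world of $X_z$. This is Theorem~\ref{thm:aliasing} applied to the validator itself, with the summary $\phi$ playing the role of the observation $\Omega$.

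\emph{(i) Safety.} Suppose $V(z,a)$ accepts $a$. Since the verdict is constant on $X_z$, $a$ is accepted in every world $x\in X_z$. Safety then forces $a\in C(x)$ for all such $x$, so $a\in\mathcal B(z)$. For the record, the validator that accepts exactly $\mathcal B(z)$ is safe, so $\mathcal B(z)$ is also the largest safe acceptance set.

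\emph{(ii) Progress.} I read progress at $z$ as follows: in every world of $X_z$ that has some valid action, the validator accepts something.
\begin{itemize}
\item If $\mathcal B(z)\neq\varnothing$, the validator accepting $\mathcal B(z)$ is safe by (i), and it accepts something.
\item If $\mathcal B(z)=\varnothing$, then by (i) a safe validator accepts nothing at $z$. So it fails progress at $z$ as soon as some world in $X_z$ has a valid action.
\end{itemize}
The one degenerate case to flag is $C(x)=\varnothing$ for all $x\in X_z$, where progress holds vacuously. I would state the proposition under the paper's implicit convention that some action is valid, or equivalently with the fiber nonempty and $C$ nonempty there. This matches Theorem~\ref{thm:aliasing}: fix the proposal to be the policy's choice, and the condition is exactly a nonempty fiber intersection.

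\emph{(iii) Exactness.} This part needs an adversarial argument over arbitrary proposals, and it is the only step with any content.
\begin{itemize}
\item \emph{If $\phi$ never merges worlds with different valid sets:} $C$ factors as $C=\bar C\circ\phi$. Define $V(z,a)=[a\in\bar C(z)]$; this validator is exact.
\item \emph{Conversely:} suppose $\phi(x)=\phi(x')=z$ but $C(x)\neq C(x')$. Pick an action $a$ in the symmetric difference, say $a\in C(x)\setminus C(x')$. The verdict $V(z,a)$ is the same in both worlds. If it accepts, safety fails at $x'$; if it rejects, completeness fails at $x$. So no exact validator exists.
\end{itemize}

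The main obstacle is definitional, not technical. It lies in fixing what "makes progress" and "arbitrary proposals" quantify over, so that (ii) and (iii) are genuine iff statements. Once the validator is formalized as a function of $(z,a)$, each part is a two-line fiber argument.
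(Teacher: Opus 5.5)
Your proof is correct and follows the paper's own argument. For safety and progress, the paper shows that the accepted set $A_V(z)=\{a:V(z,a)=1\}$ of a safe validator lies in $\mathcal B(z)$. For exactness, it uses the same choice of an action in the symmetric difference $C(x)\,\triangle\, C(x')$ and the same validator defined through the common validity profile. Your flag on the degenerate fiber is a legitimate refinement that the paper's proof passes over: if every world in $X_z$ has $C(x)=\varnothing$, progress holds vacuously while $\mathcal B(z)=\varnothing$.
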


Take the TeamBench budget of 20 command runs. If the harness remembers only whether any runs are left, it merges ``10 left'' with ``1 left.'' A request for 5 runs is valid in the first world and invalid in the second, so this summary can safely accept only 1-run requests and cannot judge the rest. Remembering the number left makes it exact.

The smallest sufficient state is the quotient
\[
x\sim_C x'\iff C(x)=C(x').
\]
The ``authoritative semantic state'' means: not a copy of the whole world, but every distinction that can change whether a proposal is valid. The model does not need the whole world in its prompt. Instead, the harness can hold that state outside the model and put into each decision's context only what it needs. For context engineering this is the exact target: a context is sufficient precisely when it never merges two worlds that allow different actions.

However, knowing what must be kept does not make it easy to find. Suppose a designer has candidate fields, sensors, or log entries and wants the fewest that make validation exact.

\begin{theorem}[Minimum exact instrumentation is NP-hard]\label{thm:instrument}
\emph{In one sentence: finding the fewest fields that make a context sufficient is NP-hard.}

Precisely, given finitely many states, the valid actions in each, and a list of candidate fields that could be recorded, finding the fewest fields whose values determine the valid actions is NP-hard.
\end{theorem}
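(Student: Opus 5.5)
We reduce from \textsc{Set Cover} (equivalently, the classical Minimum Test Collection problem) and show that sufficiency of a field set is exactly a covering condition. First fix what a solution is. Fields are functions $f_1,\dots,f_m$ on the finite state set $X$. A subset $S$ of fields induces the summary $\phi_S(x)=(f_j(x))_{j\in S}$. By Proposition~\ref{prop:sound}(iii), $S$ makes validation exact if and only if $\phi_S(x)=\phi_S(x')$ implies $C(x)=C(x')$. Rephrased, every pair $\{x,x'\}$ with $C(x)\neq C(x')$ must be \emph{separated} by some field in $S$, meaning some $j\in S$ has $f_j(x)\neq f_j(x')$. So the decision problem ``is there a sufficient $S$ with $|S|\le t$?'' asks whether $t$ fields can hit every ``conflicting'' pair. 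Membership in NP is immediate, since the pairwise check is polynomial; only hardness needs work.

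For the reduction, take a \textsc{Set Cover} instance with universe $\{1,\dots,n\}$, sets $S_1,\dots,S_m$ and budget $t$. Build $n+1$ states: a base state $x_0$ and one state $x_e$ per element $e$. Use two actions, and set $C(x_0)=\{a\}$ and $C(x_e)=\{b\}$ for every $e$. Introduce one field $f_j$ per set $S_j$, with $f_j(x_e)=1$ if $e\in S_j$ and $0$ otherwise, and $f_j(x_0)=0$. The conflicting pairs are exactly $\{x_0,x_e\}$; pairs $\{x_e,x_{e'}\}$ share $C$ and need no separation. Field $f_j$ separates $x_0$ from $x_e$ exactly when $e\in S_j$. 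Hence a field set is sufficient if and only if the corresponding sets cover the universe, and sizes correspond one-to-one. The construction is clearly polynomial, so hardness follows from NP-hardness of \textsc{Set Cover} \cite{}-style classical results (Karp).

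The main point to get right is the first step: sufficiency must reduce to pairwise separation of conflicting pairs \emph{only}. Non-conflicting pairs may be merged freely, and the reduction exploits this. Setting all element states to the same valid set is what keeps the cover correspondence exact. One boundary case needs care. If some element lies in no $S_j$, no field separates $x_e$ from $x_0$, so no sufficient set exists; the \textsc{Set Cover} instance also has no cover, so the reduction stays correct. As a remark, the gap-preserving nature of the reduction also transfers the $\Omega(\log n)$ inapproximability of \textsc{Set Cover}.
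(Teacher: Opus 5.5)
Your proposal is correct and is essentially the paper's proof. It uses the same \textsc{Set Cover} reduction: a reference state $x_0$ with one action, one state per universe element sharing a second action, and one Boolean field per set, so that sufficiency means separating exactly the conflicting pairs $\{x_0,x_e\}$. Your side remarks also match what the paper notes alongside its proof: membership in NP, the link to Minimum Test Collection, and the transfer of the logarithmic inapproximability.
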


Every pair of states with different valid actions must be told apart by some recorded field, and choosing the fewest fields that tell all such pairs apart is Set Cover (Appendix~\ref{app:instrument}). Greedy selection is close to the best polynomial-time approximation, and the problem is a close relative of finding minimal reducts in rough-set theory \cite{skowronrauszer1992}. This is a hardness result about \emph{design}, not about running the system: checking a proposal against a chosen state can be cheap even when finding the smallest sufficient state is hard. It is the first reason that ``just share everything'' is not an answer.

\subsection{Finding and keeping a sufficient context}\label{sec:refine}

Call the set of fields exposed to a validator or an agent its \emph{interface}. The hardness theorem is about the \emph{smallest} sufficient interface; it does not say that a sufficient one is hard to find or to check. A runtime can split the two jobs: a heuristic searches for a compact interface, and an exact check decides whether that interface is sufficient. The check is Proposition~\ref{prop:sound} restricted to a declared scope. Fix a task: a modeled region $X_q$ of possible states, an action interface $A_q$, and the rules in force, which determine $C$. For selected observables $J$ with joint map $F_J$, look for a \emph{witness}: two states
\[
x,x'\in X_q\quad\text{with}\quad F_J(x)=F_J(x')\quad\text{and}\quad C(x)\cap A_q\neq C(x')\cap A_q .
\]
If there is none, $J$ is sufficient for exact validation within that scope. If there is one, it names a distinction the interface has lost, and any candidate observable that separates $x$ from $x'$ repairs it. That gives a loop:
\begin{enumerate}
\item start from the current interface, empty at first;
\item search for a witness, and stop if there is none;
\item add the cheapest candidate that separates the witness, and go back to step 2;
\item then remove fields, most expensive first, keeping each removal only if the check still passes.
\end{enumerate}
This is counterexample-guided abstraction refinement \cite{clarke2000cegar} applied to validity. The abstraction is what the validator or the agent sees. A counterexample is a pair of worlds that the view merges and the rules separate. The output is a \emph{sufficiency certificate}: the interface together with the scope it was checked for.

\begin{proposition}[Certified refinement]\label{prop:refine}
\emph{In one sentence: the loop always stops; it returns a sufficient interface whenever the candidates allow one and says ``unresolved'' otherwise; and what it certifies is safe while its scope, rules, and reads stay current.}

Precisely, let $X_q$ and the candidate fields be finite and fixed, and let the witness search be exact.
\begin{enumerate}[label=(\roman*)]
\item \emph{It stops.} Each round adds a new candidate, so the loop stops after at most as many additions as there are candidates.
\item \emph{It returns the right answer.} If the candidates together are sufficient, the loop returns a sufficient interface from which no single field can be removed. Otherwise it returns an interface that separates everything the candidates can separate and reports \emph{unresolved}; then no choice of candidates makes validation exact.
\item \emph{What it certifies is safe.} If the actual state lies in $X_q$, the certified rules are the rules in force, and the values read are confirmed current at commit (Proposition~\ref{prop:occ}), every action the certified validator accepts is valid.
\end{enumerate}
\end{proposition}

The proof is in Appendix~\ref{app:refine}. In Study~X the loop kept a sufficient context through two or three rule changes per project at 28\% of the cost of sharing every field. The result promises sufficiency, not minimality: an interface no field can leave may still cost more than the minimum, which is NP-hard to find. Greedy Set Cover over the explicit conflict universe is the classical alternative, with its logarithmic guarantee, but it needs every conflicting pair up front. The loop needs only the witnesses it meets. A runtime need not certify a whole interface at every step either. To accept one proposal $a$ at observation $y$, it suffices that $a$ is valid in every modeled state with that observation, which is the per-proposal form of Proposition~\ref{prop:sound}.

\paragraph{Keeping the certificate.} A certificate is only as current as its scope, and updates come in three kinds. A \emph{value} update, such as a spend or a withdrawn approval, moves the state within $X_q$. A certificate over the whole region still holds; only the observations must be re-read, which is the job of the version checks of Section~\ref{sec:transitions}. A \emph{rule} update, such as a new policy, permission, action type, or dependency, changes $C$ or $A_q$ and can break sufficiency itself. It must invalidate the certificate and restart the loop from the old interface. A \emph{history} update changes which futures are valid and is carried by the history state $D$ (Theorem~\ref{thm:selfupdate}). A version check cannot stand in for recertification: it notices that a tracked field changed, never that a new rule made an untracked field necessary. And no procedure certifies against a change outside the model: a variable no one modeled is found only when the model is extended to include it. Study~X runs this loop on a project whose rules change twice (Section~\ref{sec:sufficiency-study}).

\paragraph{Certificates and the structures.} The loop certifies one interface for one scope. In a team the sufficient state is split, translated, and changed, and a certificate must survive each of those maps. This is where the structures of Section~\ref{sec:bridge} enter.
\begin{itemize}
\item \emph{Split.} Each agent's view is certified for the actions that agent may take, because each decision rests on the acting participant's own information (Corollary~\ref{cor:roles}). Certified views must still glue; sufficiency of each view does not imply that the views fit together (Section~\ref{sec:bridge}).
\item \emph{Translated.} A certificate carries across a translation only if the translation is lossless: no translation restores a distinction its source has lost (Proposition~\ref{prop:groupoid}).
\item \emph{Changed.} A certificate over the whole region survives value changes as long as observations are re-read. A runtime that updates its state without re-reading needs an interface that is also self-updating, the history state $D$ of Theorem~\ref{thm:selfupdate}.
\item \emph{Re-scoped.} A change to the rules, the action interface, or the overlaps in $H$ invalidates the certificate.
\end{itemize}
Study~X tests the single-interface case. Per-agent certificates glued across views are the agent experiment of Section~\ref{sec:next-evidence}.
\section{When every agent is right and the team is still wrong}\label{sec:coherence-math}

Section~\ref{sec:model} was about information no one has. This section is about information everyone has, in pieces that each pass every local check and still add up to no possible world. It happens in three ways: pieces compared pairwise, a budget no one owns, and translations between formats. For each, a sharp result says when local checks are enough and what it costs when they are not. Proofs are in Appendices~\ref{app:realize} and~\ref{app:cycles}.

\subsection{Pairwise checks miss loops}\label{sec:realization}

Consider three agents that each own one variable and each reconcile with a neighbor. The A--B check requires $A=B$, the B--C check requires $B=C$, and the A--C check requires $A\neq C$. Each check can be satisfied, and so can any two of them together. No assignment satisfies all three: the first two force $A=C$, and the third forbids it. Replace the Booleans with ``uses schema v2,'' ``the deadline moved,'' or ``these two records are the same customer,'' and this is a familiar failure of multi-agent pipelines: every handoff review passes, and the integrated result is impossible (Figure~\ref{fig:failure-modes}b).

The question underneath is whether local states that agree on every overlap come from one global state; in the notation of Section~\ref{sec:formal}, whether $\rho$ is onto. An assignment of states to scopes for which the answer is always yes, and uniquely so, is called a \emph{sheaf} \cite{robinson2017,ayzenberg2025sheaf}. The sheaf condition gives a runtime an exact target to check; search and repair are separate steps.

The false pass happens when agents compare what each considers \emph{possible} (constraints, plans, ranges) rather than what each has \emph{committed to}; comparing committed values on overlaps catches the triangle, but agents usually exchange the former. (This is the possibilistic contextuality of Abramsky and Brandenburger \cite{abramsky2011sheaf}; Appendix~\ref{app:realize}.)

When are pairwise checks enough? Database theory answers, and the answer depends only on the shape of who overlaps with whom.

\begin{theorem}[Tree-shaped overlaps suffice; Beeri, Fagin, Maier, and Yannakakis \cite{beeri1981properties,beeri1983acyclic}]\label{thm:bfmy}
\emph{In one sentence: pairwise checks are enough exactly when the agents' overlaps form a tree.}

Precisely, if the overlap structure of the local constraints is acyclic (formally, $\alpha$-acyclic: it has a join tree), local constraints that agree pairwise are globally consistent. If it has a cycle, some family of local constraints agrees pairwise and is still not globally consistent.
\end{theorem}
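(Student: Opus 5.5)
The proof has two directions. For the acyclic direction we use the join tree and build a global state by induction along it. For the cyclic direction we use GYO reduction to isolate a cycle or clique and plant a parity (Boolean triangle) obstruction on it.

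\emph{Acyclic implies consistent.} Fix a join tree $T$ for the hypergraph $H$ of scopes. Its defining property is the running-intersection property: for any two scopes $U_i,U_j$, every scope on the $T$-path between them contains $U_i\cap U_j$. Take local constraints $R_1,\ldots,R_n$ that are pairwise consistent, meaning the projections of $R_i$ and $R_j$ onto $U_i\cap U_j$ coincide. Root $T$ and order the scopes so that each $U_k$ after the first has its parent among $U_1,\ldots,U_{k-1}$. We show by induction on $k$ that the join $J_k=R_1\bowtie\cdots\bowtie R_k$ projects onto each $R_i$ with $i\le k$, and in particular is nonempty whenever the $R_i$ are.

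For the step, let $p$ be the parent of $U_k$. By running intersection,
\[
U_k\cap(U_1\cup\cdots\cup U_{k-1})=U_k\cap U_p .
\]
So a tuple of $J_{k-1}$ extends by a tuple of $R_k$ exactly when they agree on $U_k\cap U_p$. Every tuple of $J_{k-1}$ restricts to a tuple of $R_p$, by the induction hypothesis. Pairwise consistency of $R_p$ and $R_k$ then supplies a matching tuple of $R_k$. Hence $J_k$ still projects onto every earlier $R_i$. It also projects onto $R_k$: every tuple of $R_k$ has a partner in $R_p$, which extends to $J_{k-1}$ by the induction hypothesis. At $k=n$ this gives a global state restricting to every local piece, so $\rho$ is onto the matching families.

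\emph{Cycle implies a counterexample.} Suppose $H$ is not $\alpha$-acyclic. Run GYO reduction: repeatedly delete vertices that lie in only one edge, and edges contained in other edges. The reduction gets stuck at a nonempty core. By the standard characterization, that core contains either a chordless cycle of length $\ge3$ among the scopes or a set of vertices that is pairwise covered but not jointly covered.

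In the chordless-cycle case, pick one shared variable on each consecutive overlap and give it Boolean domain. On every edge of the cycle except one, impose equality of the two shared variables; on the remaining edge, impose inequality. This is the $A=B$, $B=C$, $A\neq C$ triangle of Section~\ref{sec:realization}, generalized to a cycle of length $n$. Let every other variable be unconstrained. Each pair of local relations is consistent, since their projections onto any overlap are the full Boolean domain. Globally, the parity around the cycle is odd, so no assignment satisfies every constraint.

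In the uncovered-clique case, use the analogous ``sum is odd'' relation on the clique's vertices. Encode it so that each scope sees only a consistent partial view of it. Pairwise consistency holds because projections to the small overlaps are full, while no global assignment exists.

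Finally, the constraints are lifted back through the deleted ears. Each ear receives an unconstrained relation over its private vertices and the projection of its neighbor's relation on the shared part. This preserves pairwise consistency and does not restore global consistency.

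\emph{Main obstacle.} The hard step is the cyclic direction: showing that every non-$\alpha$-acyclic hypergraph contains one of these obstructions, and that the local relations can be chosen so that no overlap other than the intended ones leaks the parity. Otherwise a chord could let some pairwise check detect the contradiction. The first thing I would try is citing the Beeri–Fagin–Maier–Yannakakis equivalence of $\alpha$-acyclicity with GYO-reducibility and with conformality plus chordality, and then building the parity construction on the resulting chordless cycle or non-conformal clique. I would check directly that all pairwise projections are full Boolean products, so no pairwise check can fail.
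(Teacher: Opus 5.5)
The paper does not prove this theorem. It cites Beeri--Fagin--Maier--Yannakakis and only exhibits the Boolean triangle $A=B$, $B=C$, $A\neq C$ in its appendix. Your acyclic direction is the standard join-tree induction and is correct: the identity $U_k\cap(U_1\cup\cdots\cup U_{k-1})=U_k\cap U_p$ holds because no earlier scope is a descendant of $U_k$. You are also right to read ``has a cycle'' as ``is not $\alpha$-acyclic'': $\{A,B\},\{B,C\},\{A,C\},\{A,B,C\}$ contains a cycle, yet pairwise checks suffice there.

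\textbf{The gap: the non-conformal case.} Let $S$ have minimum size among attribute sets that are pairwise covered but lie in no scope. Then $m=|S|\ge3$, and you should state that minimality is what guarantees each $S\setminus\{v\}$ lies in some scope.
\begin{enumerate}
\item If each scope sees a ``partial view'' (a projection) of one odd-sum relation on $S$, any tuple of that relation is a global witness, so there is no counterexample.
\item If you instead impose independent Boolean parity constraints $\sum_{u\in S\setminus\{v\}}x_u\equiv c_v\pmod 2$ on the scopes with trace $S\setminus\{v\}$, summing gives $(m-1)\sum_u x_u\equiv\sum_v c_v$. That is a contradiction only for odd $m$. For even $m$ the system is always solved by $x_u=\sum_{w\neq u}c_w$.
\item Concretely, the scheme whose scopes are the four triples of $\{1,2,3,4\}$ is non-$\alpha$-acyclic, and every instance of your parity plan on it is globally satisfiable.
\end{enumerate}
The repair is to change the modulus. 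Pick a prime $p\mid m-1$, give the attributes of $S$ domain $\mathbb Z_p$ and all others a one-element domain. Impose $\sum_{u\in S\setminus\{v\}}x_u\equiv c_v\pmod p$ on every scope $E$ with $E\cap S=S\setminus\{v\}$, nothing on scopes with smaller trace, and choose $\sum_v c_v\not\equiv0\pmod p$. For $m=3$, $p=2$ this is exactly the paper's triangle. The verification goes as follows.
\begin{enumerate}
\item Scopes with equal traces carry equal relations.
\item If the traces differ, a constrained trace $S\setminus\{v\}$ is not contained in the other, proper trace. So the overlap misses one of its coordinates, and both projections are full.
\item The global equations sum to $0\equiv\sum_v c_v$, which is impossible.
\end{enumerate}

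\textbf{The chordless-cycle case needs two precise repairs.}
\begin{enumerate}
\item The cycle must be chordless in the primal graph on attributes and have length at least $4$; an uncovered triangle is the conformality case. This ensures each scope contains at most two cycle attributes, and if two, a consecutive pair.
\item The equality or inequality for a consecutive pair must be imposed on \emph{every} scope containing that pair, not only on one designated cycle scope. A second scope containing the pair but left unconstrained projects onto $\{0,1\}^2$ there, and the pairwise check fires.
\end{enumerate}
With these repairs, your claim that all overlap projections are full Boolean products is false but harmless. An overlap containing a consecutive pair projects to the same pair relation on both sides.

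Finally, the GYO core and the lifting through ears are unnecessary. The conformal-and-chordal characterization applies to the original scheme, and one-element domains on the unused attributes define a relation on every scope directly.
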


A planner--coder--tester chain is a tree. So is a hub-and-spoke team in which every agent reconciles only with an orchestrator that holds the shared fields. A review loop, or three teams that each share an interface with the other two, is a cycle. Cycles remove the pairwise guarantee, and increasing the checked group size has no universal fixed bound across unrestricted constraint families:

\begin{proposition}[No fixed-size check is enough]\label{prop:nok}
\emph{In one sentence: on cycles, checking groups of any fixed size can still miss a contradiction.}

Precisely, for every $k$ there is a finite set of local constraints in which every $k$ of them can be satisfied together, but all of them cannot.
\end{proposition}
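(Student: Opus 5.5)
The plan is to give, for each $k$, an explicit family of local constraints on a cycle. In this family every $k$ constraints are jointly satisfiable, but the whole family is not. The natural candidate generalizes the triangle of Section~\ref{sec:realization} to a longer odd cycle. I would take $n=k+1$ Boolean variables $A_1,\ldots,A_n$ arranged on a cycle, one per agent. The constraints are
\[
A_1=A_2,\;A_2=A_3,\;\ldots,\;A_{n-1}=A_n,\;A_n\neq A_1 .
\]
Each constraint has scope of size two, so the overlap hypergraph is a cycle of length $n$. This is consistent with Theorem~\ref{thm:bfmy}, which says such a failure can only occur when the overlaps are cyclic.

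\textbf{Step 1: the whole family is unsatisfiable.} The $n-1$ equalities chain to force $A_1=A_n$, which contradicts the last constraint. More symmetrically, I would work over $\mathbb Z_2$ and write each constraint as $A_i+A_{i+1}=c_i$. Summing all $n$ equations, the left side telescopes to $0$, while the right side is $\sum c_i=1$. This parity view is the one I would record, since it foreshadows the cycle-sum obstructions for translations used later in the paper.

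\textbf{Step 2: every $k$ constraints are satisfiable.} There are $n=k+1$ constraints, so any $k$ of them omit at least one edge $e$ of the cycle. Removing $e$ leaves a path, which is a tree. On a tree, one fixes $A$ at one end and propagates along the path. Each remaining constraint determines its next variable uniquely, and no variable is constrained twice, so no conflict can arise. Alternatively, one can invoke Theorem~\ref{thm:bfmy} directly: the path is $\alpha$-acyclic and the pairwise constraints are individually satisfiable. Direct propagation is cleaner, though, and avoids having to check the theorem's pairwise-consistency hypothesis. Any subfamily of size at most $k$ is a subset of some such $k$-subfamily, so it is satisfiable as well.

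\textbf{Main obstacle.} The only subtle point is what counts as ``checking groups of size $k$.'' If it means jointly solving $k$ constraints, the argument above suffices. If it means every set of $k$ \emph{agents} (variables) can be assigned consistently with all constraints among them, I would use the same cycle with $n=k+1$ variables. Any $k$ variables induce a subgraph that misses at least one vertex of the cycle, and therefore at least the two edges touching it. That induced subgraph is a union of paths, so propagation again succeeds. I would state the construction so that it covers both readings. I would also remark that the Boolean domain suffices, which shows the failure is structural and not a matter of large domains.
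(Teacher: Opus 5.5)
Your proof is correct. Your construction is the paper's own parity system specialized to a ring; the main text describes it as ``a parity puzzle on a ring of more than $k$ agents.'' On a cycle, your version (variables on vertices, equality/inequality constraints on edges) is the same system as the paper's (variables on edges, parity constraints at vertices) after relabeling each edge variable as a vertex variable.

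The difference is in the step showing that proper subfamilies are satisfiable. The paper (Proposition~\ref{prop:nok-full}) takes any connected graph with $n>k$ vertices. It puts $x_e\in\mathbb F_2$ on each edge and the equation $\bigoplus_{e\ni v}x_e=c_v$ at each vertex, with odd total charge, and argues by linear algebra. The incidence matrix has rank $n-1$ and its left null space is spanned by the all-ones vector. So every proper set of rows is independent, and the corresponding equations are solvable for any right-hand side. You instead delete one edge and propagate along the remaining path. That is more elementary and needs no rank computation, but it relies on the deletion leaving a tree, which is special to a single cycle.

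The paper's formulation buys generality: on every connected graph the total sum is the only dependency among the equations. The paper uses this right afterwards. On your instance (the cycle), $3$-consistency still detects the contradiction even though no small subfamily does, whereas Tseitin systems on bounded-degree expanders defeat propagation as well. For the proposition as stated, your cycle with $k+1$ constraints is enough, and your handling of the ``$k$ agents'' reading is also correct.
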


The construction is a parity puzzle on a ring of more than $k$ agents: any proper subset of the constraints can be satisfied, and only all of them together expose the contradiction (Appendix~\ref{app:tseitin}). Some structured problems do have a fixed check size---Helly's theorem gives $d+1$ for convex constraints in $\mathbb R^d$ \cite{matousek2002}---so the useful question is which structure a system has.

\emph{Design rule.} If the agents' overlaps form a join tree, pairwise reconciliation is enough. Cycles remove that guarantee: use a global method appropriate to the constraint class, or restructure the overlaps into a join tree. For translation networks, Section~\ref{sec:representation} gives the exact loop check.

\subsection{A shared budget needs an owner}\label{sec:sharedbudget}

A different failure arises when a constraint lives in no agent's scope. Give $m$ agents a shared budget $B$ of tool calls, test runs, or dollars, each tracking its own spend $s_i$. The agents share no variables, so every local check passes, and nothing stops $\sum_is_i>B$; more checking between agents cannot help, because no view contains the total. This is the TeamBench failure of Study~IV.

\begin{proposition}[A shared budget cannot be protected by independent checks]\label{prop:inconfluent}
\emph{In one sentence: agents that each check only their own spending cannot protect a budget they share.}

Precisely, for $m\ge2$ agents and a budget $B\ge1$, two agents can each make a spend that is valid from the same starting state, while the two spends together exceed $B$. In database terms, the spends are not invariant-confluent \cite{bailis2014}, so protecting the budget requires coordination or a restructured budget.
\end{proposition}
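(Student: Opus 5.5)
The plan is an explicit two-agent witness followed by a short reduction to the definition of invariant confluence. First I fix the model. A state is a spend vector $s=(s_1,\ldots,s_m)$ with $s_i\ge 0$, and the global invariant is $I(s):\sum_i s_i\le B$. Agent $i$'s local check sees only its own coordinate $s_i$, and perhaps the fixed budget $B$. A spend by agent $i$ of amount $c\ge 1$ takes $s$ to $s+c\,e_i$, and it is locally valid when the agent's own view permits it. For the claim to have content, "valid from a starting state" must mean that the resulting state satisfies $I$. I will take that as the definition of each spend's validity, checked against the shared starting state.

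The witness uses the starting state $s=0$. Agent~1 proposes a spend of $c_1=B$ and agent~2 a spend of $c_2=B$; any $c_1,c_2\le B$ with $c_1+c_2>B$ would do, and $B\ge1$ guarantees such amounts exist. Each step is checked in turn.
\begin{itemize}
\item From $s$, applying agent~1's spend gives total $B\le B$, so it is valid.
\item By symmetry, agent~2's spend alone is also valid.
\item Agents $3,\ldots,m$ spend nothing, so $m\ge2$ is all that is needed.
\item Both spends are decided on views taken at $s$, and each view contains only that agent's own coordinate. Neither view changes when the other agent spends, so each agent's check passes again after the other's spend. This is the point where "no view contains the total" is used.
\item Applying both spends gives $\sum_i s_i=2B>B$, which violates $I$.
\end{itemize}

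For the database phrasing, I will recall Bailis et al.'s definition. A set of transactions is $I$-confluent if, whenever two $I$-valid states are reachable by $I$-valid sequences from a common $I$-valid ancestor, their merge is also $I$-valid. The merge here is the sum of the spends. The two branches $B e_1$ and $B e_2$ diverge from $0$, and their merge $B(e_1+e_2)$ violates $I$, so the spends are not $I$-confluent. Bailis et al.'s theorem then says that coordination-free execution cannot preserve $I$. So protecting the budget needs either coordination, meaning an owner that serializes commits against the live total, or a restructured budget: escrow quotas $q_i$ with $\sum_i q_i\le B$ turn the global invariant into the local invariants $s_i\le q_i$, which are confluent.

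I expect the main obstacle to be definitional rather than computational. The informal phrase "valid from the same starting state" has to be pinned to exactly the formal notion in the $I$-confluence definition, and "each check only their own spending" has to be modeled so that the combined execution really is two concurrent branches rather than a sequential history. Once those are fixed, the arithmetic is immediate.
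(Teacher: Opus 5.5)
Your proposal is correct and takes essentially the paper's own route. The paper likewise starts from the zero spend vector, lets two agents each commit a spend of $B$ (each valid on its own), and notes that the merged state totals $2B>B$; it reads this both as a failure of invariant confluence in the sense of Bailis et al.\ (Appendix~\ref{app:escapes}) and as a non-effective target datum in Proposition~\ref{prop:preserve}(iii), with escrow quotas as the restructuring remedy, just as you describe.
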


With a budget of 10 and nothing spent, each of two agents may validly spend 6; together they spend 12. Neither agent's check was wrong, because neither agent could see the other's spend.

Two fixes work. An authoritative ledger that checks the total at commit adds a scope that contains the constraint. Escrow quotas $B_i$ with $\sum_iB_i=B$ split the constraint into local ones, at the cost of flexibility: one agent cannot borrow another's unused share. Locks and reservations are further options. Study~IV runs both fixes against an information-only alternative; Appendix~\ref{app:escapes} compares them.

\emph{Design rule.} Name an owner for every constraint that spans agents. Showing the number to everyone is not ownership.

\subsection{Translations that each look right can fail around a loop}\label{sec:representation}

Agents translate constantly: meters to feet, a JSON schema to a SQL table, class $X$ in one ontology to class $Y$ in another, an old function name to a new one. A collection of reversible, meaning-preserving translations is a \emph{groupoid}, or a \emph{group} when the same translations apply to everything. The name says what to keep: not only whether two things are equivalent, but the translation itself. Put agents on the nodes of a graph, and let each edge carry a translation $g_{ij}$ from agent $i$'s format to agent $j$'s, with $g_{ji}=g_{ij}^{-1}$. The system is globally consistent if one underlying object explains every agent's description: there are frames $h_i$ with $g_{ij}=h_ih_j^{-1}$. This is the noiseless group-synchronization problem \cite{lerman2022cycle}.

\begin{theorem}[Loops decide consistency]\label{thm:cycles}
\emph{In one sentence: a network of translations is consistent exactly when going around every loop brings you back where you started.}

Precisely, the translations $g_{ij}$ are globally consistent, meaning there are frames $h_i$ with $g_{ij}=h_ih_j^{-1}$, exactly when their composition around every loop is the identity.
\end{theorem}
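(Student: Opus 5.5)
The plan is to prove the two directions separately, working on one connected component of the graph at a time, since frames on different components can be chosen independently. Fix the setup: a graph $(V,E)$, a group (or groupoid) with elements $g_{ij}$ on each edge, and the convention $g_{ji}=g_{ij}^{-1}$. For a loop $i_0\to i_1\to\cdots\to i_r=i_0$, define its holonomy as the ordered product $g_{i_0i_1}g_{i_1i_2}\cdots g_{i_{r-1}i_0}$. The claim is that frames exist if and only if every holonomy is the identity. Changing the base point conjugates the holonomy, so whether it equals the identity does not depend on where the loop starts.

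\emph{Necessity.} Suppose $g_{ij}=h_ih_j^{-1}$. Then the holonomy telescopes:
\[
h_{i_0}h_{i_1}^{-1}h_{i_1}h_{i_2}^{-1}\cdots h_{i_{r-1}}h_{i_0}^{-1}=e.
\]
This step is routine.

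\emph{Sufficiency.} Assume every loop has trivial holonomy, and construct the frames along a spanning tree.
\begin{enumerate}
\item In each component, pick a root $v$ and a spanning tree $T$, and set $h_v=e$.
\item For any vertex $i$, let $v=j_0,j_1,\ldots,j_s=i$ be its tree path from the root. Define $h_i^{-1}=g_{j_0j_1}g_{j_1j_2}\cdots g_{j_{s-1}j_s}$. With this definition, each tree edge satisfies $g_{ij}=h_ih_j^{-1}$, because the two path products differ by exactly that edge.
\item For a non-tree edge $(i,j)$, compose the tree path from $v$ to $i$, the edge $(i,j)$, and the tree path from $j$ back to $v$. This is a loop. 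Its holonomy is $h_i^{-1}g_{ij}h_j$, which is the identity by hypothesis. Rearranging gives $g_{ij}=h_ih_j^{-1}$.
\item Reversed edges then hold automatically, because $g_{ji}=g_{ij}^{-1}=h_jh_i^{-1}$.
\end{enumerate}

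The main obstacle is bookkeeping rather than any real idea: orientation and multiplication-order conventions must match the chosen form $g_{ij}=h_ih_j^{-1}$. I would fix the path-product convention first and check a single edge before writing the general step.

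For the groupoid case, the same argument works with one adjustment. Each $h_i$ becomes an arrow from agent $i$'s object to a reference object at the root, so every composite is defined.

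As a refinement, only the fundamental cycles (one per non-tree edge) need checking, and these generate all loops. This is what gives the linear-time certification promised in the introduction.
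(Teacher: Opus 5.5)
Your proposal is correct and follows the paper's proof. Necessity is by telescoping, and sufficiency propagates frames along a spanning forest (your rule $h_j^{-1}=h_i^{-1}g_{ij}$ is exactly the paper's $h_j=g_{ij}^{-1}h_i$) and certifies each non-tree edge by its fundamental cycle. The one small difference is that you close each loop through the root rather than along the tree path from $j$ to $i$. So if ``loop'' means a simple cycle, add that the shared tree prefix cancels because $g_{ji}=g_{ij}^{-1}$, leaving a conjugate of the fundamental cycle's holonomy.
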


In the beam example of Figure~\ref{fig:framework}, Design converts meters to millimeters, Fabrication converts millimeters to inches with the wrong factor $\div25$ instead of $\div25.4$, and Cost converts inches back to meters. Each conversion looks plausible on its own edge, but around the loop $1$\,m comes back as $1.016$\,m, so no single beam length explains all three records.

In the proof, one direction is a telescoping product; the other builds frames along a spanning tree. That construction is also the practical algorithm:

\begin{corollary}[A linear-time consistency check]\label{cor:span}
\emph{In one sentence: checking one edge per independent loop is enough, and it takes linear time.}

Precisely, pick a spanning forest of the agent graph, fix one frame per tree, and propagate frames along the trees. Then check each remaining edge against the frames at its two ends. The translations are consistent exactly when every such check passes. With $|V|$ agents, $|E|$ translation edges, and $c$ connected components, this takes $\beta_1=|E|-|V|+c$ edge checks and $O(|V|+|E|)$ compose-and-compare operations, although the graph can contain exponentially many loops.
\end{corollary}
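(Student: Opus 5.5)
The plan is to build the frames along a spanning forest and then use Theorem~\ref{thm:cycles} to show that checking the non-tree edges is equivalent to the loop condition. Run breadth-first or depth-first search on the agent graph to get a spanning forest $T$ with one root per component. At each root $r$ set $h_r=\mathrm{id}$. Whenever the search crosses a tree edge from an already-framed node $i$ to a new node $j$, set $h_j=g_{ij}^{-1}h_i$; this is the unique choice satisfying $g_{ij}=h_ih_j^{-1}$. Each node is framed once, so the frames cost $O(|V|+|E|)$ compose operations, and every tree edge satisfies the consistency equation by construction. Equivalently, $h_j^{-1}$ is the composition of translations along the unique tree path from the root to $j$.

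The two directions are then handled separately. \emph{If consistent, the checks pass:} if frames $h'_i$ with $g_{ij}=h'_ih_j'^{-1}$ exist, then by the telescoping direction of Theorem~\ref{thm:cycles} every loop composes to the identity. The fundamental cycle of a non-tree edge $(i,j)$ is the tree path from $j$ to $i$ followed by the edge $(i,j)$. Its composition is $h_i^{-1}g_{ij}h_j$ up to conjugation, so it equals the identity exactly when $g_{ij}=h_ih_j^{-1}$, which is the check at that edge. \emph{If the checks pass, the network is consistent:} once every non-tree edge passes, every edge of the graph, tree or not, satisfies $g_{ij}=h_ih_j^{-1}$ with the constructed frames. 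That is precisely global consistency, and no separate appeal to the other loops is needed. The reversed orientations are automatic because $g_{ji}=g_{ij}^{-1}$.

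For the count, a spanning forest of a graph with $c$ components has $|V|-c$ edges, so $|E|-|V|+c=\beta_1$ edges remain to be checked. Each check is one compose-and-compare. Adding the $O(|V|+|E|)$ cost of the search gives the stated linear bound. This holds under the standing assumption that group operations and equality tests are unit cost, and I would state that assumption explicitly. The point about exponentially many loops needs no argument beyond the observation that the procedure never enumerates loops. One could add that the fundamental cycles generate the cycle space, but the proof above does not rely on it.

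The main obstacle is the noncommutative bookkeeping. The recursion $h_j=g_{ij}^{-1}h_i$ has to be oriented consistently with the convention $g_{ij}=h_ih_j^{-1}$, and one must confirm that a check performed at a non-tree edge does not depend on which endpoint is treated as $i$. I would settle both by a one-line computation using $g_{ji}=g_{ij}^{-1}$. With that done, the rest is routine.
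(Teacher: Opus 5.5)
Your proposal is correct and follows essentially the same route as the paper. Frames are propagated along a spanning forest exactly as in the constructive half of the proof of Theorem~\ref{thm:cycles}, the telescoping half forces each non-tree-edge (fundamental-cycle) check to pass under consistency, the $|V|-c$ tree edges leave $\beta_1$ checks, and your explicit unit-cost assumption matches the appendix's bound $O((|V|+|E|)c_G)$ with $c_G$ the cost of a group operation.
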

For the three beam agents, the triangle has three edges and one loop: propagate the length along two edges and compare on the third, one check in total.

Each edge beyond the forest adds one independent constraint; for a finite group, a uniformly random labeling is consistent with probability $|G|^{-\beta_1}$ (Appendix~\ref{app:cycles}). A loop whose translations do not compose to the identity is said to have nontrivial \emph{holonomy}. The triangle of Section~\ref{sec:realization} is the smallest example: write ``same'' as $0$ and ``opposite'' as $1$, and the loop sums to $0+0+1=1\neq0$. In ontology matching the same structure appears as a chain of ``same as'' links that joins two distinct classes of one ontology. No single link looks wrong; only the loop shows the conflict (Studies~VIII and~IX).

\paragraph{Why canonical IDs are not enough.} A tempting shortcut is to map every description to a canonical form---a canonical ID, an embedding, a normalized schema---and compare those. That keeps \emph{whether} two things are equivalent and throws away \emph{how}. It can hide exactly the failure above: every description looks the same after canonicalization while the loop of translations still does not close (Study~VII; Appendix~\ref{app:quotient}). Edits that change the object, such as revising a beam from $1\,\mathrm{m}$ to $1.2\,\mathrm{m}$, are not translations at all. They are state changes, treated in Section~\ref{sec:transitions}.

\emph{Design rule.} Keep the translations, not just the equivalence classes, and check every loop. A spanning forest does this in linear time.

\subsection{Structure, not agent count, sets the price}\label{sec:cost}

How expensive is coherence? The answer depends on how agents interact, not on how many there are.

\paragraph{Exact checking is exponential only in width.} For finite-domain constraints, dynamic programming over a tree decomposition checks global consistency in time roughly $O(nd^{w+1})$ for $n$ variables of domain size $d$ and treewidth $w$ \cite{freuder1990}. For general constraint problems the exponential dependence on width cannot be substantially removed \cite{grohe2007,marx2010} (Appendix~\ref{app:width}). A hundred agents in a pipeline can be cheap to keep coherent; five densely coupled agents can be expensive.

\paragraph{Finding a conflict is cheap; fixing it optimally is not.}

\begin{proposition}[Minimum repair is Max-Cut-hard]\label{prop:z2repair}
\emph{In one sentence: finding a contradiction is cheap, but finding the smallest fix is NP-hard.}

Precisely, for ``same/opposite'' labels on a graph, Corollary~\ref{cor:span} decides consistency in linear time, but finding the fewest labels whose change makes the graph consistent is NP-hard.
\end{proposition}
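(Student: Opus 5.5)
Two claims need proof: the linear-time decision part, which is immediate from Corollary~\ref{cor:span} with the group $\mathbb Z_2$ ("same" $=0$, "opposite" $=1$), and the hardness of minimum repair, which I would obtain by a direct reduction from Max-Cut. The key observation is the following. A labeling $g:E\to\mathbb Z_2$ is consistent exactly when there are frames $h:V\to\mathbb Z_2$ with $g_{ij}=h_i+h_j$ for every edge (Theorem~\ref{thm:cycles}). So repairing the labeling means choosing a frame assignment $h$ and changing exactly the edges where $g_{ij}\neq h_i+h_j$.

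The minimum repair therefore equals
\[
\min_{h:V\to\mathbb Z_2}\bigl|\{ij\in E:\ g_{ij}\neq h_i+h_j\}\bigr|.
\]
This holds in both directions. Any repair yields a consistent labeling, hence some frames $h$, and it must change at least the edges that disagree with $h$. Conversely, for any $h$, changing exactly the disagreeing edges produces a consistent labeling.

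Now take a Max-Cut instance $G=(V,E)$ and label every edge "opposite" ($g_{ij}=1$). An assignment $h$ is a bipartition of $V$. An edge disagrees with $h$ exactly when $h_i=h_j$, that is, when the edge is uncut. The minimum repair is then $|E|-\mathrm{maxcut}(G)$. Computing the minimum repair thus computes the maximum cut, and deciding "is there a repair of size at most $r$" is equivalent to deciding "is there a cut of size at least $|E|-r$". The reduction is clearly polynomial, and NP-hardness follows from Karp's result for Max-Cut.

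The one step needing care is the reformulation of repair as an optimization over frames. One must argue that an arbitrary set of label changes can be charged to a frame assignment. I would do this by applying Theorem~\ref{thm:cycles} to the repaired labeling to extract frames, then noting that every unchanged edge agrees with those frames, so the changed set contains the disagreement set. With that in place, the rest is bookkeeping. The same argument can also be phrased as minimum unsatisfied equations in a $\mathbb Z_2$ linear system (Max-2-Lin), if one prefers a reference over the direct reduction.
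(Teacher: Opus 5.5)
Your proof is correct and is essentially the paper's: both write the minimum repair as $\min_{h}\bigl|\{ij\in E: g_{ij}\neq h_i+h_j\}\bigr|$, set every label to ``opposite'' so that this equals $|E|-\mathrm{maxcut}$, and reduce from Max-Cut. One small fix: Karp's Max-Cut \cite{karp1972} is the weighted problem, so your unit-weight reduction should instead cite the NP-completeness of unweighted (Simple) Max-Cut, problem ND16 in \cite{gareyjohnson1979}, as the paper does.
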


With every label set to ``opposite,'' the fewest changes equals the number of edges minus the maximum cut, so the problem contains Max-Cut \cite{karp1972,gareyjohnson1979} (Appendix~\ref{app:syndrome}). The same gap holds for general linear constraints, where minimum repair is syndrome decoding \cite{berlekamp1978}.

\emph{Design rule.} Check before commit rather than repair afterward. Detecting an inconsistency at commit is cheap; deciding afterward which agent's output to change can be the hard part.

\section{From theory to a coherence harness}\label{sec:architecture}

This section turns the results into a runtime, the coherence harness: how it sits next to the model, what it checks at commit, what it remembers, why it needs the structures of the introduction, and how the same results organize the team around it.

\subsection{The harness loop}\label{sec:ai-theory}

An LLM policy is useful because it is flexible, and the same flexibility makes its proposals stochastic and hard to certify end to end. The runtime therefore splits the work. Agent $i$ proposes from its context,
\[
\widetilde a_t\sim\pi_\theta(\cdot\mid\Omega_i(x_t),m_t,\mathcal T_i),
\]
where $m_t$ is its memory and $\mathcal T_i$ its tools. The runtime checks the proposal against its authoritative state $z_t=\phi(x_t)$,
\[
V(z_t,\widetilde a_t)\in\{\operatorname{accept},\operatorname{repair},\operatorname{reject}\},
\]
and only accepted changes produce $x_{t+1}$. The checks can cover identity, authority, freshness, budgets, format compatibility, and required artifacts. The model stays stochastic; the rule for what may change does not have to be. Figure~\ref{fig:architecture} shows the split.

\begin{figure}[t]
\centering
\begin{tikzpicture}[
  font=\small,
  agent/.style={draw, rounded corners=2pt, fill=cohgray!35, minimum width=2.3cm, minimum height=0.75cm, align=center},
  box/.style={draw, rounded corners=3pt, align=left, inner sep=5pt},
  arr/.style={-{Stealth[length=2mm]}, thick},
]
\node[font=\small\bfseries, align=center] at (0,2.2) {Proposal intelligence\\[-2pt]\scriptsize\normalfont stochastic models};
\node[agent] (p) at (0,1.2)  {Planner\\[-2pt]\scriptsize context $\Omega_1$};
\node[agent] (e) at (0,0)    {Executor\\[-2pt]\scriptsize context $\Omega_2$};
\node[agent] (v) at (0,-1.2) {Verifier\\[-2pt]\scriptsize context $\Omega_3$};

\node[box, fill=cohaccent!12, draw=cohaccent, text width=6.9cm] (rt) at (6.3,0) {%
  \textbf{Coherence harness} \footnotesize(authoritative runtime)\\[2pt]
  \footnotesize semantic state $z=\phi(x)$ keeps only what changes validity:\\[1pt]
  \scriptsize
  \begin{tabular}{@{}l@{\ }l@{}}
  $H$ & where things meet (scopes, dependencies)\\
  $\mathcal C$ & what can change, and how changes compose\\
  $\mathcal G$ & what only looks changed (re-encodings)\\
  $\mathcal F$ & whether local pieces glue into one world\\
  $D$ & history that still matters (versions, budget)
  \end{tabular}\\[3pt]
  \footnotesize check $V(z,\tilde a)\in\{\text{accept},\text{repair},\text{reject}\}$};

\node[box, fill=white, text width=2.1cm, align=center] (w) at (12.9,0) {\textbf{World} $x$\\[-1pt]\scriptsize files, tools,\\[-2pt]\scriptsize budgets, data};

\draw[arr] (p.east) -- node[above, font=\scriptsize, sloped] {proposal $\tilde a$} (rt.west |- p.east);
\draw[arr] (e.east) -- node[above, font=\scriptsize] {proposal $\tilde a$} (rt.west |- e.east);
\draw[arr] (v.east) -- node[above, font=\scriptsize, sloped] {proposal $\tilde a$} (rt.west |- v.east);
\draw[arr, cohaccent] (rt.east) -- node[above, font=\scriptsize] {commit} node[below, font=\scriptsize] {accepted only} (w.west);
\draw[arr, black!55] (w.south) -- ++(0,-2.1) -| node[pos=0.25, below, font=\scriptsize] {each agent observes only its local view of the world} (v.south);
\end{tikzpicture}
\caption{The architecture the theory leads to. Stochastic agents propose from their own contexts; the coherence harness keeps only the state that changes validity, checks each proposal, and commits accepted ones to the world. The five rows are the structures $H,\mathcal C,\mathcal G,\mathcal F,D$ of Section~\ref{sec:machinery}; each is needed only when the map it guards (restriction, transition, translation, history) can lose information (Corollary~\ref{cor:layers}).}
\label{fig:architecture}
\end{figure}

The split separates three questions that agent benchmarks usually score at once: \emph{capability} (can the model propose a useful action?), \emph{identifiability} (does the available state determine which actions are valid?), and \emph{commit correctness} (do accepted actions keep the system consistent?). The theory predicts that scaling capability saturates where the bottleneck is one of the other two.

\paragraph{Why bounded models make the separation more valuable.}
Let $\mathcal J_\theta$ be the interfaces that model $\theta$ can actually receive under its context window, bandwidth, privacy, and authority limits, and write
\[
q_\theta(J,x)=\Pr\!\left[\widetilde a\in C(x)\mid F_J(x)\right]
\]
for its single-proposal validity on interface $J$. Two facts hold whatever the model's capability. First, if no interface in $\mathcal J_\theta$ separates two worlds whose valid actions share nothing, no proposal from $\theta$ is guaranteed valid there (Theorem~\ref{thm:aliasing}). Second, commit safety does not depend on $q_\theta$ at all. If the harness's own state $\phi$ is sufficient and it checks proposals against current state atomically, every accepted proposal is valid. A weaker proposer costs rejections and progress, not safety. The harness's state is not bounded by $\theta$'s context, so it can hold what a small model cannot. This is why the separation matters most for bounded models. Compact sufficiency keeps the needed distinctions inside a small context: in Study~X the loop selects 3.5 of about 12 fields. Coherence checks protect those distinctions when no single context can hold them: in Study~IX one full view leaves 0 of 40 conflicts, split views leave 9 of 40, and a network-level check removes all nine. The paper proves these two facts; it does not claim that smaller models always gain more.

\subsection{What the runtime checks at commit}\label{sec:transitions}

\paragraph{Re-expressing state is safe; changing it is not.} A \emph{re-expression} (converting units, translating a schema) leaves the underlying object unchanged; a \emph{state change} (spending budget, editing a design, overwriting a file) does not. If agents only re-express their state and translations agree on overlaps, the result stays globally consistent. If they change state, each change can be locally valid and every overlap can agree while the combination is no valid world, as with the shared budget of Section~\ref{sec:sharedbudget}. So state changes need a commit-time check and re-expressions do not (Proposition~\ref{prop:preserve}, Appendix~\ref{app:preserve}); by induction, a trajectory whose every committed step is a re-expression or was checked at commit stays coherent. This is the line between the groupoid $\mathcal G$ and the category $\mathcal C$.

\paragraph{Keep only the history that changes the future.} Two histories can share one memory state exactly when every continuation is valid after one if and only if it is valid after the other; by the Myhill--Nerode theorem \cite{hopcroft2006}, the smallest exact monitor has one state per such class. For a budget $B$ and $k$ obligations that must be done before \texttt{finish}, the smallest state is the obvious pair, remaining budget and completed obligations: $(B+1)2^k+2$ states, but only $\lceil\log_2(B+1)\rceil+k$ bits plus a terminal flag (Appendix~\ref{app:nerode}). Provenance, such as which version an artifact was derived from, belongs in memory exactly when it changes which futures are valid. A stale derivation is the standard case (Study~III).

\paragraph{Stale reads and atomic commit.} An agent reads state, thinks for several turns, and proposes a change; by then the state may have moved. The classical fix is optimistic concurrency control \cite{kung1981}: record the version of everything the proposal depended on, and at commit atomically check that none of it has changed before applying the change.

\begin{proposition}[Version checks give one-at-a-time semantics]\label{prop:occ}
\emph{In one sentence: if every proposal records what it read and each commit checks and writes in one atomic step, the result is as if the agents had taken turns.}

Precisely, suppose every proposal records every state item it depended on, every write advances the version of the item it writes, and the version check and the writes happen atomically. Then the committed changes are equivalent to running them one at a time in commit order.
\end{proposition}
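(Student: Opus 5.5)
The plan is the classical serializability argument for optimistic concurrency control, phrased in terms of commits.

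First, fix the model. Each accepted proposal $p$ has a read set $R_p$ of items, with the versions $v_p(r)$ it recorded for each $r\in R_p$, and a write set $W_p$. The proposal is a deterministic function from the values it read to the values it writes. "Depended on" is taken to mean exactly this: the writes and the decision to accept are functions of the values in $R_p$ alone. Because check-and-write is atomic, the accepted commits are totally ordered; write them $p_1,p_2,\dots,p_n$ in commit order. Let $S_0$ be the initial state and $S_j$ the actual state right after commit $j$. Define the serial execution by $T_0=S_0$ and $T_j$ as the result of running $p_j$ alone on $T_{j-1}$. The goal is $S_j=T_j$ for all $j$, together with the fact that each $p_j$ in the serial run reads and writes the same values as in the real run.

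Second, argue by induction on $j$. Assume $S_{j-1}=T_{j-1}$. When $p_j$ commits, the atomic check confirms that the current version of every $r\in R_{p_j}$ equals $v_{p_j}(r)$. Every write advances the version of the item it writes, so no committed write touched $r$ between $p_j$'s read and its commit. Hence the value of $r$ in $S_{j-1}$ is the value $p_j$ actually read. In the serial run, $p_j$ reads $T_{j-1}=S_{j-1}$ on $R_{p_j}$, so it sees the same values and produces the same writes. Items outside $W_{p_j}$ are unchanged in both runs, so $S_j=T_j$. The same reasoning shows that any validity check the harness makes at commit sees the same inputs in both executions. The equivalence therefore covers accept and reject decisions as well as final states.

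The main obstacle is the step from "the version number is unchanged" to "the value is unchanged and no intervening commit affected what was read." Two things have to be ruled out. The first is ABA-style reuse of versions; requiring versions to be strictly increasing, not merely changed, does this. The second is dependence on items that were never recorded, such as predicates over absent keys or aggregates like a budget total. I would handle these by having the hypothesis "records every state item it depended on" cover such items explicitly, for example by treating the total or the key range as a versioned item. Proposition~\ref{prop:inconfluent} is then covered: the ledger total becomes an item every spend reads and writes.
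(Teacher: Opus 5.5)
Your proposal is correct and follows the paper's own proof. Both argue by induction over the commit order: a successful atomic version check shows that every item in the read set still holds the value the proposal used, so its tentative writes coincide with those it would compute on the serial-prefix state, and atomic installation yields the next serial state. The paper's appendix version simply assumes ``unchanged version implies unchanged value'' as an explicit hypothesis, where you derive it from strictly increasing versions, and it flags predicate and range dependencies with the same caveat you raise.
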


For an LLM agent the first condition is the hard one. The read set is every authoritative item that entered the model's context and influenced the proposal, not only the items the final tool call touches. A quote the agent read five turns ago belongs to the read set of the order it places now; this is what ordinary $\tau^2$-bench checks miss (Study~V). The model's sampled output is part of the transaction, so the equivalence is to serial execution with the same samples (Appendix~\ref{app:occ}).

\subsection{From sufficiency to coherence: why sheaves, groupoids, and categories}\label{sec:bridge}

The impossibility theorem says only that the runtime must keep every distinction that changes which actions are valid (Proposition~\ref{prop:sound}). In one equation, a state $\phi$ is sufficient when validity can be computed from it:
\[
C(x)=\bar C\bigl(\phi(x)\bigr)\quad\text{for some map }\bar C .
\]
In words: everything needed to judge an action can be read off $\phi$. In a simple system one ledger field does that. The structures appear because in a multi-agent system that state is never in one place and never still. It is split into the agents' views, translated between their formats, and changed by actions, and each of these steps can lose a needed distinction. Each structure is the condition under which one step loses nothing: the sheaf condition for the restriction to views, the groupoid for translation, and the category with its history state for action. Appendix~\ref{app:bridge} states each as a formal result (Propositions~\ref{prop:localsuff} and~\ref{prop:groupoid}, Theorem~\ref{thm:selfupdate}). The structures are therefore not a second theory placed beside the impossibility theorem. Each is that theorem applied at one map.

\begin{center}
\small
\begin{tabular}{>{\raggedright\arraybackslash}p{2.7cm}>{\raggedright\arraybackslash}p{4.6cm}>{\raggedright\arraybackslash}p{4.3cm}>{\raggedright\arraybackslash}p{1.6cm}>{\raggedright\arraybackslash}p{1.6cm}}
\toprule
Map & What goes wrong in an agent system & What the runtime must guarantee & Structure & Studies\\
\midrule
world $\to$ agents' views & two worlds look the same to every agent; or views fit pairwise but match no world & views jointly separate worlds that allow different actions; committed views glue & sheaf ($H$, $\mathcal F$) & I, IV, IX\\
format $\to$ format & a lossy translation merges objects; translations disagree around a loop & translations are reversible and close around every loop & groupoid ($\mathcal G$) & VII--IX\\
state $\to$ next state & the runtime must re-read the world after each step to know what is valid & its state updates itself, and updates compose & category ($\mathcal C$) & IV, V\\
history $\to$ memory & memory forgets what still matters, or keeps what never will & keep history exactly when it changes valid futures & history ($D$) & III, V\\
\bottomrule
\end{tabular}
\normalsize
\end{center}

\paragraph{Splitting state into views needs gluing.} A runtime that reads only the agents' local views can be exact only if the views jointly tell apart every two worlds that allow different actions; this is the impossibility theorem applied to the map from the world to the views. And it commits only states that some real world produces exactly when it rejects every combination of views that matches pairwise but comes from no world. These are the two conditions that define a sheaf, with one weakening that matters in practice: the views need to separate worlds only by which actions they allow, not in every detail (Proposition~\ref{prop:localsuff}). Section~\ref{sec:coherence-math} showed both failures.

\paragraph{Translating between formats needs a groupoid.} A translation cannot restore a distinction its source format already lost: if agent $i$'s format merges two objects, every format translated from it merges them too. When every format is lossless, the translations between them are forced: each can be undone, and composing them around any loop returns the identity. Conversely, any family of reversible translations that closes around every loop comes from lossless encodings of one object (Proposition~\ref{prop:groupoid}). That is exactly a groupoid with consistent loops, and Corollary~\ref{cor:span} checks it.

\paragraph{Acting needs a self-updating state.} A runtime that must re-read the world after every action to know what is now valid is not keeping state; it is polling. For the runtime to update its own state, the state must keep apart any two worlds that some future sequence of actions would tell apart. A budget shows the difference. With one or two units left, the same actions are valid now, but after one spend they are not. ``May spend now'' is exact for one step; the remaining count is the smallest state that updates itself (Theorem~\ref{thm:selfupdate}, the Myhill--Nerode construction applied to valid actions \cite{hopcroft2006,milner1989}). Because the updates compose---updating for $a$ and then for $b$ is updating for $ab$---actions act on the runtime's state as a category. The history state $D$ is the same construction indexed by histories.

\begin{corollary}[Where each structure is needed]\label{cor:layers}
\emph{In one sentence: each structure is needed exactly when its step is present: a gluing check for split views, consistent translations for several formats, and a self-updating state for actions.}

Precisely, consider a runtime that must accept exactly the valid actions and commit only states some world produces. If it accepts arbitrary combinations of local views at commit, it needs the gluing check. If agents use several lossless formats, it needs reversible, loop-consistent translations. If it must update its state under actions without re-reading the world, it needs the self-updating state. Conversely, with one scope there is nothing to glue, with one format there is nothing to translate, and when no action changes the state the self-updating state reduces to the sufficient state of Proposition~\ref{prop:sound}.
\end{corollary}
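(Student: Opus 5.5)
The corollary assembles three earlier results, so I would prove it as three "needed" implications and three "not needed" converses, one pair per step. For each step, the necessity argument reduces to the impossibility theorem (Theorem~\ref{thm:aliasing}), or its validator form (Proposition~\ref{prop:sound}), applied to the relevant map. The formal statements I would invoke are Proposition~\ref{prop:localsuff} (views), Proposition~\ref{prop:groupoid} (translations) and Theorem~\ref{thm:selfupdate} (actions), all of which the paper places in Appendix~\ref{app:bridge}.

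\emph{Gluing.} Assume the runtime accepts arbitrary tuples of local views at commit and must commit only states some world produces. The tuples that arrive can be any element of $\Match(\{U_i\})$. Commit correctness then means rejecting exactly the tuples outside $\rho(X(U))$, and that is the realizability half of the sheaf condition. The examples of Section~\ref{sec:coherence-math} show that no weaker test suffices. On a cycle, Theorem~\ref{thm:bfmy} and Proposition~\ref{prop:nok} give matching tuples with no preimage that pass every pairwise or fixed-size check. Exactness on actions also needs the views to separate worlds with different $C$, by Proposition~\ref{prop:sound}(iii) applied to $\phi=\rho$; this is the separation half in the weakened form of Proposition~\ref{prop:localsuff}. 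For the converse, with a single scope $\rho$ is the identity onto $X(U)$, so both halves hold trivially and nothing needs checking.

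\emph{Translations.} Assume there are several formats, each lossless, meaning each encoding $e_i$ is injective on the distinctions that matter for validity. A translation cannot restore a distinction its source has lost, since any composite $g\circ e_i$ merges whatever $e_i$ merges. The translations are therefore forced to be $g_{ij}=e_j\circ e_i^{-1}$, which are invertible. By Proposition~\ref{prop:groupoid} and the telescoping direction of Theorem~\ref{thm:cycles}, their composite around every loop is the identity. If some loop has nontrivial holonomy, no single object explains all the records. The runtime would then accept descriptions matching no world, which violates commit correctness. For the converse, with one format the groupoid is a single object with only its identity arrow, and the loop condition is vacuous.

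\emph{Actions.} Assume the runtime must update its state under actions without re-reading the world. Then there is a map $\delta(\phi(x),a)=\phi(x\cdot a)$, and iterating it shows that $\phi$ must separate any two worlds that some future action sequence distinguishes by validity. Otherwise Theorem~\ref{thm:aliasing}, applied at the later step, gives two merged worlds with different valid sets, and exactness fails. This is Theorem~\ref{thm:selfupdate}, and functoriality ($\delta_{ab}=\delta_b\circ\delta_a$) gives the category action. For the converse, if no action changes the state, every future equals the present. The Myhill--Nerode-style congruence then collapses to $x\sim_C x'$, which is the sufficient state of Proposition~\ref{prop:sound}.

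The main obstacle is the gluing case. I have to argue that accepting \emph{arbitrary} matching tuples really forces a check equivalent to membership in $\rho(X(U))$, rather than some cheaper proxy. My plan is to show that any commit rule that is both safe and complete on $\Match(\{U_i\})$ has acceptance set exactly $\rho(X(U))$. Calling such a rule ``the gluing check'' is then definitional, and the nontrivial content is only that pairwise tests differ from it on cycles, which Theorem~\ref{thm:bfmy} already supplies.
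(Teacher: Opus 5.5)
Your proof is correct and takes the paper's route: the paper proves this corollary by assembling Proposition~\ref{prop:localsuff}, Proposition~\ref{prop:groupoid}, and Theorem~\ref{thm:selfupdate} from Appendix~\ref{app:bridge}, and its converses are the same degenerate cases you give. One correction to your ``main obstacle'' remark: for committed states, the example showing that matching families can fail to glue is the shared budget (Proposition~\ref{prop:preserve}(iii), Appendix~\ref{app:realize}), whose scopes are disjoint and so contain no cycle at all, whereas the triangle and parity examples behind Theorem~\ref{thm:bfmy} and Proposition~\ref{prop:nok} live in the presheaf of relations and yield no matching family of committed assignments. So the need for the gluing check does not come from cycles, although your definitional argument for it is unaffected.
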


Remove a map and its property drops out; add the map and exact validation requires the property (proofs in Appendix~\ref{app:bridge}). The corollary constrains properties, not machinery: a version counter, dependency engine, or residual solver meets the condition in its own domain and should tie the general runtime there, as in Studies~III and~VI. The structures are the general way to meet the conditions across boundaries where no such mechanism exists. The same pattern, a requirement of sufficient information followed by structures that preserve it, appears in team decision theory, atlases, filtering, and version control (Appendix~\ref{app:bridge}). The theorem does not force this design alone: weakening the goal, querying, restructuring state, or accepting probabilistic success are alternatives with their own costs (Appendix~\ref{app:escapes}).

\paragraph{Sufficiency and coherence are two obligations.} The restriction map $\rho$ from worlds to local views can fail in two ways (Section~\ref{sec:formal}): it can merge worlds that allow different actions (aliasing), or admit views that match on every overlap and come from no world (a gluing failure). The first is a question of \emph{sufficiency}, whether the represented state keeps the distinctions decisions need; the second of \emph{coherence}, whether the represented pieces fit together. Neither implies the other (Proposition~\ref{prop:localsuff}). If every agent records a budget of \$10{,}000 and an instruction no one modeled has cut it to \$5{,}000, every overlap agrees and every gluing check passes, because gluing checks only what was represented. Conversely, in the shared budget of Section~\ref{sec:sharedbudget} every role can see the live count and the combined spends still break it. A harness needs both checks: Proposition~\ref{prop:sound} and the refinement loop of Section~\ref{sec:refine} for sufficiency, the structures of this section for coherence.

\subsection{The framework assembled: \texorpdfstring{$\mathfrak X=(H,\mathcal C,\mathcal G,\mathcal F;D)$}{X = (H, C, G, F; D)}}\label{sec:machinery}

The five components of the introduction are now runtime duties: $H$ records scope, authority, and dependencies; $\mathcal C$ gives each action its preconditions and effects; $\mathcal G$ keeps identity and translations; $\mathcal F$ checks overlaps and realization at commit; and $D$ keeps versions, obligations, and resources. For readers who know category theory: $\mathcal G$ makes the assignment of states to the scopes of $H$ a groupoid-valued presheaf, $\mathcal F$ is its descent condition, and $\mathcal C$ is a category-valued prestack with that groupoid core (Appendix~\ref{app:bridge}); the experiments use only the concrete checks. None of the five reduces to the others. Appendix~\ref{app:separation} gives a countermodel for each: identical topologies with different transitions, identical local operations with different realizability, identical fields that name different objects, and identical values of which only one is still valid. The runtime need not reproduce the model's reasoning or replace domain solvers. It must own the distinctions required by Theorem~\ref{thm:aliasing} and Proposition~\ref{prop:sound}, check global consistency where local checks are not enough, and check state changes at commit.

\subsection{Designing the team: owners, dependencies, and parallel work}\label{sec:team-design}

Multi-agent systems are usually organized by job title, by intuition: a lead oversees, engineers build, pricing waits for the design. The framework turns each such choice into a question about state. For role $i$, let $W_i$ be the authoritative items its proposals write, and $R_i$ every item whose value can change whether its proposals are valid, including the scope of every constraint they must respect. In the beam example, the Cost agent reads the beam length the Design agent writes, and both read the shared budget.
\begin{enumerate}
\item \textbf{Give each role the distinctions its decisions need.} The harness can reject an invalid proposal but cannot make a role propose a valid one. Role $i$ can always propose a valid action exactly when every fiber of its own context has a common valid action (Theorem~\ref{thm:aliasing}), and its information is its own, not the union of all views (Corollary~\ref{cor:roles}). Its sufficient context can be found by refinement (Section~\ref{sec:refine}).
\item \textbf{Give every cross-role constraint an owner.} A constraint whose scope lies in no single role's $W_i$, and whose actions are not invariant-confluent, cannot be protected by independent checks (Proposition~\ref{prop:inconfluent}; \cite{bailis2014}). It needs a scope that contains it and is checked at commit (a ledger) or a split into local constraints (escrow). In this sense \emph{leading} a constraint means holding its whole scope and refusing violating commits, which is the harness's job; a lead agent that receives only summaries is one more $\Omega_i$ (Study~IV).
\item \textbf{Decide parallel or sequential from read and write sets.} If neither role writes what the other reads or writes ($W_i\cap R_j=W_j\cap R_i=W_i\cap W_j=\varnothing$), neither can invalidate the other, and parallel work is free; these are Bernstein's conditions \cite{bernstein1966}, with $R_i$ widened to everything that bears on validity. When Design writes what Cost reads, three designs are correct. Cost can wait for Design, which costs latency. The two can run in parallel with Cost's read versions rechecked at commit (Proposition~\ref{prop:occ}), which costs rework when they conflict. Or the dependency can be restructured: Cost quotes a named design version whose provenance $D$ records (Study~III), or escrow shrinks each role's budget read to its own share. ``Pricing waits for engineering'' is one point on that trade-off.
\item \textbf{Shape who must reconcile with whom.} If the overlap structure $H$ has a join tree, checking each handoff is enough (Theorem~\ref{thm:bfmy}); with cycles, certify the cycle space at $\beta_1$ checks (Corollary~\ref{cor:span}) or let a hub own the shared fields, which turns $H$ into a star. This is why hub-and-spoke orchestration often works, and it says what the hub must hold. Since exact checking grows exponentially with the width of $H$ (Section~\ref{sec:cost}), the target is low width, not few agents.
\item \textbf{Make translations between role vocabularies explicit.} Where two roles encode one object differently, such as engineering units and pricing SKUs, the handoff carries an explicit reversible translation whose loops are certified (Studies~VII--IX).
\end{enumerate}
Each step matters only where its condition is present, and none says which role does the work best, which is a matter of capability. What they replace is guesswork: ``should the tech lead own this?'' becomes ``which constraint spans these roles, and who holds its scope?'' Individual steps have direct evidence in Studies~III--V and VII--IX; testing the procedure as a whole is an open experiment (Section~\ref{sec:next-evidence}).

\section{Empirical program}\label{sec:empirical}
\subsection{Overview: ten studies, three kinds of evidence}\label{sec:theory-exp}

Each study targets one claim of the theory. The agent studies and the real-data study could have come out the other way; the deterministic studies confirm that a mechanism behaves as proved.

\paragraph{One model, held fixed.} All agent studies use one LLM, \texttt{gpt-6-sol}, with the reasoning effort stated in each study. The impossibility theorem binds every policy however capable; each causal study holds the model fixed and changes only what the agents see or what the runtime enforces; and Study~I shows the model solves every task whose deciding event it can see. So every comparison measures the state, not the model. The hidden-fact result also holds on two models from a second provider (Section~\ref{sec:aliasprobe}). Every agent panel was frozen before its first scored call and ran under a spending cap fixed in its preregistration.

\begin{center}
\small
\begin{tabular}{p{2.9cm}p{3.6cm}p{3.7cm}p{4.3cm}}
\toprule
Study & Claim tested & Evidence & Result\\
\midrule
I.\ Aliasing probe & more reasoning cannot recover a hidden distinction (Thms.~\ref{thm:aliasing}, \ref{thm:randombits}, \ref{thm:adaptive}) & agent, 40 paired tasks, frozen design & observed 12--17/40 across aliased arms; none significantly above $1/3$; one authoritative sentence gives 40/40\\
II.\ TeamBench roles & role decomposition can help under information asymmetry & reanalysis of released runs, 164 matched cells & removing Planner or Verifier costs 11--14 pass-rate points; team vs.\ fully informed solo not significant; descriptive, not causal\\
III.\ Stale derivations & current values alias fresh and stale state; a dependency engine that keeps provenance equals the runtime (Thm.~\ref{thm:selfupdate}, Prediction~4) & deterministic synthetic pipeline, 8 seeded cases & unsafe commits: local checks 8/8; dependency closure and runtime both 0/8\\
IV.\ TeamBench budget & a shared invariant needs ownership; escrow can restructure it (Prop.~\ref{prop:inconfluent}) & agent, three panels: 27 + 60 + 15 runs & violations 5/5 ordinary, 4/5 visible/query, 0/5 ledger or escrow\\
V.\ $\tau^2$-bench Telecom & distributed state is common; ordinary checks tie where they own the invariant, and stale reads and silent reverts need the runtime & structural audit, 114 tasks; agent, preregistered, 222 episodes & 97/114 initialize both actors' state; predicted-tie tasks: both guards 0 violations (unguarded 14/34); other tasks: conventional 24/25, coherence 0/25; silent-revert reward 0.07 vs 1.00\\
VI.\ Sobieski & classical solvers equal the runtime on their home ground (Prediction~4) & deterministic, 86 + 40 cases & ties with Gauss--Seidel and version-aware restart\\
VII.\ Cycle suite & a spanning forest certifies representation coherence (Cor.~\ref{cor:span}) & deterministic, 1{,}800 instances & forest 100\% at $\beta_1$ checks; local, pairwise, canonical-ID 0\%; short loops $\le$66\%; random loops 94.5\%\\
VIII.\ OAEI networks & pairwise-invisible identity collisions occur under a unique-name diagnostic & public matcher outputs, preregistered replication & 23 of 45 networks, 2018--2024; 5 of 7 systems in 2025\\
IX.\ Split views & split views need a network-level check & agent, frozen, 60 tasks & split views 9/40 hidden collisions; network check 0/40, but repair lowers mean F1 to 0.774\\
X.\ Sufficiency upkeep & refinement finds and keeps a sufficient interface as rules change (Props.~\ref{prop:sound}, \ref{prop:refine}) & deterministic, frozen, 200 instances with two or three rule changes & loop: 0 invalid accepts, 0 avoidable blocks, 28\% of all-field cost, 1.11$\times$ the exact minimum; fixed interface: 3{,}961 invalid accepts or 2{,}618 valid actions blocked\\
\bottomrule
\end{tabular}
\normalsize
\end{center}

\paragraph{How to read the evidence.} The runtime wins where the deciding state is missing or unowned (Studies I, IV, V, IX) and ties where a conventional mechanism already owns it (Studies III, VI, and half of V). Both halves matter: a theory that only predicted wins could not be wrong. Studies I, IV, IX, and the Study~V revision panel are the causal agent experiments; Study VIII measures prevalence in real data; Studies III, VI, VII, and X are deterministic checks; Study II and the Study~V audit are descriptive.

\subsection{Study I: more reasoning versus one authoritative distinction}\label{sec:aliasprobe}

Theorem~\ref{thm:randombits} predicts that on an aliased fiber with disjoint valid sets, no policy that reasons, samples, or deliberates on the same observation succeeds with probability above $1/k$; one authoritative distinction resolves the task; and a tool helps exactly when it splits the fiber (Theorem~\ref{thm:adaptive}). We built a benchmark in which the aliasing is exact. Each task is a 60-event revision log in which three teams commit, revert, or re-apply changes to one shared parameter; the agent must name the current head, and the commit server rejects any other parent. The final event is one of three operations that leave three different heads. In the \emph{identifiable} version it is visible; in the \emph{aliased} version it is redacted, so the prompt is identical across all three truths and chance is $1/3$. Forty paired tasks were frozen before any scored call.

\begin{center}
\small
\begin{tabular}{lrr}
\toprule
Arm & Identifiable & Aliased\\
\midrule
Low / medium / high reasoning effort & 40/40 each & 12 / 17 / 14 of 40\\
Planner + verifier & 40/40 & 13/40\\
Five-sample majority vote & 40/40 & 13/39\\
Query tool that cannot reveal the event & 40/40 & 13/40\\
Query tool that can read any event & 40/40 & \textbf{40/40}\\
One authoritative fact (``the current head is $X$'') & 40/40 & \textbf{40/40}\\
\bottomrule
\end{tabular}
\normalsize
\end{center}

Every arm solves every identifiable task. On aliased tasks no arm that keeps the same observation significantly exceeds $1/3$ (best one-sided binomial $p=0.14$), whether it reasons longer, deliberates in two roles, or votes over five samples; this does not establish equivalence to chance, which comes from the construction and the theorem. Stated confidence drops from 1.0 to 0.35--0.45. One sentence naming the head gives 40/40 (23 tasks gained over medium effort, none lost; McNemar $p=1.2\times10^{-7}$), as does a tool that can read the redacted event, while a matched tool returning only visible notes stays at 13/40 ($p=7.5\times10^{-9}$). All four preregistered hypotheses hold; one majority-vote cell failed on a transient API error, which cannot change the outcome. This is the paper's central claim in its cleanest form: more reasoning, more roles, and more samples on the same observation leave accuracy at the information-theoretic level, while one authoritative distinction, supplied directly or through a query that splits the fiber, solves every task.

\paragraph{Replication on a second provider.} A companion study \cite{heng2026symmetry} drew 80 fresh tasks from the same generator and ran them on Claude Opus~5.5 and Claude Haiku~5.5. Stating the head solved 80 of 80 on both models. Without it, no arm reached the chance level of 26.7 of 80: Opus scored 13--21 and Haiku 0--6. Haiku did not guess at random. In 76 of 80 tasks it named the head as it stood before the hidden event, assuming that event changed nothing. A weaker model fills the missing distinction with an assumption, and here the assumption is always wrong.

\subsection{Study II: TeamBench role structure}\label{sec:teambench-roles}

TeamBench separates Planner, Executor, and Verifier permissions and grades artifacts deterministically \cite{teambench2026}. We reanalyzed its released runs (six models, 28 tasks; 164 model--task cells matched across five conditions); no new runs are involved. Removing the Planner lowers pass rate by 11--14 points and removing the Verifier by 11--12 (sign tests $p=0.003$ and $p=0.0002$), and the full team gains 13.4 points over a single agent that lacks the specification. The default grader's 35.4\% versus 15.9\% gap between the team and a fully informed single agent is mostly an attestation effect: single agents often never write the Verifier's sign-off. Counting attestation-only failures as passes, as TeamBench's leaderboard does \cite{teambench2026}, leaves 36.0\% versus 31.1\% (not significant, $p=0.18$), in line with TeamBench's own finding that its full team is not significantly better than a solo agent on average \cite{teambench2026}. Roles carry weight when the specification is split, but the panel varies structure together with inference budget, so it is descriptive; Study IV isolates the state hypothesis.

\subsection{Study III: stale derivations, where history is the missing state}\label{sec:stale}

Two states with identical current values can differ in whether they are still valid, so a check that sees only current values is aliased; the distinction lives in provenance, part of $D$ (Theorem~\ref{thm:selfupdate}). A conventional mechanism that tracks provenance should match the runtime (Prediction~4). In a deterministic pipeline we built, eight sources each feed three derived artifacts that record the source revision they came from. Between planning and commit, one source is reissued as a new revision with the same value, so three artifacts cite a superseded revision while every value and every local check stays correct. Local checks and a changed-source closure computed at planning time commit unsafely in 8 of 8 cases; a dependency-closure guard and the coherence harness, which both reread provenance at commit, commit unsafely in 0 of 8. The first failure is aliasing; the second is the stale-read condition of Proposition~\ref{prop:occ}; the tie between the last two is the prediction. The instrument confirms the mechanism, not a natural rate; Study~V measures the agent-level counterpart.

\subsection{Study IV: who owns a shared budget on TeamBench}\label{sec:fixed-budget}

Proposition~\ref{prop:inconfluent} predicts that a shared budget cannot be protected by the agents themselves, however well informed, and that two remedies work: an owner that refuses the violating commit, or per-role quotas. Three panels, each frozen before scoring, hold \texttt{gpt-6-sol} at medium effort fixed on three hard TeamBench tasks: a budgeted workflow (LH2), specification-to-tests (TEST1), and leader election (DIST5, a control).

\paragraph{Panel 1: a guard on the wrong variable.} In 27 runs a ledger guard made no difference (strict passes 1/9 for both ordinary and guarded teams), because guard, task, and grader counted different things: the generator drew limits from 15 to 30 while the grader fixed 20, and the guard counted an internal ledger the task already self-limited. A post-scoring audit found the real failure: teams issued 22--67 shell commands against a limit of 20. A guard constrains the state it owns and nothing else (Proposition~\ref{prop:sound}).

\paragraph{Panel 2: the aligned intervention.} The budget was pinned at 20 calls to TeamBench's \texttt{run} tool summed across the team; the guard refused a call once the total reached 20; the evaluator counted the same unit; and the attestation gate required a \emph{pass} verdict plus a successful rerun of the visible check. A \texttt{visible\_state} arm exposed the same live state without enforcing it. Model, prompts, tools, caps, tasks, and graders were fixed across team arms; the solo arm is a non-causal reference. All 60 runs completed.

\begin{center}
\small
\begin{tabular}{lrrrr}
\toprule
Arm (15 runs) & Strict & Mean partial & LH2 over 20 & TEST1 bad attestation\\
\midrule
Ordinary team & 0/15 & 0.820 & 5/5 & 5/5\\
Budget-matched solo & 0/15 & 0.873 & 0/5 & 5/5\\
\texttt{visible\_state} & 0/15 & 0.860 & 4/5 & 3/5\\
\texttt{coherent\_full} & 2/15 & 0.865 & \textbf{0/5} & \textbf{2/5}\\
\bottomrule
\end{tabular}
\normalsize
\end{center}

Every enforced LH2 run stopped at exactly 20 calls, refusing 1--32 surplus attempts. Ordinary teams issued 57--72; showing the live count lowered the median from 70 to 29 but still left 4/5 violations. Enforcement removes violations against both the ordinary team (5/5 to 0/5, one-sided Fisher $p=0.004$) and the visible count (4/5 to 0/5, $p=0.024$). Progress did not collapse: on LH2 \texttt{coherent\_full} scored 0.874 mean partial versus 0.860, on TEST1 0.820 versus 0.740, and on the DIST5 control 0.900 versus 0.860.

\paragraph{Panel 3: other ways to protect the budget.} On LH2 with the same model, seeds, and grader, a live counter without refusal still overspent in 4/5 runs; conservative stopping (refuse once remaining budget would fall below 5) prevented overspend at the cost of unused slack; and fixed per-role escrow (executor 14, verifier 6) reached 0/5 violations with mean partial 0.960 and 4/5 strict passes, against 0.874 and 2/5 for the ledger. That progress gap is descriptive, since the reused ledger arm kept an attestation gate the new arms turned off. For this additive budget, splitting it into quotas is enough.

\paragraph{What the panels show.} Information is not ownership: agents given the exact live count, in a team that already had a Planner and a Verifier, still overspent (Proposition~\ref{prop:inconfluent}, Corollary~\ref{cor:roles}). Ownership works in either form, refusal at commit or escrow, without lowering observed progress in these runs. And a guard is only as good as the state it owns: before asking whether a guard works, check that it guards the variable the task is scored on (Panel~1).

\subsection{Study V: \texorpdfstring{$\tau^2$}{tau2}-bench Telecom, structural audit and revision panel}\label{sec:tau2}

In $\tau^2$-bench Telecom the user controls the device, the assistant controls the carrier, and the environment grades the joint result \cite{tau2bench2025}. In the pinned v0.2.0 corpus, 97 of 114 tasks initialize both sides' state, 71 declare golden actions for both actors, and 59 assert postconditions in both environments: dual-control state is central to the benchmark.

\paragraph{Revision panel.} We injected three kinds of failure into public Telecom tasks: a \emph{stale read}, where a carrier fact the assistant read changes before it writes; an \emph{authority change}, where a valid action becomes invalid mid-conversation; and a \emph{silent revert}, where a committed carrier change is undone. There are 12 injected tasks on 7 base tasks, plus the 7 bases as controls. The agent, user simulator, loop, and evaluator are unmodified (\texttt{gpt-6-sol}, medium effort for the agent, low for the user). Three arms differ only in how write calls are handled: \emph{unguarded}; \emph{conventional}, which checks three policy preconditions on the current carrier state; and \emph{coherence}, which adds read-set validation (Proposition~\ref{prop:occ}) and a monitor that sends one notice if a committed postcondition stops holding. The frozen prediction is a split: where the change breaks a current precondition, conventional and coherence tie; where a read went stale or a committed change stopped holding, only the coherence arm can see it. All 222 episodes completed.

\begin{center}
\small
\begin{tabular}{lcccccc}
\toprule
& \multicolumn{3}{c}{Target violations / exposed} & \multicolumn{3}{c}{$\tau^2$ reward}\\
\cmidrule(lr){2-4}\cmidrule(lr){5-7}
Tasks & Unguarded & Conventional & Coherence & Unguarded & Conventional & Coherence\\
\midrule
Predicted tie (7 tasks) & 14/34 & 0/35 & 0/34 & 0.63 & 0.86 & 0.86\\
Predicted coherence (5 tasks) & 25/25 & 24/25 & \textbf{0/25} & 0.28 & 0.44 & \textbf{1.00}\\
\quad stale read (S3, S4) & 10/10 & 10/10 & 0/10 & 0.70 & 1.00 & 1.00\\
\quad silent revert (P2--P4) & 15/15 & 14/15 & 0/15 & 0.00 & 0.07 & 1.00\\
Uninjected controls (7 bases) & --- & --- & --- & 0.93 & 1.00 & 1.00\\
\bottomrule
\end{tabular}
\normalsize
\end{center}

The split is confirmed. Where an ordinary check owns the invariant, both guarded arms stop every violating write with the same reward. Under a stale read the conventional arm committed on the old figure in 10 of 10 episodes, every write valid at call time, and the agent never re-read on its own; the $\tau^2$ grader does not penalize these commits, so the benchmark does not see the violation. After a silent revert the unguarded and conventional arms scored 0.00 and 0.07, believing the job done, while the coherence arm scored 1.00 in 15 of 15: it sent one notice and the agent redid the change, which nothing in the design forced. What decides this result is ownership of the committed postcondition, not this implementation: a reconciliation loop that owns it would tie the harness, as Prediction~4 says. No correct action was rejected in any control. Three predicted-tie tasks fell short of their design and are reported as run: in two (S1, A3) the agent re-read the changed fact itself, so no arm violated, and in one (P1) the frozen success criterion was too narrow, so every arm scored 0. The panel injects the failures, so it measures what the mechanism does when they occur, not how often they occur.

\subsection{Study VI: Sobieski, where a classical solver already owns the state}\label{sec:sobieski}

Prediction~4 says that where a conventional mechanism already owns the relevant state, the runtime should tie it: no better, no worse. The Sobieski supersonic business jet is a standard multidisciplinary design benchmark with exact public equations \cite{gemseo2026problem,gemseo2026tutorial,sobieski1998}. Four deterministic disciplines (structure, aerodynamics, propulsion, mission) play the agents, and three of them feed each other in a cycle. The disciplines describe one aircraft only when every shared value equals what its consumer assumed, which is the gluing condition $\mathcal F$.

On 86 cases, stopping after a fixed one or three passes over the disciplines never reached agreement: every run committed an aircraft whose structure was sized for a lift its aerodynamics no longer produced, with range off by about 0.5\% after one pass. Classical Gauss--Seidel iteration to convergence and the coherence guard both reached agreement in all 86 cases with identical work, about nine times the cost of a single pass; that is the price of being right, because a converged residual is $\mathcal F$ in numerical form. On 40 cases in which the design was revised partway through a pass, keeping the mixed result or iterating two more passes failed every case, while the runtime's provenance guard and a conventional version-aware restart both succeeded in all 40 with identical work, because the version number is $D$ in its simplest form. The failure the theory describes occurs in classical engineering, and the general runtime reduces exactly to the classical tools where they own the state (Appendix~\ref{app:sobieski}).

\subsection{Study VII: checking that translations agree around every loop}\label{sec:cycle-suite}

Section~\ref{sec:representation} claims that a translation error can be invisible to every check of one agent or one edge, including canonical IDs; that short loops are not enough; and that all loops can be checked with $\beta_1$ comparisons along a spanning forest. The instances are sign-synchronization problems on twenty graphs (rings, complete graphs, grids, the Petersen graph, random graphs; $\beta_1$ from 1 to 25), 1{,}800 in all and frozen before scoring. The primary corruption flips one edge label on a loop, leaving every node and edge locally valid; control corruptions are visible to simpler validators.

\begin{center}
\small
\begin{tabular}{lrr}
\toprule
Validator & Flipped-edge errors caught & Mean loop checks (coherent)\\
\midrule
Local / pairwise / canonical-ID (quotient-first) & 0\% & 0\\
Bounded cycles ($k=3/4/6$) & 18.3\% / 43.8\% / 66.2\% & 1.1 / 4.0 / 14.6\\
Sampled cycles (budget $\beta_1$) & 94.5\% & 4.2\\
Spanning forest / groupoid descent & \textbf{100\%} & 4.2 / 18.5\\
\bottomrule
\end{tabular}
\normalsize
\end{center}

No validator raised a false alarm. Checks that never look at a loop catch nothing, including canonical IDs, the common engineering shortcut, though each catches the control it can see. A bounded-cycle check catches an error exactly when it lies on a loop of length at most $k$: raising $k$ from 3 to 6 multiplies the checks thirteenfold and still misses a third, the executable form of Proposition~\ref{prop:nok}. Random loops at the forest's budget catch 94.5\% overall but only 76.7\% on the hardest graph, with no indication of what was missed. The spanning forest catches all 600 errors with exactly $\beta_1$ checks, and an independent union-find implementation agrees on every instance. It is the only validator here that is both complete and cheap: one comparison per independent loop, a number set by the graph's shape rather than by its exponentially many loops. The errors are planted, so the zeros confirm the theorem rather than estimate a rate; Study~VIII supplies the rate.

\subsection{Study VIII: holonomy in OAEI Conference alignment networks}\label{sec:oaei}

Study~VII planted the error; Study~VIII asks how often it occurs in real systems. A chain of ``same as'' links that does not close around a loop (nontrivial holonomy) is known in data integration: \texttt{owl:sameAs} chains can join two identifiers of one dataset \cite{demelo2013}, and alignments of three ontologies can be pairwise consistent but globally inconsistent \cite{kutz2010whispers}. We measure it systematically under a preregistered design. On the OAEI Conference track \cite{oaei2025conference}, matching systems produce pairwise equivalence alignments among seven OntoFarm ontologies \cite{zamazal2017ontofarm}. We flag a chain of ``same as'' links when it makes two distinct entities of the \emph{same} ontology equivalent, although every individual alignment looks one-to-one (an explicit \emph{unique-name diagnostic}). This is a structural warning, not an OWL contradiction, and its relation to the curated answers is measured, not assumed.

In 2025, five of seven systems' networks contain collisions no pairwise check can see (50 pairs with witness chains of length at least 3), and correspondences on those chains are absent from the curated reference \texttt{ra1} at 41.7\% versus 27.0\% elsewhere (one-sided Fisher $p=2.1\times10^{-4}$, optimistic because chains overlap). A triangle check catches 78\% of them. Typical cases merge a class with its subclass, such as $\texttt{sigkdd\#Author}\equiv\texttt{sigkdd\#Author\_of\_paper}$ in five systems. A second frozen design replicated this on every distinct complete network from 2018 to 2024: 23 of 45 contain a pairwise-invisible collision (95\% CI 37--65\%), in every one of the seven years; chain correspondences are more often absent from \texttt{ra1} in 21 networks versus 2 (sign test $p=3.3\times10^{-5}$); and collision density correlates with disagreement with the reference (Spearman $-0.63$), a candidate reference-free quality signal. Absence from the reference only bounds error, because the reference is incomplete.

\subsection{Study IX: agents curating mappings, with full and split views}\label{sec:split-view}

We built 40 conflict tasks from real OAEI components containing a pairwise-invisible conflict, plus 20 conflict-free controls, and froze two experiments before any scored call. In the first, one agent sees every candidate correspondence and is told the transitivity and unique-name rules. It resolved every conflict itself (0/40 left), so no runtime check ever fired, and it beat a non-LLM weakest-link repair \cite{demelo2013} on 26 of 40 tasks versus 10 ($p=0.011$). With a full view, a strong model performs the global check. In the second, each ontology pair is curated by its own agent call that sees only that pair, and the results are merged, as the systems of Study~VIII work.

\begin{center}
\small
\begin{tabular}{lrrr}
\toprule
Arm (40 conflict tasks) & Hidden conflict left & Mean F1 & Tasks with intervention\\
\midrule
Full view, one agent & 0/40 & 0.803 & 0\\
Split views, no check & 9/40 (22.5\%) & 0.796 & 0\\
Split views, pairwise check & 9/40 (22.5\%) & 0.796 & 0\\
Split views, network check & \textbf{0/40} & 0.774 & 9\\
\bottomrule
\end{tabular}
\normalsize
\end{center}

With the same model and candidates, split views leave hidden collisions on 9 of 40 tasks (95\% CI 12--38\%). The pairwise check never fires, because these collisions never appear inside one pair; the network check removes all nine (McNemar one-sided $p=0.002$) and never fires on the clean tasks. Detection and repair separate, as Proposition~\ref{prop:z2repair} anticipates: the runtime sent each flagged chain to every pair agent on it, each dropped its own link, and 47 correspondences were removed to resolve 9 conflicts, lowering mean F1 from 0.796 to 0.774 although one well-chosen drop per conflict was available. Detection is exact; repair routing is open (Section~\ref{sec:next-evidence}).

\subsection{Study X: keeping a sufficient context as the rules change}\label{sec:sufficiency-study}

Section~\ref{sec:refine} claims that a runtime can find a sufficient interface without searching for the smallest one, keep it while values change, and must recertify it when rules change; Proposition~\ref{prop:sound} predicts that an interface which has stopped being sufficient either accepts invalid actions or blocks valid ones. Like Study~VII, this study is deterministic, model-free, and frozen before scoring. Each of 200 instances is a project with a shared purchasing budget: spent amount, cap, ordinary and special approval, vendor clearance, and three to five distractor fields. A harness can expose about twelve candidate fields, including three derived ones, each costing 1 to 4 as a proxy for context size. Under v1 a purchase needs ordinary approval and enough budget; from step 12, v2 adds special approval for hazmat purchases; from step 24, v3 adds vendor clearance above a threshold; in 50 instances, v4 from step 32 adds a procurement freeze that no field reports. Each instance runs 40 steps with one random value change per step, and every action is checked at every step: 40{,}000 action-steps, 11{,}161 valid. Every validator accepts an action only if it is valid in all modeled states that look the same through its fields.

\begin{center}
\small
\begin{tabular}{lrrr}
\toprule
Interface & Field cost (share of all fields) & Invalid accepts & Valid actions blocked\\
\midrule
All fields & 100\% & 0 & 0\\
Fixed v1 fields, rules not updated & 19.5\% & 3{,}961 & 0\\
Fixed v1 fields, rules updated & 19.5\% & 0 & 2{,}618\\
Recertified, values not refreshed & 28.0\% & 5{,}244 & 5{,}580\\
Invalidate, fall back to all fields & 75.9\% & 0 & 0\\
\textbf{Certificate loop} & \textbf{28.0\%} & \textbf{0} & \textbf{0}\\
Greedy Set Cover (reference) & 33.6\% & 0 & 0\\
Exact minimum (reference) & 25.5\% & 0 & 0\\
\bottomrule
\end{tabular}
\normalsize
\end{center}

Blocked actions are counted only where exposing every field would have allowed them; the 169 valid actions under the freeze rule that no field can certify are excluded.

\paragraph{What fails.} A context that was sufficient fails in one of two ways once the rules change. Keeping its old rules accepts invalid actions; updating the rules but not the fields blocks valid ones. Each happened in 185 of the 200 instances, and every rule change added a needed field. These are the two faces of Proposition~\ref{prop:sound}: an insufficient context is either unsafe or blocking. Refreshing values and recertifying are different jobs. Recertifying without re-reading values failed both ways, while the 7{,}800 value changes never required a new certificate and each of the 450 rule changes required one.

\paragraph{What the loop achieves.} Sufficiency can be found and kept without solving the NP-hard minimum. The loop accepted nothing invalid and blocked nothing that exposing every field would have allowed, and an independent checker confirmed all 650 of its certificates. On the 600 certified rule versions it chose 3.5 of about 12 fields, matched the exact minimum on 357, and cost 1.11 times the minimum on average, against 1.37 for greedy Set Cover. A certification took 6.4 sufficiency checks and 2.6\,ms at the median; starting from the previous context, each rule change took one refinement round instead of 4.6. Under the freeze rule the loop reported \emph{unresolved} on all 50 instances instead of certifying, and no context can safely approve a purchase there: only a new observation can help. The rule for a harness builder is therefore simple: certify against the rules in force, recertify from the old context when they change, and treat \emph{unresolved} as a request for new information, never as a pass.

\paragraph{Limits.} The 28.0\% includes the freeze steps, where the frozen loop kept 9.2 fields, about half of them distractors; over certified versions only, a breakdown made after scoring gives 25.4\% for the loop and 22.7\% for the exact minimum. The model is small enough to enumerate, so the loop's zeros are guaranteed by its exact checker; the informative numbers are cost, gap to the minimum, and effort. The rules and their changes are part of the model, field cost is a proxy for context size, and agents' proposals are not tested; that is the next experiment.

\subsection{Falsifiable predictions}\label{sec:predictions}

A theory of coherence earns trust by predicting when extra structure should \emph{not} help. Predictions 1--3 are Corollary~\ref{cor:layers} with the corresponding map absent.
\begin{enumerate}
\item \textbf{No distributed state, little value from descent.} One actor with one authoritative state and locally validated actions gains little from local-to-global machinery.
\item \textbf{No representation multiplicity, little value from $\mathcal G$.} One canonical encoding gains nothing from translation machinery; several reversible views need explicit loop checks (Studies VII--IX, including both sides in IX).
\item \textbf{No revisions, little value from $D$.} A one-shot task with immutable inputs gains little from provenance and invalidation.
\item \textbf{A conventional mechanism that owns the state matches the runtime.} Dependency closure (III), Gauss--Seidel and version restart (VI), escrow for an additive budget (IV), and current-state checks where the change breaks a precondition (V) all tie.
\item \textbf{Role decomposition without state ownership can fail.} More planners, verifiers, or messages do not fix invariants no component enforces at commit (Studies I, IV).
\item \textbf{Guards remove targeted failures without a stronger model.} With model and budget fixed, an owning guard removes its invariant's failures specifically (IV: 5/5 to 0/5; V: 24/25 to 0/25), with no false rejections on controls.
\end{enumerate}
Every prediction tested so far has held, including the ones in which more structure should do nothing. One lesson came from a failed panel rather than a prediction: a guard on a nearby but non-identical variable leaves the scored failure intact (Study IV, Panel 1). A guard protects only the state it owns.

\subsection{Scope and next experiments}\label{sec:next-evidence}

The agent panels are modest: five seeds per arm on three TeamBench tasks, 12 injected $\tau^2$ tasks, and 40 tasks per arm in the aliasing and split-view studies. Studies IV, V, and IX use one model; the hidden-fact result of Study I also holds on two models from a second provider. The aliasing tasks are constructed and the $\tau^2$ failures are injected, so Studies I and V measure what happens when these failures occur; Study VIII measures how often one of them occurs in public data. The next experiments extend each of these.
\begin{itemize}
\item \textbf{Natural rates on $\tau^2$-bench.} Study~V measures what the runtime does when stale reads and silent reverts occur, not how often they occur; uninjected frequency on a larger model panel is next.
\item \textbf{Broader $\mathcal G$ evidence.} Repair routing (single arbiter, weakest link, send-to-all) so that the runtime keeps accuracy as well as coherence, and a metamorphic suite of semantics-preserving transforms, since Study IX found the agent less stable under renumbering and reformatting than under resampling.
\item \textbf{Team design from state structure.} Hold model and budget fixed and compare a common-sense design, the design of Section~\ref{sec:team-design}, and a sequential baseline; the prediction is that the derived design matches the sequential one on violations and beats the common-sense one where it leaves a constraint unowned.
\item \textbf{Sufficiency upkeep with agents.} LLM agents plan from the certified context and the runtime validates their proposals, separating what an agent needs to propose useful actions from what the validator needs to stay exact, and testing the per-proposal certificate of Section~\ref{sec:refine}.
\end{itemize}

\section{Relation to prior work}\label{sec:prior}

\paragraph{Agents and benchmarks.} ReAct and Toolformer established the pattern of interleaving model reasoning with tools \cite{yao2023react,schick2023toolformer}, and benchmarks such as WebArena and SWE-bench moved evaluation into persistent interactive environments \cite{webarena2023,swebench2023}. TeamBench enforces role separation \cite{teambench2026}, $\tau^2$-bench distributes control between user and assistant \cite{tau2bench2025}, and BixBench3 evaluates long computational-science trajectories \cite{bixbench3}; these are the environments in which distributed state becomes visible. Failure taxonomies of multi-agent systems report inter-agent misalignment among the most common failures \cite{cemri2025mast}, and practitioners pursue reliability through context engineering, memory, and harnesses. This paper gives those failures an exact condition and those practices a specification.

\paragraph{Mathematical ingredients.} Acyclic database schemes characterize when local relational consistency implies global consistency \cite{beeri1983acyclic}; bounded-width constraint systems give a tractability parameter \cite{freuder1990}; optimistic concurrency control gives serializable commit \cite{kung1981}; Myhill--Nerode gives minimal state for future behavior \cite{hopcroft2006}; group synchronization relates global frames to cycle consistency \cite{lerman2022cycle}; counterexample-guided abstraction refinement comes from model checking \cite{clarke2000cegar}; and Set Cover and syndrome decoding supply the hardness of minimum instrumentation and repair \cite{karp1972,berlekamp1978}. Coordination avoidance and escrow \cite{bailis2014,oneil1986} and team decision theory \cite{hochu1972} address related design questions (Appendices~\ref{app:bridge} and~\ref{app:escapes}). In data integration, identity constraints in linked data and inconsistency in connected alignments \cite{demelo2013,kutz2010whispers} study instances of the representation obstruction that Studies VIII and IX measure.

\paragraph{Compositional software engineering.} Goguen gave concurrent objects a sheaf semantics \cite{goguen1992sheaf}, and view merging computes the integration of partial perspectives as a colimit \cite{sabetzadeh2006}. Lenses synchronize pairs of views \cite{foster2007lenses}, and multi-model consistency management observes that not every multi-ary relation factors into binary ones \cite{stunkel2021comprehensive}, the pairwise-invisible obstruction of Studies VIII and IX. Read against $\mathfrak X$, sheaf semantics, view merging, and multi-model consistency sit in $\mathcal F$, and lenses in $\mathcal G$. None of this work asks whether the views contain enough information for any valid action, which minimal state must be carried forward, or when a stochastic proposer may commit. This paper answers all three.

\paragraph{Knowledge and decentralized decisions.} Coordinated action requires common knowledge, unattainable without guaranteed communication \cite{halpernmoses1990}; FLP and CAP bound consensus and replicated state \cite{flp1985,gilbertlynch2002}; team decision theory makes the information structure itself the object of study \cite{marschakradner1972,hochu1972}; and decentralized decision problems are NP-complete in simple static forms and NEXP-complete as decentralized POMDPs \cite{tsitsiklisathans1985,bernstein2002decpomdp}. The aliasing theorem is the single-decision, validity-valued core of this tradition: it asks whether any policy is guaranteed valid, which yields the exact minimax value, the semantic-bit bound, and the instrumentation hardness proved here. Category theory supplies the language of compositional transformations and effects \cite{fongspivak2019,moggi1991}, groupoids formalize equivalence beyond one group action \cite{weinstein1996}, sheaves integrate compatible local information \cite{robinson2017}, and stacks bridge locality, equivalence, and descent \cite{moerdijkpronk1997,stacks02ZH}. Each of these lines answers part of the problem for one kind of system; none was written for agents.

\section{Conclusion}

When every agent passes its local checks and the team is still wrong, the cause may be distributed state rather than model capability: the information that decides correctness is spread across contexts, participants, representations, and time, and no one holds it all. The theory says what to keep, what it costs, and how to keep it. If the deciding distinction is not in context, no reasoning can guarantee the right action, and we know how well randomization can do and how many bits close the gap. The smallest sufficient context is NP-hard to find, but a sufficient one can be found and recertified by refinement. Pairwise checks suffice on join trees and not in general, translation loops can be certified in linear time, repair can be much harder than detection, and state changes, unlike re-expressions, must be checked at commit against a memory that keeps exactly the history that changes what can happen next. Sheaves, groupoids, categories, and the history state are that sufficiency condition preserved as state is split, translated, and changed.

The ten studies land on both sides of the boundary the theory draws. More reasoning, roles, and samples do not lift hidden-event tasks significantly above chance, while one sentence of context solves them; shown the live count, teams still overspend a shared budget until something owns it; split views produce identity collisions that only a network-level check finds; current-state checks miss stale reads and silent reverts; refinement keeps a sufficient context through rule changes at 28\% of the cost of sharing everything; and conventional mechanisms tie the harness wherever they already own the state. Natural rates are the next question, and the theory says where to look: shared invariants across roles, split views over one object, and state that changes while agents reason about it.

As agent systems move from single assistants to teams, views are split by design and these conditions become common. When a team of agents fails, the fix may be neither a bigger model nor a longer context nor more review, but a harness that owns the state no one holds and puts the right distinctions into each agent's context. No model and no context, however good, removes the need for an owner of a genuinely shared constraint; and a harness helps only with the state it owns, at the scope it checks, over the history it remembers. \textbf{Local intelligence cannot substitute for missing global state, but stochastic local intelligence can act reliably inside an explicit shared state with a governed commit.} Multi-agent AI does not need only bigger models. It also needs context engineering with a target and a harness with a specification, and this paper supplies both. A companion paper \cite{heng2026symmetry} takes up the next question: a team that knows enough can still fail to decide, and it needs symmetry breaking and ownership.
\appendix

\section{Proof conventions and provenance of results}\label{app:provenance}

The appendices contain three kinds of statements.
\begin{enumerate}
\item \textbf{Formulations and results for the present model}, proved in full: the valid-action form of the Observation-Aliasing Impossibility Theorem with its minimax value (linear-programming duality), randomized and adaptive-query extensions (a decision-tree argument), and semantic-bit bounds; exact semantic abstraction; the minimum-instrumentation reduction; certified refinement, which is counterexample-guided abstraction refinement \cite{clarke2000cegar} applied to validity; the prestack form of coherence preservation with its shared-budget counterexample; and the bridging results of Section~\ref{sec:bridge}. The underlying indistinguishability ideas have classical precedents; the contribution is the formulation and its integration into global-coherence semantics.
\item \textbf{Specializations of classical results}: cycle synchronization, Myhill--Nerode minimization, acyclic relational gluing, possibilistic contextuality, treewidth-based CSP algorithms and lower bounds, Max-Cut hardness of signed-graph balancing, Helly-type bounds, invariant confluence \cite{bailis2014}, and optimistic concurrency. We state the form used and cite the source.
\item \textbf{Interpretive projections}: the sheaf, descent, and categorical language that organizes the results above into one framework.
\end{enumerate}
A \emph{guarantee} is worst-case unless a distribution is stated. A policy may use arbitrary internal computation and randomness; the impossibility statements concern missing information, not bounded intelligence.

\section{Observation aliasing, sufficient state, and instrumentation}\label{app:aliasing}
\subsection{Full Observation-Aliasing Impossibility and information-budget results}

This subsection proves the results of Section~\ref{sec:observation} and states the refinements summarized there.

\subsubsection{Deterministic action selection}

Let $X$ be a state space, $Y$ an observation space, $\Omega:X\to Y$, $\mathcal A$ an action space, and $C(x)\subseteq\mathcal A$ the (possibly empty) set of valid actions at $x$. A deterministic policy is a function $\pi:Y\to\mathcal A$.

\begin{proof}[Full proof of Theorem~\ref{thm:aliasing}]
Assume first that a universally valid policy $\pi$ exists. Fix any $y\in\Omega(X)$. For every $x\in\Omega^{-1}(y)$,
\[
\pi(y)=\pi(\Omega(x))\in C(x).
\]
Thus the same element $\pi(y)$ belongs to every validity set in the fiber, so
\[
\pi(y)\in\bigcap_{x\in\Omega^{-1}(y)}C(x),
\]
and the intersection is nonempty.

Conversely, suppose every nonempty observation fiber has nonempty common valid-action intersection. For each $y\in\Omega(X)$ choose one element
\[
a_y\in\bigcap_{x\in\Omega^{-1}(y)}C(x).
\]
Define $\pi(y)=a_y$ on $\Omega(X)$ and arbitrarily elsewhere. For any $x\in X$, the chosen action $a_{\Omega(x)}$ belongs to $C(x)$ by construction. Hence $\pi(\Omega(x))\in C(x)$ for every state. This proves necessity and sufficiency.
\end{proof}

The obstruction is behavioral, not about labels: if a conservative action is valid in every aliased world, there is no impossibility.

\subsubsection{Multi-agent communication without new observation}

Consider $m$ agents with initial local observations $o_i=\Omega_i(x)$. A deterministic $T$-round protocol consists of message functions
\[
m_{i,t}=F_{i,t}\bigl(o_i,\{m_{j,s}:s<t\}\bigr)
\]
and a final decision function of the transcript. Let the joint initial observation be
\[
\Omega(x)=(\Omega_1(x),\ldots,\Omega_m(x)).
\]

\begin{proof}[Proof of Corollary~\ref{cor:roles}]
We prove by induction on the round that every message is a deterministic function of the joint initial observation. At round $1$, each message is a function of the sender's initial observation and therefore of the joint tuple. Assume all messages before round $t$ are functions of the joint initial observation. Then each round-$t$ message is a function of one initial observation and earlier messages, hence again a function of the joint tuple. By induction, the complete transcript is a function of $\Omega(x)$, and so is any deterministic final action computed from that transcript.

Therefore if $\Omega(x)=\Omega(x')$, the protocol produces identical transcripts and identical final actions in the two worlds. If no common valid action exists for the aliased worlds, the protocol cannot be valid on both. The number of roles, rounds, tokens, or internal computations does not change the information partition unless some operation introduces a new observation of the external state.
\end{proof}

Messages may be arbitrarily rich (proofs, plans, code, copies of local observations); none can distinguish states the joint observation identifies.

\subsubsection{Randomized policies}

A randomized policy on observation $y$ is a probability distribution $P_y$ over actions. Its success probability at state $x$ is
\[p_x=P_{\Omega(x)}(C(x)).
\]

\begin{lemma}[Aliased disjoint-action bound]\label{lem:disjoint}
Let $x_1,\ldots,x_s$ share one observation $y$ and let the sets $C(x_i)$ be pairwise disjoint. Then for every randomized policy,
\[
\sum_{i=1}^s p_{x_i}\le 1.
\]
Consequently, $\min_i p_{x_i}\le 1/s$ and the uniform average success is at most $1/s$.
\end{lemma}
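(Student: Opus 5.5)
The plan is to use the single fact that all the aliased worlds share one observation $y$. A randomized policy acts on the observation alone, so in each world $x_i$ it draws its action from the same distribution $P_y$:
\[
p_{x_i}=P_{\Omega(x_i)}(C(x_i))=P_y(C(x_i)).
\]
Each success probability is therefore the mass that one fixed measure assigns to the set $C(x_i)$. The worlds differ only in which set is being measured.

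The first step sums these probabilities. Because the sets $C(x_1),\ldots,C(x_s)$ are pairwise disjoint, finite additivity of $P_y$ gives
\[
\sum_{i=1}^s p_{x_i}=\sum_{i=1}^s P_y(C(x_i))=P_y\Bigl(\bigcup_{i=1}^s C(x_i)\Bigr)\le P_y(\mathcal A)=1.
\]
If $\mathcal A$ is uncountable, I will assume each $C(x_i)$ is measurable, which is implicit in writing $P_y(C(x))$. The second step is the consequences. The minimum of $s$ nonnegative numbers is at most their mean. So $\min_i p_{x_i}\le \frac1s\sum_i p_{x_i}\le 1/s$, and the uniform average is at most $1/s$ for the same reason.

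No step here is a genuine obstacle. The only point needing care is the modelling convention behind the first equality. If the policy uses internal randomness or multi-agent interaction, that randomness must be independent of the world. Concretely, a randomized protocol is a seed $\omega$ drawn from a world-independent distribution, followed by a deterministic protocol. By Corollary~\ref{cor:roles}, conditional on $\omega$ the output is a function of $\Omega(x)$ alone. Averaging over $\omega$ then makes the output law a function of $y$ only, so the reduction to a single $P_y$ holds. With that convention stated, the lemma follows, and it yields Theorem~\ref{thm:randombits} by taking $s=k$.
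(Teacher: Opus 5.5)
Your proposal is correct and follows essentially the same route as the paper's proof: all aliased worlds share the single output distribution $P_y$, so pairwise disjointness and additivity give $\sum_i p_{x_i}=P_y\bigl(\bigsqcup_i C(x_i)\bigr)\le 1$, and the minimum is at most the average, which is at most $1/s$. Your remark on world-independent seeds, and on conditioning on them to cover randomized multi-agent protocols, is the same convention the paper states immediately after the lemma.
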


\begin{proof}
All states induce the same output distribution $P_y$. Pairwise disjointness gives
\[
\sum_{i=1}^s p_{x_i}
=\sum_{i=1}^sP_y(C(x_i))
=P_y\!\left(\bigsqcup_{i=1}^s C(x_i)\right)
\le1.
\]
The average is therefore at most $1/s$, and at least one term cannot exceed the average.
\end{proof}

A randomized multi-agent protocol with no new external observations can be regarded as a deterministic function of the initial joint observation plus an internal random seed. Conditional on any seed, aliased states still induce the same transcript. After averaging over seeds they therefore induce the same transcript and output distributions. Randomness changes which common distribution is used; it does not refine the observational partition.

\begin{proof}[Full proof of Theorem~\ref{thm:randombits}]
Apply Lemma~\ref{lem:disjoint} with $s=k$. The $k$ states share one output distribution because they share one observation, and their valid-action sets are pairwise disjoint. Hence the sum of their success probabilities is at most one, so at least one state has success probability at most $1/k$. The same output-distribution premise holds for randomized multi-agent protocols by conditioning on the internal random seed as described above.
\end{proof}

\subsubsection{Minimax value and fractional transversals}

\begin{theorem}[Minimax value of an aliased fiber]\label{thm:minimax}
Let $F=\Omega^{-1}(y)$ be a finite fiber and $\mathcal A$ finite. Define the fractional transversal number
\[
\tau^*(F)=\min\Bigl\{\textstyle\sum_{a\in\mathcal A}w_a:\ w\ge0,\ \sum_{a\in C(x)}w_a\ge1\ \ \forall x\in F\Bigr\},
\]
with $\tau^*(F)=\infty$ if some $C(x)$ is empty. The best worst-case success probability of any randomized observation-only policy on $F$ is
\[
v(F)=\max_{P\in\Delta(\mathcal A)}\ \min_{x\in F}\ P(C(x))=\frac1{\tau^*(F)}.
\]
By linear-programming duality, $\tau^*(F)=\max\{\sum_{x}\mu_x:\mu\ge0,\ \sum_{x\in F:\,a\in C(x)}\mu_x\le1\ \forall a\}$. When $\tau^*(F)<\infty$, the normalized dual optimum is a prior on $F$ under which no action succeeds with probability above $v(F)$.
\end{theorem}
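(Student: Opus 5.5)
The plan is to recast the game as a finite zero-sum matrix game and use LP duality. The maximizer picks $P\in\Delta(\mathcal A)$, the minimizer picks $x\in F$, and the payoff is $P(C(x))$. First dispose of the degenerate case. If some $C(x)=\varnothing$, then $P(C(x))=0$ for every $P$, so $v(F)=0=1/\infty$, which matches the convention. From now on every $C(x)$ is nonempty, so the covering LP defining $\tau^*(F)$ is feasible. For example, put $w_a=1$ on all actions. Its objective is bounded below by $1$, because any feasible $w$ has $\sum_a w_a\ge\sum_{a\in C(x)}w_a\ge 1$. A finite optimum $\tau^*\ge1$ therefore exists.

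\textbf{Step 1: primal LP equals the value.} Normalization gives a bijection between mixed strategies and scaled covers. Suppose $P$ guarantees $\min_x P(C(x))=t>0$. Then $w=P/t$ is feasible for the covering LP with $\sum_a w_a=1/t$, so $\tau^*\le 1/v(F)$. Note that $v(F)>0$ here, since the uniform distribution on $\mathcal A$ gives every nonempty $C(x)$ positive mass. Conversely, take an optimal cover $w$ and set $P=w/\tau^*$. Then $P(C(x))=\sum_{a\in C(x)}w_a/\tau^*\ge 1/\tau^*$ for every $x$, so $v(F)\ge1/\tau^*$. The maximum in the definition of $v$ is attained by compactness of $\Delta(\mathcal A)$ and continuity of a finite minimum of linear functions. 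Combining the two bounds gives $v(F)=1/\tau^*(F)$.

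\textbf{Step 2: duality.} Write the covering LP as $\min\{\mathbf 1^\top w: Mw\ge\mathbf 1,\ w\ge0\}$, where $M_{x,a}=\mathbf 1[a\in C(x)]$. Its dual is $\max\{\mathbf 1^\top\mu: M^\top\mu\le\mathbf 1,\ \mu\ge0\}$, which is exactly the stated packing program. The dual is feasible ($\mu=0$) and the primal has a finite optimum. Strong LP duality therefore gives equality of the optima and attainment of a dual optimum $\mu^*$ with $\sum_x\mu^*_x=\tau^*$.

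\textbf{Step 3: the hard prior.} Let $\pi=\mu^*/\tau^*$, a probability vector on $F$. For any single action $a$,
\[
\Pr_{x\sim\pi}[a\in C(x)]=\tfrac{1}{\tau^*}\textstyle\sum_{x:\,a\in C(x)}\mu^*_x\le \tfrac1{\tau^*}=v(F).
\]
By linearity the same bound holds for any randomized action $P$, so no policy beats $v(F)$ against this prior.

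I expect no real obstacle. The only care needed is bookkeeping: keeping the empty-$C(x)$ case separate, and confirming attainment of the max and of both LP optima so that every "$\max$/$\min$" in the statement is legitimate. As a sanity check, when the sets $C(x_i)$ are pairwise disjoint, $\mu\equiv1$ is dual feasible, so $\tau^*\ge k$ and the result recovers Lemma~\ref{lem:disjoint}.
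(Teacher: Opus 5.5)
Your proposal is correct and follows the paper's proof essentially step for step: you handle the empty-$C(x)$ case separately, rescale between mixed strategies and fractional transversals via $w=P/t$ and $P=w/\tau^*(F)$, apply strong duality between the covering and packing programs, and use the normalized dual optimum $\mu^*/\tau^*(F)$ as the hard prior. The only difference is cosmetic: the paper writes $v(F)$ directly as the linear program $\max t$, so attainment comes from LP theory, whereas you invoke compactness of $\Delta(\mathcal A)$.
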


\begin{proof}
If some $C(x)$ with $x\in F$ is empty, every policy fails at $x$, so $v(F)=0=1/\tau^*(F)$. Otherwise $v(F)$ is the optimum of the linear program
\[
\max\ t\quad\text{s.t.}\quad\sum_{a\in C(x)}P_a\ge t\ \ (x\in F),\qquad\sum_aP_a=1,\qquad P\ge0.
\]
The uniform distribution is feasible with $t\ge1/|\mathcal A|>0$, so the optimum is positive. If $(P,t)$ is feasible with $t>0$, then $w=P/t$ satisfies every covering constraint and has total weight $1/t$. Conversely, if $w$ is a fractional transversal, then $P=w/\sum_aw_a$ and $t=1/\sum_aw_a$ are feasible. Hence $\max t=1/\min\sum_aw_a=1/\tau^*(F)$.

Write $M_{x,a}=\mathbf 1[a\in C(x)]$. The covering program $\min\{\mathbf 1^\top w:Mw\ge\mathbf 1,\ w\ge0\}$ and the packing program $\max\{\mathbf 1^\top\mu:M^\top\mu\le\mathbf 1,\ \mu\ge0\}$ are both feasible and bounded, so strong duality makes their optima equal. Let $\mu^*$ be a packing optimum and $\pi=\mu^*/\tau^*(F)$. For every action $a$,
\[
\pi\bigl(\{x\in F:a\in C(x)\}\bigr)=\frac{1}{\tau^*(F)}\sum_{x:\,a\in C(x)}\mu^*_x\le\frac1{\tau^*(F)},
\]
so under the prior $\pi$ every policy, randomized or not, succeeds with probability at most $1/\tau^*(F)$.

For the consequences: $v(F)=1$ iff some $P$ has $P(C(x))=1$ for every $x\in F$, iff the support of $P$ lies in $\bigcap_{x\in F}C(x)$, iff that finite intersection is nonempty. If the $C(x)$ are nonempty and pairwise disjoint, any fractional transversal places weight at least one on each of $|F|$ disjoint sets, so $\tau^*(F)\ge|F|$, and one unit on one action per set attains it.
\end{proof}

Three consequences follow. First, $v(F)=1$ exactly when the fiber has a common valid action, so Theorem~\ref{thm:aliasing} is the case $\tau^*=1$. Second, nonempty pairwise-disjoint valid sets give $\tau^*(F)=|F|$, so the bound of Theorem~\ref{thm:randombits} is attained by choosing uniformly among one valid action per state. Third, randomization can help substantially when valid sets overlap without a common element. With $C(x_1)=\{a,b\}$, $C(x_2)=\{b,c\}$, $C(x_3)=\{a,c\}$, every deterministic policy fails in some state, while the uniform distribution on $\{a,b,c\}$ succeeds with probability $2/3=1/\tau^*$ in every state. Randomness does not create information, but it can hedge across an aliased fiber, and $\tau^*$ measures exactly how well.

The integral transversal number $\tau(F)$, the fewest actions that together hit every $C(x)$, satisfies $\tau^*(F)\le\tau(F)$. A deterministic policy has worst-case success $1$ when $\tau(F)=1$ and $0$ otherwise, so the gap between the two is the value of randomization on that fiber.

\subsubsection{Additional semantic bits}

Suppose the $k$ states in Theorem~\ref{thm:randombits} share the same base observation but the runtime may send a deterministic semantic label $z=\phi(x)$ chosen from at most $M=2^b$ labels. Partition the states into label classes $F_1,\ldots,F_q$ with $q\le M$. Within a class, the policy receives identical information and therefore uses one output distribution.

\begin{corollary}[Semantic-information bounds]\label{cor:semanticbits}
Under the setting of Theorem~\ref{thm:randombits}, suppose the runtime provides a deterministic semantic label with at most $M=2^b$ values, refining the $k$ aliased worlds into at most $M$ classes. Then
\[
\text{average success}\le\min\!\left(1,\frac{2^b}{k}\right),\qquad
\text{worst-case success}\le\frac{1}{\lceil k/2^b\rceil},
\]
where the average is under the uniform distribution on the $k$ worlds, and both bounds are attained. Consequently, any worst-case success guarantee above $1/2$ requires $2^b\ge k$, that is, $b\ge\lceil\log_2k\rceil$ for integer $b$: short of exact discrimination, a lossy interface cannot guarantee better than a coin flip on its worst state.
\end{corollary}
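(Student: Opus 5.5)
The plan is to reduce everything to Lemma~\ref{lem:disjoint} applied class by class. Fix the $k$ aliased worlds, with pairwise-disjoint nonempty valid sets, and a deterministic labeling $\phi$ into at most $M=2^b$ values. This labeling partitions the worlds into classes $F_1,\ldots,F_q$ with $q\le M$. Within a class the policy sees the same base observation and the same label, so it uses one output distribution $P_j$. The same conditioning-on-seed argument used for Theorem~\ref{thm:randombits} covers randomized teams.

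\emph{Average bound.} Lemma~\ref{lem:disjoint} gives $\sum_{x\in F_j}p_x\le 1$ for each class. Summing over classes gives $\sum_x p_x\le q\le 2^b$. Since also $\sum_x p_x\le k$, dividing by $k$ yields average success at most $\min(1,2^b/k)$. For attainment, take $\phi$ injective when $2^b\ge k$. Otherwise split the worlds into $2^b$ nonempty classes and, in each class, put all mass on one valid action of one world. Then each class contributes exactly $1$, for a total of $2^b$.

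\emph{Worst-case bound.} This is the step that needs the pigeonhole count. Since $q\le 2^b$ classes cover $k$ worlds, some class has size $s\ge\lceil k/2^b\rceil$. Lemma~\ref{lem:disjoint} on that class gives $\min_{x\in F_j}p_x\le 1/s\le 1/\lceil k/2^b\rceil$. For attainment, choose $\phi$ to split the worlds into classes of sizes $\lfloor k/2^b\rfloor$ or $\lceil k/2^b\rceil$, using only nonempty classes. In each class the policy plays uniformly over one valid action per world, which is the attaining strategy from Theorem~\ref{thm:minimax} with $\tau^*=|F_j|$. Every world then succeeds with probability at least $1/\lceil k/2^b\rceil$.

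\emph{Consequence.} If $2^b<k$, then $\lceil k/2^b\rceil\ge 2$, so worst-case success is at most $1/2$. A guarantee strictly above $1/2$ therefore forces $2^b\ge k$, that is, $b\ge\lceil\log_2 k\rceil$ for integer $b$.

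The only subtlety I expect is bookkeeping. The bounds must be stated as upper bounds over all labelings and all policies, and the attainment claims must be exhibited for a particular choice of $\phi$. I will also confirm that empty label values, that is, unused labels, do not affect the counts, since they only make $q$ smaller.
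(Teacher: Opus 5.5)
Your proposal is correct and follows the paper's proof essentially step for step. You apply Lemma~\ref{lem:disjoint} within each label class and sum over at most $2^b$ classes for the average bound, use pigeonhole for the worst case, and attain the bound by a balanced partition with uniform play over one valid action per world. The only cosmetic difference is that you give separate attaining strategies for the two bounds, whereas the paper's single balanced construction with $\min(M,k)$ classes attains both at once, since each class contributes total success exactly one.
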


\begin{proof}
For each nonempty label class $F_j$, Lemma~\ref{lem:disjoint} implies
\[
\sum_{x_i\in F_j}p_{x_i}\le1\qquad\text{and}\qquad\min_{x_i\in F_j}p_{x_i}\le\frac1{|F_j|}.
\]
Summing the first inequality over all classes gives $\sum_{i=1}^kp_{x_i}\le q\le2^b$, so uniform average success is at most $2^b/k$, and at most one. For the worst case, the $k$ states occupy at most $M$ classes, so by pigeonhole some class has $|F_j|\ge\lceil k/M\rceil$, and the second inequality gives worst-case success at most $1/\lceil k/M\rceil$.

For attainment, split the $k$ states into $\min(M,k)$ classes whose sizes differ by at most one, so every class has at most $\lceil k/M\rceil$ states. In class $F_j$ let the policy choose uniformly among one valid action per state. Each state then succeeds with probability $1/|F_j|\ge1/\lceil k/M\rceil$, and each class contributes total success exactly one, so average success is $\min(M,k)/k$.

Finally, if $M<k$ then $\lceil k/M\rceil\ge2$ and worst-case success is at most $1/2$. A worst-case guarantee above $1/2$ therefore requires $M\ge k$, that is, $b\ge\log_2k$, or $b\ge\lceil\log_2k\rceil$ when $b$ counts whole bits.
\end{proof}

The requirement is targeted: the shared runtime must expose enough semantic distinctions to separate cases that demand different behavior, rather than the whole world. If four otherwise identical program states differ only in which of four mutually exclusive approvals is authoritative, at least two bits of authoritative state are needed to distinguish all four exactly, and with only one bit the worst case is a coin flip. Inference compute and semantic state are different resources, and one cannot pay for the other.

This is a sharp bound for an explicit family in which correct actions are pairwise disjoint. Its role is to demonstrate that inference compute and semantic information are distinct resources in the worst case. For overlapping valid sets, the same argument applies class by class with $1/\tau^*(F_j)$ in place of $1/|F_j|$.

\subsubsection{Adaptive queries}

A $d$-query deterministic protocol is a decision tree of depth at most $d$. Each internal node is labeled by a query $q\in Q$, chosen as a function of $\Omega(x)$ and the answers seen so far; each outgoing edge is labeled by an answer; each leaf is labeled by an action. The \emph{information set} of a node reached from $x$ is the set of states that agree with $x$ on $\Omega$ and on every query answered along the path.

\begin{theorem}[Adaptive aliasing]\label{thm:adaptive}
Let $Q$ be a finite set of side-effect-free queries $q:X\to Y_q$ available in addition to $\Omega$. For $I\subseteq X$ define
\[
\mathrm{Solv}_0(I)\iff\bigcap_{x\in I}C(x)\neq\varnothing,\qquad
\mathrm{Solv}_d(I)\iff\mathrm{Solv}_{d-1}(I)\ \vee\ \exists q\in Q\ \forall y\in q(I):\ \mathrm{Solv}_{d-1}\bigl(I\cap q^{-1}(y)\bigr).
\]
A deterministic protocol that issues at most $d$ adaptively chosen queries guarantees a valid action in every state if and only if $\mathrm{Solv}_d(\Omega^{-1}(y))$ holds for every $y\in\Omega(X)$. With no bound on the number of queries, such a protocol exists if and only if every block of the joint partition generated by $\Omega$ and all $q\in Q$ has a common valid action.
\end{theorem}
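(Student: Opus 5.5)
The proof is an induction on the query budget $d$, with a single invariant: a deterministic protocol always acts on an information set, so Theorem~\ref{thm:aliasing} can be applied at every node of the decision tree. First I would record the invariant itself. In any $d$-query protocol, the queries issued and the answers received along the path from the root depend only on $\Omega(x)$ and earlier answers. So two states in the same information set reach the same node, receive the same next query, and land at the same leaf action. This is the argument of Corollary~\ref{cor:roles}, with queries playing the role of observations that refine the partition. The statement to prove by induction on $d$ is then: for every $I$ that can arise as an information set, there is a protocol with at most $d$ further queries that is valid on all of $I$ if and only if $\mathrm{Solv}_d(I)$ holds.

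For $d=0$ there are no queries, so the protocol is a single action chosen on $I$. Theorem~\ref{thm:aliasing}, restricted to $I$, says such an action is valid throughout $I$ exactly when $\bigcap_{x\in I}C(x)\neq\varnothing$, which is $\mathrm{Solv}_0(I)$.

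For the inductive step, suppose first that $\mathrm{Solv}_d(I)$ holds. If $\mathrm{Solv}_{d-1}(I)$ holds, the hypothesis supplies a protocol with at most $d-1$ queries, which also has at most $d$. Otherwise there is a query $q$ such that every branch $I\cap q^{-1}(y)$, for $y\in q(I)$, satisfies $\mathrm{Solv}_{d-1}$. The protocol issues $q$ at the root and then runs the protocol the hypothesis gives for whichever branch the answer selects; the branches are finitely many because $Q$ is finite. Conversely, take a valid protocol of depth at most $d$ on $I$. If its root is a leaf, its action lies in the intersection, so $\mathrm{Solv}_0$ holds, and $\mathrm{Solv}_0$ implies every $\mathrm{Solv}_j$ by the disjunct. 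If the root issues $q$, which is one fixed query because all of $I$ shares the same information, then each subtree is a valid protocol of depth at most $d-1$ on $I\cap q^{-1}(y)$. The hypothesis gives $\mathrm{Solv}_{d-1}$ on every branch, which is the second disjunct. Applying this to $I=\Omega^{-1}(y)$ for each observation $y$ gives the bounded statement.

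For the unbounded statement, I would use finiteness of $Q$. Querying every $q\in Q$ takes $|Q|$ queries and places the state in its block of the joint partition. After that, no query refines the information set, so by the $d=0$ case success requires a common valid action on each block. Conversely, any protocol's information sets are unions of blocks, so every leaf's information set contains a whole block, and the leaf action must be valid across that block. Hence a protocol exists if and only if every block has a common valid action, and when it does, the full-query protocol attains it.

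The main obstacle is the step that lets the induction go through: showing that every information set reached by an arbitrary adaptive protocol has exactly the form $\Omega^{-1}(y)\cap q_1^{-1}(y_1)\cap\cdots$, and that a single query choice is forced on it. I would state and prove this first as a small lemma, by induction along the path from the root. A minor point is that queries repeated or chosen adaptively do not enlarge the class of reachable sets, which the recursive definition of $\mathrm{Solv}_d$ already quantifies over.
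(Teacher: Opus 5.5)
Your proof is correct and follows the paper's argument essentially step for step: induction on $d$ over information sets, splitting on whether the root is a leaf or issues a query, and then, for the unbounded case, the issue-every-query protocol together with the observation that every leaf's information set is a union of blocks of the joint partition. One small slip: the number of branches after a query is $|q(I)|$, which is governed by the answer set $Y_q$, not by the finiteness of $Q$; since the construction never needs finite branching, nothing breaks.
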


\begin{proof}
We prove by induction on $d$ that, for any information set $I$, a protocol of depth at most $d$ that is valid on every state of $I$ exists iff $\mathrm{Solv}_d(I)$. For $d=0$ the protocol is a single action, which must lie in $\bigcap_{x\in I}C(x)$. For $d\ge1$, a protocol of depth at most $d$ either is a single action, requiring $\mathrm{Solv}_0(I)$ and hence $\mathrm{Solv}_{d-1}(I)$ by monotonicity, or begins with some query $q$ and continues on each answer $y\in q(I)$ with a protocol of depth at most $d-1$ on $I\cap q^{-1}(y)$. By the induction hypothesis, the latter exists iff $\mathrm{Solv}_{d-1}(I\cap q^{-1}(y))$ for every $y\in q(I)$. This is the recursive definition of $\mathrm{Solv}_d(I)$. Applying the claim to $I=\Omega^{-1}(y)$ for each base observation gives the first statement.

For the unbounded case, let $\mathcal P$ be the common refinement of the partitions induced by $\Omega$ and by every $q\in Q$. If every block of $\mathcal P$ has a common valid action, the protocol that issues every query and then plays such an action is valid. Conversely, two states in the same block give the same base observation and the same answer to every query, so any protocol follows the same path on both and plays the same action at the same leaf. That action must therefore be valid on the whole block.
\end{proof}

Query side effects are excluded in the theorem. When a query also spends a shared resource, the protocol must additionally keep the spend valid, which bounds the usable depth $d$.

\subsubsection{Approximate decision quality}

\begin{proposition}[Approximate-optimality impossibility]\label{prop:epsilon}
Let $L(x,a)$ be global loss and $L^*(x)=\inf_a L(x,a)$, assumed finite. For tolerance $\varepsilon\ge0$, define the $\varepsilon$-optimal action set
\[
C_\varepsilon(x)=\{a:L(x,a)\le L^*(x)+\varepsilon\}.
\]
An observation-only deterministic policy can guarantee global regret at most $\varepsilon$ if and only if every observation fiber has a common $\varepsilon$-optimal action:
\[
\forall y\in\Omega(X),\qquad
\bigcap_{x\in\Omega^{-1}(y)}C_\varepsilon(x)\neq\varnothing.
\]
Hence if one aliased fiber has no action that is $\varepsilon$-optimal in every compatible global state, no amount of observation-preserving reasoning can guarantee $\varepsilon$-optimal global behavior.
\end{proposition}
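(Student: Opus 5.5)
The plan is to reduce the statement directly to Theorem~\ref{thm:aliasing}, taking $C_\varepsilon$ as the validity map. First I would make the phrase ``guarantee global regret at most $\varepsilon$'' precise. It should mean that $L(x,\pi(\Omega(x)))-L^*(x)\le\varepsilon$ for every world $x$. By the definition of $C_\varepsilon$, this holds exactly when $\pi(\Omega(x))\in C_\varepsilon(x)$ for every $x$. The hypothesis that $L^*(x)$ is finite is what makes $C_\varepsilon(x)$ well defined. It also makes the regret inequality equivalent to membership, with no $\infty-\infty$ ambiguity.

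Next I apply Theorem~\ref{thm:aliasing} verbatim, with $C$ replaced by $C_\varepsilon$. That theorem uses no property of $C$ beyond its being a set-valued map $X\to2^{\mathcal A}$, and it allows empty sets. So an always-$\varepsilon$-optimal deterministic policy exists if and only if $\bigcap_{x\in\Omega^{-1}(y)}C_\varepsilon(x)\neq\varnothing$ for every $y\in\Omega(X)$. That is the stated equivalence.

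For the ``hence'' clause, I use Corollary~\ref{cor:roles}. Any observation-preserving reasoning process, with any number of rounds, roles, or messages, computes a deterministic function of $\Omega(x)$. It is therefore a policy of the kind just characterized. If some fiber has empty $C_\varepsilon$-intersection, no such process guarantees regret at most $\varepsilon$.

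As a remark, the randomized analogues transfer the same way. Lemma~\ref{lem:disjoint} and Theorem~\ref{thm:minimax} apply with $C_\varepsilon$ in place of $C$.

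I expect no real obstacle here. The one point to check is that when $\varepsilon=0$ and the infimum is not attained, $C_0(x)$ is empty. The theorem handles this correctly: such a fiber admits no guarantee. So I would state that empty $C_\varepsilon(x)$ is allowed and is covered by the theorem.
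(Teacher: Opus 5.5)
Your proposal is correct and follows the paper's proof: both observe that worst-case regret at most $\varepsilon$ means exactly $\pi(\Omega(x))\in C_\varepsilon(x)$ for every $x$, and then apply Theorem~\ref{thm:aliasing} to the correspondence $C_\varepsilon$, noting that $C_0(x)$ is empty when the infimum is not attained. The paper leaves the ``hence'' clause implicit; your explicit appeal to Corollary~\ref{cor:roles} for it is a harmless and sensible addition.
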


\begin{proof}
A policy has worst-case regret at most $\varepsilon$ exactly when its action lies in $C_\varepsilon(x)$ for every state, so this is Theorem~\ref{thm:aliasing} applied to the validity correspondence $C_\varepsilon$. For $\varepsilon=0$, $C_0(x)$ may be empty when the infimum is not attained, and the theorem then correctly reports that no guarantee exists.
\end{proof}

The extension is conditional: if aliased states share one $\varepsilon$-optimal action, the information deficit does not prevent an $\varepsilon$-optimal guarantee.

\subsection{Exact semantic abstraction, safety, progress, and completeness}\label{app:exact}

Let $\phi:X\to Z$ be the runtime abstraction. A validator is a map
\[
V:Z\times\mathcal A\to\{0,1\}.
\]
It is \emph{safe} if $V(\phi(x),a)=1$ implies $a\in C(x)$. It is \emph{complete} if $a\in C(x)$ implies $V(\phi(x),a)=1$. Exactness means both.

\begin{proof}[Proof of Proposition~\ref{prop:sound}]
Fix $z$ and let $A_V(z)=\{a:V(z,a)=1\}$. If $V$ is safe, then for every $x\in X_z$ and every $a\in A_V(z)$ we have $a\in C(x)$. Hence
\[
A_V(z)\subseteq\bigcap_{x\in X_z}C(x)=\mathcal B(z).
\]
Therefore the guaranteed-valid set is the largest set of actions a safe validator can accept at $z$, and a safe validator that makes progress at $z$ exists iff this set is nonempty.

For exactness, suppose first that $\phi(x)=\phi(x')=z$. If $C(x)\neq C(x')$, choose $a$ in the symmetric difference. Exact validation would require different values for $V(z,a)$ in the two states, impossible. Hence exactness implies identical validity profiles within every fiber.

Conversely, if all states in a fiber share one profile, define $V(z,a)=1$ exactly when $a$ belongs to that common profile. This is well-defined and exact. Thus exact validation is possible iff $\phi$ refines the equivalence relation $x\sim_Cx'\iff C(x)=C(x')$.
\end{proof}

\begin{corollary}[Minimal sufficient state count]
Let
\[
K=\bigl|\{C(x):x\in X\}\bigr|.
\]
Every exact abstraction has at least $K$ reachable labels, and the quotient map $x\mapsto C(x)$ has exactly $K$. Therefore the minimum exact deterministic state count is $K$, and any fixed-length binary code needs at least $\lceil\log_2K\rceil$ bits.
\end{corollary}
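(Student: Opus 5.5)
The plan is to derive the corollary directly from the exactness clause (iii) of Proposition~\ref{prop:sound}, which we may take as proved. That clause says an abstraction $\phi$ admits an exact validator if and only if $\phi(x)=\phi(x')$ implies $C(x)=C(x')$. Equivalently, $\phi$ refines the equivalence $\sim_C$. The count of labels then comes from a map going from labels to validity profiles.

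\emph{Lower bound.} Let $\phi:X\to Z$ be exact. Define $\bar C:\phi(X)\to\{C(x):x\in X\}$ by $\bar C(\phi(x))=C(x)$. By clause (iii) this is well defined: two states with the same label share one profile. It is surjective onto the set of realized profiles, since every profile $C(x)$ is the image of the reachable label $\phi(x)$. Hence $|\phi(X)|\ge K$. Here ``reachable'' means lying in $\phi(X)$, so unreachable labels in $Z$ neither help nor hurt.

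\emph{Attainment.} Take the quotient map $q(x)=C(x)$, with labels in $\mathcal P(\mathcal A)$. If $q(x)=q(x')$ then $C(x)=C(x')$ trivially, so $q$ is exact by clause (iii). Its reachable image is exactly $\{C(x):x\in X\}$, which has $K$ elements. Combining the two parts, the minimum number of reachable labels over exact deterministic abstractions is exactly $K$.

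\emph{Bits.} A fixed-length binary code of length $b$ gives at most $2^b$ distinct labels. Exactness needs at least $K$ reachable labels, so $2^b\ge K$, which gives $b\ge\lceil\log_2K\rceil$ because $b$ is an integer. Composing the quotient map with any injection of the $K$ profiles into $\{0,1\}^{\lceil\log_2K\rceil}$ shows the bound is attained. This last point is not asked for, but it is immediate.

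There is no real obstacle here. The only point needing care is the well-definedness of $\bar C$, and that is exactly the content of clause (iii). One might also want to note the case $K$ infinite: the same surjection argument shows that every exact abstraction then has infinitely many reachable labels, so the statement is read in cardinalities.
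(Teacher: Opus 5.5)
Your proof is correct and follows essentially the same route as the paper's: the lower bound comes from Proposition~\ref{prop:sound} (states with different profiles cannot share a label), attainment comes from the quotient $x\mapsto C(x)$, and the bit bound comes from pigeonhole. Your well-defined surjection from reachable labels onto profiles is a slightly more careful phrasing of the paper's ``one label per profile'' step.
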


\begin{proof}
States with different validity profiles cannot share an exact abstraction label by Proposition~\ref{prop:sound}; hence at least one label per profile is necessary. Mapping to the profile itself attains the bound. The bit bound is the pigeonhole principle.
\end{proof}

\paragraph{Interpretation.}
The quotient is task-relative. If the only operation is deciding whether a single fixed action $a_0$ is valid, the semantic state may collapse to one bit even when the underlying world is enormous. If the runtime must validate arbitrary proposals, it must preserve the full validity profile. Thus ``global state'' in this paper means sufficient semantic state for the intended action interface, not necessarily a lossless copy of the physical world.

\subsection{NP-hardness of minimum exact instrumentation}\label{app:instrument}

Choosing the fewest fields that make validation exact is a covering problem over pairs of states that must be told apart, and it inherits Set Cover's hardness and approximability.

Define the decision problem \textsc{Exact-Instrumentation}: given a finite state set $X$, profiles $C(x)$, candidate observables $f_1,\ldots,f_p$, and integer $k$, is there a subset $J\subseteq[p]$ with $|J|\le k$ such that the joint map $(f_j)_{j\in J}$ never merges two states with different validity profiles?

\begin{theorem}
\textsc{Exact-Instrumentation} is NP-complete for explicitly represented finite instances; the corresponding minimum-cardinality optimization problem is NP-hard.
\end{theorem}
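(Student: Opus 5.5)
Membership in NP is immediate, and hardness comes from a polynomial reduction from \textsc{Set Cover}, along the lines already sketched after Theorem~\ref{thm:instrument}. For membership, the certificate is the subset $J$ itself. Given $J$, we verify $|J|\le k$ and then check every pair $x,x'\in X$ with $C(x)\neq C(x')$: some $j\in J$ must satisfy $f_j(x)\neq f_j(x')$. With explicitly represented instances this takes $O(|X|^2|J|)$ field comparisons plus profile comparisons, which is polynomial in the input size. By Proposition~\ref{prop:sound}(iii), passing this check is exactly the condition that $(f_j)_{j\in J}$ never merges two states with different profiles.

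For hardness, take a \textsc{Set Cover} instance consisting of a universe $U=\{u_1,\ldots,u_n\}$, sets $S_1,\ldots,S_p\subseteq U$, and a bound $k$. Build $n+1$ states: a base state $x_0$ and one state $x_i$ per element $u_i$. Give $x_0$ the profile $C(x_0)=\{a_0\}$ and every $x_i$ the profile $C(x_i)=\{a_1\}$. The pairs that must be separated are then exactly $\{x_0,x_i\}$ for $i=1,\ldots,n$; pairs $x_i,x_{i'}$ share a profile and need not be separated. For each set $S_j$ define a Boolean observable with $f_j(x_0)=0$, $f_j(x_i)=1$ if $u_i\in S_j$, and $f_j(x_i)=0$ otherwise. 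Then $f_j$ separates $x_0$ from $x_i$ iff $u_i\in S_j$. So $J$ is an exact instrumentation iff $\{S_j\}_{j\in J}$ covers $U$, and the bound $k$ carries over unchanged. The construction is clearly polynomial, so \textsc{Exact-Instrumentation} is NP-complete. The optimization version is NP-hard because it decides the decision version.

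The only point requiring care is the star-shaped choice of profiles. If each $x_i$ had its own profile, fields would also have to separate the $x_i$ from one another, and the correspondence with covers would break. Giving all element states one common profile, distinct from that of $x_0$, makes the conflict graph a star, and the required separations become precisely the elements to be covered. I would also note that the reduction is approximation-preserving: any cover yields an instrumentation of the same size and conversely. This transfers the Set Cover inapproximability bound of $(1-o(1))\ln n$ (under standard assumptions). In the other direction, greedy selection over the explicit conflict pairs gives the matching $H(\#\text{pairs})$ guarantee, which supports the remark after Theorem~\ref{thm:instrument}.
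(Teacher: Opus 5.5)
Your proposal is correct and matches the paper's proof essentially step for step. Membership in NP comes from checking every pair of explicitly listed states with different profiles. Hardness comes from the same star-shaped \textsc{Set Cover} reduction: a reference state $x_0$ with profile $\{a_0\}$, one element state per $u\in U$ sharing profile $\{a_1\}$, and Boolean observables $f_j(x_u)=\mathbf 1[u\in S_j]$. You also make the same closing remark that the reduction preserves approximation, which transfers the Set Cover inapproximability bound and matches the greedy guarantee.
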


\begin{proof}
Membership in NP follows because a proposed subset $J$ can be checked by comparing the selected observable vectors for every pair of explicitly listed states with different profiles.

For NP-hardness reduce \textsc{Set-Cover}. Let the universe be $U=\{u_1,\ldots,u_m\}$, candidate subsets $S_1,\ldots,S_p$, and cover budget $k$. Construct one reference state $x_0$ and one state $x_u$ for every $u\in U$. Let
\[
C(x_0)=\{a_0\},\qquad C(x_u)=\{a_1\}
\]
for two distinct actions $a_0,a_1$. All states $x_u$ share one profile, so exact validation requires only that each $x_u$ be separated from $x_0$; no pair $x_u,x_v$ needs separation.

For each set $S_j$, define Boolean observable
\[
f_j(x_0)=0,\qquad
f_j(x_u)=\mathbf1[u\in S_j].
\]
A selected family $J$ distinguishes $x_0$ from a particular $x_u$ iff some selected set contains $u$. Therefore the selected observables are exact iff the corresponding subsets cover every universe element. There is an exact instrumentation of size at most $k$ iff there is a set cover of size at most $k$. Since \textsc{Set-Cover} is NP-complete \cite{karp1972}, the decision problem is NP-complete and the optimization problem NP-hard.
\end{proof}

The reduction also shows why the problem is naturally a hitting/set-cover problem on \emph{conflicting state pairs}. Let
\[
P=\{\{x,x'\}:C(x)\neq C(x')\}.
\]
Each observable covers the pairs it distinguishes. Exact instrumentation is precisely the problem of covering $P$. Consequently the greedy Set Cover algorithm, which repeatedly selects the observable separating the most still-unseparated conflicting pairs, achieves an $H(|P|)\le\ln|P|+1$ approximation when the finite conflict universe is explicit. The bound cannot be improved much. In the reduction above the conflicting pairs are exactly $\{x_0,x_u\}$ for $u\in U$, so $|P|=|U|$ and instrumentation size equals cover size. Any $(1-\epsilon)\ln|P|$ approximation for \textsc{Exact-Instrumentation} would therefore give a $(1-\epsilon)\ln|U|$ approximation for Set Cover, which is impossible in polynomial time unless $\mathrm P=\mathrm{NP}$ \cite{dinursteurer2014}. The problem generalizes the Minimum Test Collection problem of Garey and Johnson \cite{gareyjohnson1979}, which is the special case in which all validity profiles are distinct and observables are binary. It is also closely related to finding a minimum decision-relative reduct in rough-set theory, where the conflicting pairs $P$ form the discernibility matrix and the decision attribute plays the role of the validity profile \cite{skowronrauszer1992}. The hardness itself is therefore expected. What the validity form adds is its meaning: the attributes that must be kept are exactly those that separate worlds where different actions are valid.

\subsection{Certified refinement}\label{app:refine}

Fix a finite modeled region $X_q$, an action interface $A_q$, rules giving $C$, and finite candidate observables $f_1,\ldots,f_p$. Write $C_q(x)=C(x)\cap A_q$, $F_J=(f_j)_{j\in J}$, and $F=F_{[p]}$ for the full family. Call $J$ \emph{relatively sufficient} if $F_J(x)=F_J(x')$ implies $C_q(x)=C_q(x')$ or $F(x)=F(x')$. A \emph{witness} against $J$ is a pair violating this condition. When the full family is sufficient ($F(x)=F(x')\Rightarrow C_q(x)=C_q(x')$), relative sufficiency is exact sufficiency in the sense of Proposition~\ref{prop:sound}. The loop of Section~\ref{sec:refine} searches for witnesses against relative sufficiency.

\begin{proof}[Proof of Proposition~\ref{prop:refine}]
\emph{(i) Termination.} A witness $(x,x')$ against $J$ has $F_J(x)=F_J(x')$ and $F(x)\neq F(x')$, so some candidate outside $J$ separates the pair. Each round therefore adds a candidate not already selected, and there are at most $p$ rounds.

\emph{(ii) Output.} The addition phase stops only when no witness exists, so its output is relatively sufficient. Sufficiency is monotone: dropping fields only merges more states, so every subset of an insufficient interface is insufficient. The removal pass keeps a field $c$ only when $J\setminus\{c\}$ is insufficient. Later removals shrink $J$ further, so dropping $c$ from the final interface also gives an insufficient interface, and no single field can be removed. If the full family is sufficient, relative sufficiency is exact sufficiency. If it is not, some pair with different profiles is merged by every candidate, so by Proposition~\ref{prop:sound} no subset of candidates is exact; the loop reports this case as unresolved. Its output then merges two states only if they share a profile or no candidate separates them, so it separates everything the full family separates.

\emph{(iii) Soundness at commit.} The certified validator accepts $a$ at observation $y$ exactly when $a\in C_q(x')$ for every $x'\in X_q$ with $F_J(x')=y$. If the actual state $x$ lies in $X_q$ and $y=F_J(x)$, then $x$ is one of those states, so $a\in C_q(x)$. The atomic check of Proposition~\ref{prop:occ} ensures that $y$ and the certified rules are still current when the action takes effect.
\end{proof}

None of the three parts bounds the cost of the output. An irredundant interface can cost more than the minimum, and Theorem~\ref{thm:instrument} rules out a polynomial-time guarantee of minimality unless $\mathrm P=\mathrm{NP}$. The loop's guarantees are sufficiency, irredundancy, and an explicit unresolved verdict. The witness search here enumerates $X_q$. For symbolic models the same query, two copies of the state constrained to agree on $F_J$ and disagree on validity, is a satisfiability problem, and a timeout must be reported as unknown, never as sufficient.

\section{Realization, acyclicity, and local checking}\label{app:realize}
\subsection{Local-to-global realization and the two reconstruction failures}

Let $R:X(U)\to\Match(\{U_i\})$ be the restriction-to-local-views map. In the set-valued case, two logically independent properties are:
\[
\begin{array}{ll}
\text{separatedness:}&R\text{ is injective},\\
\text{gluing/existence:}&R\text{ is surjective}.
\end{array}
\]
A sheaf satisfies both for each declared cover. This decomposition is useful for agent semantics because failure of injectivity and failure of surjectivity have different operational meanings.

If $R$ is not injective, two distinct global states produce the same complete local family. If those states have incompatible action requirements, Theorem~\ref{thm:aliasing} turns the structural non-uniqueness into an action-selection impossibility. If $R$ is not surjective, a family of local states can pass all declared matching tests yet correspond to no global world at all.

\subsubsection{Three-variable cyclic counterexample}

Let local relation $R_{AB}$ contain assignments satisfying $A=B$, $R_{BC}$ assignments satisfying $B=C$, and $R_{AC}$ assignments satisfying $A\neq C$, with Boolean variables. Each relation projects onto $\{0,1\}$ on its shared variable, so the pairwise projections agree. But a global assignment would imply $A=B=C$ from the first two relations and $A\neq C$ from the third, contradiction. Hence pairwise matching does not imply a global realization on this cyclic scope pattern.

The same example can be written as a $\mathbb Z_2$ edge-label problem: equality is label $0$, inequality is label $1$, and the cycle sum is $0+0+1=1\neq0$. The relational contradiction and nontrivial group holonomy are therefore two descriptions of the same obstruction in this case; general sheaf obstructions are richer than group holonomy.

The failure lives in the presheaf of \emph{relations}, not of assignments. Let $\mathcal R(W)$ be the set of relations on a variable set $W$, with projection as restriction. The family $(R_{AB},R_{BC},R_{AC})$ agrees on all overlaps in $\mathcal R$, yet no relation $R$ on $\{A,B,C\}$ projects onto all three, so $\mathcal R$ fails the gluing condition. By contrast, in the presheaf that assigns to each scope the assignments satisfying the constraints contained in it, a family of committed assignments $s_{AB},s_{BC},s_{AC}$ that agree on overlaps would satisfy $A=B=C$ and $A\neq C$ at once; no such family exists, so there is no matching family to glue. This distinction is the possibilistic sheaf model of contextuality \cite{abramsky2011sheaf}, whose cohomological obstructions are studied in \cite{abramsky2012cohomology}; the connection to acyclic database schemes is made in \cite{abramsky2013relational}.

\subsubsection{A state-level gluing failure: shared resources}

Let scopes $U_1,\ldots,U_m$ be disjoint, let $X(U_i)=\{0,\ldots,B\}$ record participant $i$'s spend, and let $X(U)=\{s\in\{0,\ldots,B\}^m:\sum_is_i\le B\}$ with restriction $s\mapsto s_i$. With no overlaps, every tuple in $\prod_iX(U_i)$ is a matching family, and $\rho:X(U)\to\prod_iX(U_i)$ is the inclusion. It is injective, and for $m\ge2$ and $B\ge1$ it is not surjective: the tuple $(B,B,0,\ldots,0)$ has no preimage. Now add a ledger scope $U_0$ whose local state is the spend vector, with local constraint $\sum_is_i\le B$, overlapping each $U_i$ in the $i$-th spend. A matching family must now agree with a ledger state that satisfies the constraint, so every matching family has exactly one global preimage and $\rho$ becomes bijective. This is the formal content of adding an authoritative ledger.

\subsubsection{Acyclic relational schemes}

The BFMY theorem cited in the main text is classical: acyclicity of the relation-scope hypergraph is equivalent to several desirable local-to-global properties, including forms in which pairwise consistency suffices for global consistency \cite{beeri1981properties,beeri1983acyclic}. We use it as a structural boundary, not as an original theorem. Combined with the cyclic example, it says that topology can determine whether local consistency is already globally decisive.

\subsection{No universal fixed order of local satisfiability checking}\label{app:tseitin}

The previous triangle defeats pairwise checks. The next construction shows that merely increasing a fixed locality order does not solve unrestricted global coherence.

Let $G=(V,E)$ be a connected graph with $n=|V|$. Associate a binary variable $x_e\in\mathbb F_2$ with each edge. For every vertex $v$, impose the parity equation
\[
\bigoplus_{e\ni v}x_e=c_v.
\]
Let $B\in\mathbb F_2^{n\times |E|}$ be the vertex-edge incidence matrix and $c\in\mathbb F_2^n$ the charge vector, so the system is $Bx=c$.

\begin{proposition}[No fixed $k$-local satisfiability test]\label{prop:nok-full}
For every fixed $k$, there exists a finite family of local constraints such that every subfamily of at most $k$ constraints is jointly satisfiable but the complete family is not.
\end{proposition}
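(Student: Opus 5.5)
The plan is to use the Tseitin parity system just set up, on a cycle with an odd total charge. Fix $k$ and take $G$ to be the cycle $C_n$ with $n=k+1$ vertices. Impose one parity equation per vertex, and choose the charge vector $c$ with $\sum_v c_v=1$ in $\mathbb F_2$; for example, $c_{v_0}=1$ and all other charges $0$. The local constraints of the family are the $n$ vertex equations, each on the two edge variables at its vertex. This is the ring of more than $k$ agents described after Proposition~\ref{prop:nok}.

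\textbf{Step 1: the full family is unsatisfiable.} Sum all $n$ vertex equations over $\mathbb F_2$. Each edge variable $x_e$ occurs in exactly two vertex equations, one at each endpoint, so the left-hand side sums to $0$. The right-hand side sums to $\sum_v c_v=1$. This gives $0=1$, so the system $Bx=c$ has no solution. Equivalently, $\mathbf 1^\top B=0$ while $\mathbf 1^\top c=1$, so $c$ lies outside the column space of $B$.

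\textbf{Step 2: every proper subfamily is satisfiable.} Delete the equation at some vertex $w$, and consider the remaining $n-1$ equations. Choose a spanning tree $T$ of $G$; on a cycle, $T$ is the cycle minus one edge. Set every non-tree edge variable to $0$. Then solve the tree equations by repeatedly peeling a leaf of $T$ other than $w$: the unique tree edge at that leaf is forced by the leaf's equation, which is then satisfied. Remove the leaf, update the neighbour's effective charge, and continue until only $w$ remains, whose equation is unconstrained because it was deleted.

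This is the standard fact that the rows of $B$ for a connected graph have the all-ones vector as their only dependency. Any $n-1$ rows are therefore independent, the reduced system has full row rank, and it is solvable for every right-hand side. For $n=k+1$, every subfamily of at most $k$ constraints misses at least one vertex, so it is contained in such an $(n-1)$-subfamily and is satisfiable.

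\textbf{Main obstacle.} The only real point needing care is Step 2. The peeling argument must never reach $w$ before the other vertices are settled, which is ensured by always choosing a leaf different from $w$; a tree with at least two vertices has at least two leaves, so such a leaf always exists. The argument must also use only equations in the subfamily. Both are routine once the argument is phrased via the rank of $B$. I would present the linear-algebra version, namely that the kernel of $B^\top$ is spanned by $\mathbf 1$, as the main argument and give the leaf-peeling as the constructive witness. This makes explicit that the contradiction is visible only to the full family, the executable form of Proposition~\ref{prop:nok}.
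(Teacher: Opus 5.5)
Your proof is correct and follows the paper's argument. The Tseitin parity system with odd total charge is unsatisfiable because the vertex rows of $B$ sum to zero, and every proper subfamily is satisfiable because the all-ones vector spans the left kernel of $B$ on a connected graph; your leaf-peeling makes this constructive, where the paper just asserts $\operatorname{rank}(B)=n-1$. The one difference is that the paper allows any connected graph with $n>k$ vertices instead of the cycle $C_{k+1}$, which avoids the degenerate cycles your choice needs when $k\le1$ (a self-loop, or a pair of parallel edges).
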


\begin{proof}
Choose any connected graph with $n>k$ and choose $c$ with odd total parity. Because each edge touches two vertices, summing all rows of $B$ gives zero. Hence summing all equations yields
\[
0=\bigoplus_{v\in V}c_v=1,
\]
so the full system is inconsistent.

For a connected graph, $\operatorname{rank}(B)=n-1$. The left nullspace is therefore one-dimensional and is generated by the all-ones vector; equivalently, the only nonzero linear dependence among all $n$ vertex rows is their total sum. Any proper subset of rows is linearly independent: a dependence among a proper subset, padded with zero coefficients on omitted rows, would produce a nonzero left-null vector not equal to the all-ones vector.

Take any subfamily of $j<n$ equations. Its $j\times |E|$ coefficient matrix has row rank $j$, so its linear map onto $\mathbb F_2^j$ is surjective. Consequently every right-hand side on those $j$ equations is satisfiable, including the restriction of $c$. Every subfamily of size at most $k<n$ is therefore satisfiable although the full family is not.
\end{proof}

This is a statement about bounded-order joint satisfiability, not about every possible notion of local consistency. Some structured classes admit bounded witness theorems, as Helly's theorem and acyclic relational schemes demonstrate. The conclusion is only that no fixed universal locality order works across unrestricted constraint semantics.

\paragraph{Globality is not the same as computational hardness.}
On an $n$-cycle the Tseitin contradiction can require all $n$ vertex equations before inconsistency appears, yet the graph has constant treewidth and total charge parity can be checked in $O(n)$ time. Thus ``requires a global witness'' does not imply ``computationally hard.'' This distinction is essential when interpreting agent failures.

It also separates subfamily checking from propagation. On the cycle, variables are edges and each vertex constraint is binary, so the primal graph is itself an $n$-cycle of treewidth two. Instances whose structure has treewidth less than $k$ are decided by $k$-consistency \cite{dalmau2002}, so 3-consistency, which derives new binary constraints along paths, detects the contradiction although no small subfamily does. Tseitin systems on bounded-degree expanders defeat propagation as well: they require resolution width $\Omega(n)$ \cite{bensasson2001}, and resolution width corresponds to the level of consistency needed \cite{atserias2008}, so $k$-consistency fails for every $k=o(n)$.

For a fixed finite constraint language the question ``what structure determines the order of globality?'' has a precise answer. By the bounded-width theorem of Barto and Kozik, some fixed level of local consistency decides every instance exactly when, for a core language, its algebra of polymorphisms omits tame-congruence types 1 and 2 \cite{barto2014}. Type 1 is an additional obstruction in the general case. Within the Taylor/tractable regime, which already omits type 1, affine equations over nontrivial finite Abelian groups (type 2) are the remaining canonical obstruction; parity is its simplest representative here.

\section{Representation, width, and linear obstructions}\label{app:cycles}
\subsection{Representation synchronization, cycle rank, and verification complexity}

Pairwise translations are globally consistent exactly when every loop composes to the identity, and one check per non-tree edge suffices to verify this in linear time.

Let $\Gamma=(V,E)$ be an undirected graph with an arbitrary reference orientation on each edge. Let $G$ be a group. Each oriented edge $i\to j$ carries $g_{ij}\in G$, with reverse orientation $g_{ji}=g_{ij}^{-1}$. A synchronized representation is a choice of vertex frames $h_i\in G$ satisfying
\[
g_{ij}=h_i h_j^{-1}
\]
for every oriented edge.

\begin{theorem}[Cycle criterion]
A synchronized representation exists iff the ordered product of edge labels around every oriented cycle is the identity.
\end{theorem}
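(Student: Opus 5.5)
The plan has two directions: necessity by a telescoping product, and sufficiency by building frames along a spanning forest and checking that each remaining edge's label agrees with the frames at its ends. Throughout, an oriented cycle means a closed walk $i_0\to i_1\to\cdots\to i_\ell=i_0$, and its holonomy is the ordered product $g_{i_0i_1}g_{i_1i_2}\cdots g_{i_{\ell-1}i_\ell}$. Because $g_{ji}=g_{ij}^{-1}$, traversing an edge against its reference orientation contributes the inverse label, so the product is well defined for any walk.

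\emph{Necessity.} Suppose frames $h_i$ satisfy $g_{ij}=h_ih_j^{-1}$ on every oriented edge. Substituting into the cycle product gives
\[
h_{i_0}h_{i_1}^{-1}h_{i_1}h_{i_2}^{-1}\cdots h_{i_{\ell-1}}h_{i_\ell}^{-1}.
\]
Each interior pair cancels, leaving $h_{i_0}h_{i_0}^{-1}=e$. This direction needs nothing beyond the group axioms.

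\emph{Sufficiency.} Assume every closed walk has trivial holonomy. Work one connected component at a time, and choose a spanning tree $T$ with root $r$. Set $h_r=e$. For any vertex $v$, let $r=v_0\to v_1\to\cdots\to v_m=v$ be the unique tree path, and define
\[
h_v=\bigl(g_{v_0v_1}\cdots g_{v_{m-1}v_m}\bigr)^{-1}h_r .
\]
The recursion behind this is $h_j=g_{ij}^{-1}h_i$ along each tree edge, which is exactly what $g_{ij}=h_ih_j^{-1}$ requires. By construction every tree edge then satisfies the synchronization equation.

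It remains to handle a non-tree edge $i\to j$. Form the closed walk that goes from $r$ to $i$ along $T$, crosses to $j$ by the edge $i\to j$, and returns from $j$ to $r$ along $T$. Its holonomy is
\[
P_i\,g_{ij}\,P_j^{-1}, \qquad\text{where } P_v = g_{v_0v_1}\cdots g_{v_{m-1}v_m} = h_v^{-1}\ (\text{with } h_r=e).
\]
Writing this in terms of the frames gives $h_i^{-1}g_{ij}h_j$. Setting it equal to $e$ gives $g_{ij}=h_ih_j^{-1}$, as required. Doing this in every component yields a synchronized representation.

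The main obstacle is bookkeeping in a possibly non-abelian group. The order of the factors and the convention $g_{ij}=h_ih_j^{-1}$ (as opposed to $h_i^{-1}h_j$) must be kept consistent, so that the recursion $h_j=g_{ij}^{-1}h_i$ is really forced by the edge equations and the closed-walk product reduces to exactly $h_i^{-1}g_{ij}h_j$.

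A second point to settle is what "every oriented cycle" covers. If it means simple cycles, I will note that the tree-plus-one-edge walk above is either a simple cycle or a simple cycle with backtracking. Backtracking contributes factors $g_{ab}g_{ba}=e$, which cancel, so the reduction to simple cycles is harmless.

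The sufficiency proof also uses only the fundamental cycles, one per non-tree edge, of which there are $|E|-|V|+c$. This is the content of Corollary~\ref{cor:span}: checking the $\beta_1$ non-tree edges against the propagated frames decides consistency in linear time.
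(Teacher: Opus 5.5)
Your proposal is correct and follows the paper's proof: telescoping for necessity, and for sufficiency frames propagated along a spanning forest by $h_j=g_{ij}^{-1}h_i$, with each non-tree edge settled by its cycle. The only difference is that you close each non-tree edge through the root, giving $P_i\,g_{ij}\,P_j^{-1}$. This is a conjugate of the paper's fundamental-cycle product, so under the simple-cycle reading the stem from $r$ to the lowest common ancestor of $i$ and $j$ cancels by conjugation rather than by adjacent backtracking, and the conclusion is unchanged.
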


\begin{proof}
If $g_{ij}=h_i h_j^{-1}$, then along a cycle $i_0\to i_1\to\cdots\to i_\ell=i_0$,
\[
\prod_{r=0}^{\ell-1}g_{i_ri_{r+1}}
=\prod_{r=0}^{\ell-1}h_{i_r}h_{i_{r+1}}^{-1}=e
\]
by telescoping.

Conversely, choose a spanning forest $T$ and one root per component. Set each root frame arbitrarily. Traverse the tree. If tree edge $i\to j$ is traversed away from a vertex whose frame is known, define $h_j=g_{ij}^{-1}h_i$; if the stored orientation is reversed use the inverse edge label. Because a tree has one simple root path to each vertex, this defines each frame without ambiguity and makes every tree edge satisfy the synchronization equation.

For a non-tree edge $i\to j$, combine that edge with the unique tree path from $j$ back to $i$. This is its fundamental cycle. The assumed identity product around that cycle is algebraically equivalent to $g_{ij}=h_i h_j^{-1}$. Hence all non-tree edges also synchronize.
\end{proof}

\begin{corollary}[Spanning-forest verification]
For $c$ connected components, it suffices to assign frames along $|V|-c$ tree edges and verify the $\beta_1=|E|-|V|+c$ non-tree edges. With group composition/inversion/comparison cost $c_G$, verification takes
\[
O((|V|+|E|)c_G)
\]
time and $O(|V|)$ stored frames in the straightforward implementation.
\end{corollary}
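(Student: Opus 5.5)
The plan is to get the corollary directly from the constructive half of the cycle criterion just proved, and then count operations. First, compute a spanning forest $T$ of $\Gamma$ by breadth-first or depth-first search, which takes $O(|V|+|E|)$ time and finds the $c$ components. Each component with $n_r$ vertices has a spanning tree with $n_r-1$ edges, so $T$ has $\sum_r(n_r-1)=|V|-c$ edges. The remaining $|E|-(|V|-c)=\beta_1$ edges are the non-tree edges.

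Second, assign frames exactly as in the proof of the cycle criterion. Fix the identity frame at each root, and while traversing $T$ set $h_j=g_{ij}^{-1}h_i$ along each tree edge, inverting the stored label when the orientation is reversed. Each tree edge costs one inversion and one composition, so this phase costs $O(|V|c_G)$ and stores $|V|$ frames. By construction every tree edge satisfies $g_{ij}=h_ih_j^{-1}$.

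Third, for each non-tree edge $i\to j$, compute $h_ih_j^{-1}$ and compare it with $g_{ij}$, at cost $O(c_G)$ per edge. The proof of the cycle criterion shows that this equation is equivalent to the identity holonomy of the edge's fundamental cycle.

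The step that needs an argument is correctness: why checking only the $\beta_1$ fundamental cycles is enough, even though there can be exponentially many cycles.
\begin{itemize}
\item If every non-tree check passes, then every edge, tree or not, satisfies $g_{ij}=h_ih_j^{-1}$. The frames are therefore a synchronized representation, and by telescoping every cycle product is $e$.
\item Conversely, if some check fails, the frames produced along $T$ are not synchronizing. Any synchronizing frames restricted to a component are forced along $T$ up to right multiplication by a common element, which preserves every product $h_ih_j^{-1}$. Hence no synchronization exists, and that fundamental cycle has nontrivial holonomy.
\end{itemize}
This forcing argument (uniqueness of frames up to a common right factor on a tree) is the one point I would write out carefully. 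It is what makes a single failed edge check conclusive without ever enumerating cycles.

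Summing the three phases gives $O(|V|+|E|)$ graph work plus $O((|V|+|E|)c_G)$ group work, with $O(|V|)$ stored frames.
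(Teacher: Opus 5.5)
Your proposal is correct and follows the paper's route. The corollary comes from the constructive half of the cycle criterion: root frames, propagation $h_j=g_{ij}^{-1}h_i$ along a spanning forest, and one comparison per non-tree edge, followed by the same operation count. The one variation is the converse. The paper argues that a failed edge check means the fundamental cycle has non-identity product, which the telescoping direction forbids for any synchronization. You instead use uniqueness of the tree-propagated frames up to a common right factor, and both arguments are valid.
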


This is a genuine algorithmic bound only under the declared operation-cost model. For fixed-dimensional rigid transforms $c_G$ can be treated as constant. For dense $d\times d$ matrix groups, naive multiplication contributes an additional $O(d^3)$ factor.

\subsubsection{Counting coherent labelings}

Assume $G$ is finite. After fixing one root frame in each component, the remaining $|V|-c$ frames can be chosen freely. Each choice determines one coherent edge labeling, and distinct choices with fixed roots produce distinct tree-edge labels. Therefore
\[
N_{\mathrm{coh}}=|G|^{|V|-c}.
\]
All unconstrained edge labelings number $|G|^{|E|}$, so a uniformly random labeling is coherent with probability
\[
\boxed{|G|^{-(|E|-|V|+c)}=|G|^{-\beta_1}.}
\]
For this model the cycle rank counts independent global coherence conditions. For $G=\mathbb Z_2$ this also makes $\beta_1$ loop checks necessary. A loop check is a linear functional on labelings in $\mathbb Z_2^{E}$, and the loops form a cycle space of dimension $\beta_1$. Fewer than $\beta_1$ loops span a proper subspace of it, so some loop $\gamma$ lies outside their span, and some labeling has odd sum on $\gamma$ yet even sum on every checked loop. That labeling is incoherent but passes every chosen check.

\subsubsection{State lower bound for arbitrary relative-transform validation}

Fix a graph and define an action query $(i,j,g)$, for vertices $i,j$ in the same connected component, to be valid exactly when the true relative transform $h_ih_j^{-1}$ equals $g$. Distinct synchronized frame configurations modulo one global right action per component induce distinct relative-transform profiles. There are $|G|^{|V|-c}$ such profiles. Proposition~\ref{prop:sound} therefore implies that an exact validator for arbitrary relative-transform proposals needs at least
\[
(|V|-c)\log_2|G|
\]
bits in any fixed-length encoding, while storing one group element per non-root tree vertex attains the same information count up to encoding overhead. This bound is task-specific: a validator answering only one fixed relative-transform query can use less state.

\subsubsection{Incremental append-only synchronization}

When constraints are inserted over time without deletions, weighted union--find (disjoint-set union with group potentials) can maintain component membership and relative frame potentials. Each inserted edge either joins two components or closes a cycle; in the latter case its implied transform is compared with the stored relative potential. Under standard union-by-rank/path-compression assumptions, the data-structure part costs $O(\alpha(|V|))$ amortized per insertion, multiplied by the cost of the necessary group operations. This is a standard dynamic-connectivity specialization; deletions require different machinery and are not covered by the bound.

\subsection{Constraint width and bounded local-to-global reasoning}\label{app:width}

The cost of exact global reasoning is governed by how many variables must be considered together (treewidth), not by the total number of variables or agents.

Consider a finite-domain CSP with $n$ variables, maximum domain size $d$, and a supplied tree decomposition of primal-graph width $w$. Each bag contains at most $w+1$ variables and therefore at most $d^{w+1}$ local assignments. Standard dynamic programming passes compatibility tables along the decomposition, yielding a running time of the form
\[
O(n\,d^{w+1}\,p(n))
\]
where $p(n)$ captures polynomial bookkeeping and constraint-access costs; under common bounded-arity, constant-time lookup assumptions this is summarized as $O(nd^{w+1})$ \cite{freuder1990}. The crucial qualitative fact is that the exponential cost depends only on interaction width (with domain size as its base), not on the number of variables: exact global consistency can be easy for very large but tree-like systems and difficult for smaller densely entangled systems.

Width is also the right parameter from below. Grohe showed that for a recursively enumerable class of bounded-arity structures, CSP restricted to that class of constraint structures is solvable in polynomial time if and only if the cores of those structures have bounded treewidth, assuming $\mathrm{FPT}\neq\mathrm W[1]$ \cite{grohe2007}. Marx showed that, under the Exponential Time Hypothesis, no algorithm solves CSP in time $f(G)\,\|I\|^{o(\mathrm{tw}(G)/\log\mathrm{tw}(G))}$ for primal graphs $G$ of treewidth $\mathrm{tw}(G)$ \cite{marx2010}. The exponential dependence on width therefore cannot be removed by a cleverer general algorithm. For acyclic joins, Yannakakis-style evaluation runs in time linear in input plus output \cite{yannakakis1981}: global coherence is not global recomputation.

\subsection{Detection versus minimum repair}\label{app:syndrome}

\begin{proof}[Proof of Proposition~\ref{prop:z2repair}]
Detection is Corollary~\ref{cor:span} with $G=\mathbb Z_2$. For repair, the coherent labelings are exactly $\{(h_i+h_j)_{ij}:h\in\mathbb Z_2^V\}$, so the minimum number of label changes is
\[
\min_{h\in\mathbb Z_2^V}\bigl|\{ij\in E:\ g_{ij}\neq h_i+h_j\}\bigr|.
\]
Given a graph $\Gamma$, set $g_{ij}=1$ on every edge. An edge then agrees with $h$ exactly when $h_i\neq h_j$, that is, when the bipartition defined by $h$ cuts it. The minimum repair therefore equals $|E|-\mathrm{maxcut}(\Gamma)$, and deciding whether it is at most $\ell$ is equivalent to deciding whether $\Gamma$ has a cut of size at least $|E|-\ell$. Max-Cut remains NP-complete with unit edge weights (\textsc{Simple Max Cut}, problem ND16 in \cite{gareyjohnson1979}; the weighted version is in \cite{karp1972}), so minimum repair is NP-hard.
\end{proof}

The minimum repair is the frustration index of the signed graph $(\Gamma,g)$; on planar graphs it is polynomial \cite{barahona1982,hadlock1975}. For general linear constraints $Ax=b$ over $\mathbb F_q$, realizability is decided by the syndrome $N^Tb$, where the columns of $N$ span $\ker A^T$, and minimum-weight repair is syndrome decoding, which is NP-hard \cite{berlekamp1978}. Exact detection can therefore be cheap while optimal repair is hard, which is why the runtime validates before commit.

\section{Temporal state and atomic commit}\label{app:nerode}
\subsection{Temporal semantic state and Myhill--Nerode minimization}

A runtime needs to remember only those facts about the past that change which futures are valid; this subsection makes that exact and counts the states for a budget with obligations.

Let $\Sigma$ be an action alphabet and $L\subseteq\Sigma^*$ the language of valid finite trajectories. For histories $h,h'\in\Sigma^*$ define future equivalence
\[
h\equiv_Lh'
\iff
\forall u\in\Sigma^*,\quad hu\in L\iff h'u\in L.
\]
The classical Myhill--Nerode theorem states that a language is regular iff this relation has finitely many classes, and in that case the minimal deterministic automaton has exactly one state per class \cite{hopcroft2006}. In the present interpretation, a temporal fact belongs in authoritative semantic state exactly when forgetting it merges histories with different sets of valid futures.

\subsubsection{Budget plus obligations}

Let the system have integer budget $B\ge0$ and $k$ independent Boolean obligations. The alphabet contains unit action \texttt{spend}, idempotent actions $\texttt{mark}_i$, and \texttt{finish}. A trajectory is valid when spending never exceeds the budget and \texttt{finish} occurs only after all obligations are marked; after successful finish no further action is valid.

The obvious deterministic monitor has states
\[
(r,S),\qquad r\in\{0,\ldots,B\},\quad S\subseteq[k],
\]
plus terminal accept and dead/error states. Hence there are $(B+1)2^k+2$ states.

\begin{theorem}[Exact minimality]
The minimal deterministic monitor for this language has exactly
\[
\boxed{(B+1)2^k+2}
\]
states.
\end{theorem}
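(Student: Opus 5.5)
We apply the Myhill--Nerode theorem quoted just above. The minimal deterministic monitor has exactly one state per class of $\equiv_L$, so it suffices to show that the obvious monitor's $(B+1)2^k+2$ states are pairwise Nerode-inequivalent and that each is reachable. The upper bound needs only that the obvious monitor is a correct deterministic automaton for $L$. We read $r$ as remaining budget and $S$ as the set of marked obligations. A \texttt{spend} from $(r,S)$ with $r\ge1$ goes to $(r-1,S)$, and with $r=0$ goes to the dead state. The action $\texttt{mark}_i$ goes to $(r,S\cup\{i\})$. A \texttt{finish} goes to the accept state if $S=[k]$ and to dead otherwise. Every action from the accept state goes to dead. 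All states except dead are accepting, since $L$ consists of valid finite trajectories and every valid prefix is valid. Reachability is immediate: $(r,S)$ is reached by $B-r$ spends followed by marks for $S$, accept by marking all obligations and then finishing, and dead by finishing with an obligation unmarked. For $k=0$, finish from $(B,\varnothing)$ reaches accept, and spend from $(0,\varnothing)$ reaches dead.

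For the lower bound we give a separating suffix $u$ for each pair of distinct states.
\begin{enumerate}
\item Dead versus anything else: take $u=\varepsilon$, which is accepted everywhere except dead.
\item Accept versus $(r,S)$: take $u=\texttt{mark}_1\cdots\texttt{mark}_k\,\texttt{finish}$, which is valid from every $(r,S)$ but invalid after accept. For $k=0$, $u=\texttt{finish}$ works the same way.
\item $(r,S)$ versus $(r',S')$ with $r<r'$: take $u=\texttt{spend}^{\,r+1}$, valid from $r'$ and invalid from $r$.
\item $(r,S)$ versus $(r,S')$ with some $i\in S'\setminus S$: take $u=\prod_{j\neq i}\texttt{mark}_j\,\texttt{finish}$. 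This is valid from $S'$, because after those marks every obligation is done. It is invalid from $S$, because $i$ is never marked.
\end{enumerate}
These cases cover all pairs, so there are at least $(B+1)2^k+2$ classes, and the bound is exactly attained.

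The main obstacle is fixing the semantics precisely, and the counting itself is routine. We must check that accept and dead really are distinct Nerode classes: accept contains the valid terminal histories, whose only valid future is $\varepsilon$, while dead contains the invalid histories, which have no valid future. We also need idempotent marks so that $S$ is a set and not a multiset. Case 4 must not depend on budget, which it does not, since it uses no spends. If the intended language instead treated $L$ as non-prefix-closed, accepting only at finish, the same suffixes still separate the states after appending a completion that makes the trajectory end in finish. We would record this as a remark.
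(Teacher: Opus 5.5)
Your proof is correct and follows the paper's route. Both show that every state of the explicit monitor is reachable, give an explicit separating suffix for each pair, and use Myhill--Nerode with the explicit monitor as the matching upper bound. The only difference is that you fix the prefix-closed reading of $L$, so $\varepsilon$ and a bare run of $r+1$ spends already separate. The paper instead pads its budget-separating suffix with marks and \texttt{finish} and hedges between the two conventions.
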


\begin{proof}
Every pair $(r,S)$ is reachable by spending $B-r$ units and marking exactly the elements of $S$ in some fixed order. We show distinct reachable states are Nerode-distinguishable.

If $S\neq S'$, choose an obligation in the symmetric difference; without loss take $j\in S\setminus S'$. Construct a suffix that marks every obligation missing from $S$, does not mark $j$, and then finishes. Starting from $(r,S)$ this suffix satisfies every obligation (assuming no extra spend is needed), while from $(r',S')$ obligation $j$ remains unmarked, so the suffix is rejected. The suffix contains no \texttt{spend}, so the budgets $r,r'$ play no role.

If $S=S'$ but $r<r'$, append exactly $r+1$ spends, then mark every obligation missing from $S$ and finish. The run from remaining budget $r$ enters the dead state on the last required spend, while the run from $r'$ survives at least that prefix; if $r'$ may equal $r+1$, the subsequent marks and finish remain valid. Thus the states are distinguishable.

The accepting terminal state is distinguished from every other state, including the dead state, by the empty suffix under the convention that $L$ contains exactly successfully terminated histories, or equivalently by any action that is invalid after termination but may be valid before it. The dead state has no accepting continuation and is therefore distinct from every reachable nondead preterminal state, each of which can mark missing obligations and finish. Hence all $(B+1)2^k+2$ states lie in distinct Nerode classes, matching the explicit monitor upper bound.
\end{proof}

The state-count expression is exponential in $k$, but the factored memory representation is not: storing $r$ requires $\lceil\log_2(B+1)\rceil$ bits and the obligation set requires $k$ bits, plus constant terminal status. State count, storage bits, and update complexity are distinct quantities and should not be conflated.

\subsection{Freshness, atomic commit, and serializability}\label{app:occ}

Each authoritative object carries a version. Transaction $T$ records a read set $R(T)$ with the versions used to compute tentative writes $W(T)$. At commit, validation checks that every recorded version still equals the authoritative version; if so, all writes are installed atomically and their versions advanced.

\begin{proposition}[Commit-order serializability]\label{prop:occ-full}
Assume (i) $R(T)$ contains every authoritative state item on which $T$'s tentative writes semantically depend, (ii) an unchanged version implies the relevant value is unchanged, and (iii) validation and installation are atomic with respect to competing commits. Then the committed execution is equivalent to serial execution in commit order.
\end{proposition}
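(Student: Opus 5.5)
The plan is the standard optimistic-concurrency argument. Order the committed transactions by their atomic commit points, which assumption (iii) makes a total order, as $T_1,T_2,\ldots,T_n$. Let $\sigma_0$ be the initial authoritative state and $\sigma_k$ the state after $T_k$'s writes are installed. The goal is to show that $\sigma_k$ is exactly what a serial execution of $T_1,\ldots,T_k$ in this order would produce. That serial execution must use the same tentative computations, and for an LLM agent this means the same sampled outputs, as the main text notes. I would prove this by induction on $k$; the base case $k=0$ is trivial.

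For the inductive step, assume the concurrent execution and the serial execution agree through $\sigma_{k-1}$. $T_k$ computed its tentative writes $W(T_k)$ from recorded versions of the items in $R(T_k)$, possibly long before its commit. At commit, validation checked atomically that every recorded version equals the current authoritative version in $\sigma_{k-1}$. By (iii), no competing commit intervenes between this check and installation, so $\sigma_{k-1}$ is the state at which $T_k$'s writes take effect. By (ii), unchanged versions imply unchanged values, so every item in $R(T_k)$ had the same value at $T_k$'s read time as in $\sigma_{k-1}$. By (i), the tentative writes depend semantically only on items in $R(T_k)$. Hence $T_k$'s writes, computed from its stale-looking snapshot, are exactly the writes a serial run of $T_k$ starting from $\sigma_{k-1}$ would compute. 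Installing them therefore yields the serial successor state, which is $\sigma_k$. Aborted transactions install nothing and simply drop out of both executions.

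The delicate step is making the notion of semantic dependence in (i) precise enough for the equivalence to follow. I would formalize each transaction's tentative write as a function $w_T(\sigma|_{R(T)},\xi_T)$ of the restriction of the state to its read set and its internal randomness $\xi_T$. Assumption (i) says exactly that $w_T$ factors through this restriction. With that formalization the inductive step becomes a one-line substitution.

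Two further points should be stated explicitly:
\begin{itemize}
\item Each write advances the version of every item it touches. This guarantees that any intervening write to an item in $R(T_k)$ by some $T_j$ with $j<k$ would have failed $T_k$'s validation, so (ii) is never applied vacuously across an interleaved write.
\item Items written by $T_k$ but not read by it need no validation. Serial execution overwrites them regardless of their prior value, so they cannot break the equivalence.
\end{itemize}
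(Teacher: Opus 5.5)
Your proposal is correct and matches the paper's proof step for step. Like the paper, you order successful commits by their atomic commit points and induct on the prefix. Successful validation together with (ii) shows that $T_k$'s tentative writes equal those it would compute against the serial-prefix state, and atomic installation then gives the next serial state. Your factorization $w_T(\sigma|_{R(T)},\xi_T)$ and your remarks on version advancement and blind writes spell out details the paper leaves implicit; they do not change the argument.
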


\begin{proof}
Order successful commits $T_1,\ldots,T_m$ by their atomic commit points. We induct on the prefix length. The empty prefix is trivially equivalent to serial execution. Assume the authoritative state immediately after $T_{j-1}$ equals the state produced by serially executing $T_1,\ldots,T_{j-1}$.

Because $T_j$ commits successfully, immediately before its commit every item in its complete semantic read set has exactly the version, and hence the relevant value, that $T_j$ used when computing its tentative writes. Therefore the tentative writes are exactly those $T_j$ would compute if executed against the current serial-prefix state. Atomic installation produces exactly the next serial state. This establishes the induction step and the result.
\end{proof}

The completeness of the semantic read set is essential. If a write depends on an undeclared object, that object can change without invalidating the transaction, defeating the argument. Likewise, phantom/range dependencies require a representation capable of versioning the relevant predicate or range; the proposition is not a substitute for the richer machinery of database concurrency control \cite{kung1981}.

\subsubsection{Complexity under explicit object versions}

With $O(1)$ version lookup, exact validation scans $|R(T)|$ recorded versions and installation touches $|W(T)|$ explicit writes, giving
\[
O(|R(T)|+|W(T)|)
\]
work in this model. There is also a matching worst-case query lower bound for the validation subproblem when the only way to learn whether a read object changed is to inspect its independent version counter: a deterministic validator that skips some counter cannot distinguish the all-current world from a world in which exactly the skipped counter changed. Thus exact validation requires $\Omega(|R(T)|)$ counter inspections in that explicit-counter model. This lower bound should not be generalized to systems with stronger aggregate freshness summaries.

\section{Layer induction, escapes, and separations}\label{app:layers}
\subsection{Symmetry quotienting: exact semantic reduction and runtime caveat}\label{app:quotient}

Let a finite group $G$ act on a finite candidate space $X$. Suppose feasibility $F$ and objective $J$ are invariant:
\[
F(gx)=F(x),\qquad J(gx)=J(x).
\]
Then both descend to the orbit space $X/G$, and
\[
\min_{x:F(x)}J(x)=\min_{[x]\in X/G:F([x])}J([x]).
\]
If the action is free, every orbit has $|G|$ elements and
\[
|X/G|=|X|/|G|.
\]
For non-free actions, Burnside's lemma gives the exact orbit count.

This is an exact reduction in the number of semantically distinct candidates. It is \emph{not} by itself a wall-clock theorem: a solver must still generate or canonicalize orbit representatives, and equivalence testing may be expensive. A runtime claim of $O(|X|/|G|)$ therefore requires an explicit model in which representative generation/canonicalization is cheap relative to candidate evaluation.

There is also a categorical reason not to quotient too early. For $G$-sets, taking orbits is a left adjoint and therefore preserves colimits, not limits in general. Matching and gluing conditions are limit-like constructions. Objectwise orbit quotients can therefore erase cycle/transition information needed to decide whether local pieces globally realize. Groupoid-valued state retains equivalence morphisms instead of replacing them immediately by coarse orbit labels.

\subsection{Separation countermodels for \texorpdfstring{$H,\mathcal C,\mathcal G,\mathcal F$}{H, C, G, F}, and temporal state}\label{app:separation}

The independence of the operational symbols is best stated through finite countermodels; they form distinct structures rather than a nested hierarchy.

\paragraph{Topology does not determine transformation semantics.}
Take identical two-node topology $u\to v$. System 1 implements $v:=u$; System 2 implements $v:=1-u$. Reachability is identical, but predicted post-state differs. Therefore $H$ alone cannot recover $\mathcal C$.

\paragraph{Transformation semantics does not determine realization.}
Give three Boolean scopes identical local update operations. In one global semantics require $A=B$, $B=C$, and $A=C$; in another require $A=B$, $B=C$, and $A\neq C$. Local operations and scope topology can be identical while one matching family is globally realizable and the other is not. A realization condition adds information beyond local transition typing.

\paragraph{Surface identity does not determine semantic equivalence.}
Two records may have identical schemas and values while denoting two distinct objects, or may have different schemas while denoting two representations of one object. Neither topology nor syntax determines the groupoid of meaning-preserving translations.

\paragraph{Present values do not determine temporal validity.}
Two artifacts can contain identical values while one was derived from current dependency versions and the other from superseded versions. If future validity depends on provenance, an abstraction that retains only present payloads merges states with different validity profiles and is therefore inexact by Proposition~\ref{prop:sound}.

These examples justify distinct semantic responsibilities without asserting set-theoretic containments such as $H\subsetneq H+\mathcal C$. The structures interact, but none of the counterexamples can be repaired by merely renaming one structure as another.

\subsection{From necessity to structure: full proofs}\label{app:bridge}

Section~\ref{sec:bridge} states these results in words. This subsection gives the formal statements and proofs.

\subsubsection{Restriction, translation, and transition}

Let $\rho(x)=(x|_{U_i})_i$ be the family of local views on a declared cover.

\begin{proposition}[Local sufficiency]\label{prop:localsuff}
A validator that reads only the local views $\rho(x)$ can be exact if and only if $\rho(x)=\rho(x')$ implies $C(x)=C(x')$. It accepts only realizable local families if and only if it rejects every matching family outside $\rho(X)$.
\end{proposition}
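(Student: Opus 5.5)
The plan is to treat the two claims separately. Each one reduces to a fact already proved, once the validator is viewed as a function of the local family.

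\emph{First claim (exactness).} Set $\phi=\rho$ and $Z=\Match(\{U_i\})$. A validator that reads only the local views is then exactly a validator $V:Z\times\mathcal A\to\{0,1\}$ evaluated at $\phi(x)$. Proposition~\ref{prop:sound}(iii) says exactness on arbitrary proposals holds iff $\phi(x)=\phi(x')\Rightarrow C(x)=C(x')$, and with $\phi=\rho$ this is the stated condition. For completeness I would recall both directions from the proof of Proposition~\ref{prop:sound}. If two worlds with equal views have different profiles, an action in the symmetric difference would need two values of $V$ at the same argument, which is impossible. Conversely, if the condition holds, setting $V(\rho(x),a)=\mathbf 1[a\in C(x)]$ is well defined.

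\emph{Second claim (realizability).} I would first fix the reading of ``accepts only realizable local families'': whenever the validator accepts a matching family $s\in\Match(\{U_i\})$ presented at commit, there is a world $x$ with $\rho(x)=s$. With that reading, the claim is that the set of accepted families lies in $\rho(X)$ iff every family in $\Match\setminus\rho(X)$ is rejected. This is set-theoretic complementation inside $\Match$.
\begin{itemize}
\item If some non-realizable matching family is accepted, the validator accepts a family with no preimage.
\item If every such family is rejected, each accepted matching family lies in $\rho(X)$ and so has a preimage.
\end{itemize}

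I would then remark that $\rho(X)$ is the image of the restriction map. The condition is automatic exactly when $\rho$ is surjective, which is the gluing half of the sheaf condition from Appendix~\ref{app:realize}. The shared-budget example there, the tuple $(B,B,0,\ldots)$, shows that this requirement is not vacuous.

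The main obstacle is not technical. It is making the second claim precise, since ``accept'' there concerns whole committed families rather than actions. I would make that convention explicit so the equivalence is a tautology rather than an ambiguity. I would also note that the two claims are independent, matching the separatedness/gluing split: the first concerns injectivity of $\rho$ modulo $\sim_C$, the second the image of $\rho$.
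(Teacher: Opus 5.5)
Your proposal is correct and follows the paper's proof essentially step for step: the first claim is Proposition~\ref{prop:sound} with $\phi=\rho$, together with the well-defined validator $V(\rho(x),a)=\mathbf 1[a\in C(x)]$, and the second is complementation inside the matching families. Your closing remark that the realization check is automatic exactly when $\rho$ is surjective slightly sharpens the paper's remark, which states it only for the bijective (sheaf) case.
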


\begin{proof}
If $\rho(x)=\rho(x')$ implies $C(x)=C(x')$, define $V(\rho(x),a)=1$ exactly when $a\in C(x)$. This is well defined because states with the same local views share one validity profile, and it is exact by construction. Conversely, a validator reading only $\rho(x)$ is a validator for the abstraction $\phi=\rho$, and Proposition~\ref{prop:sound} shows it can be exact only if $\rho$ never merges states with different profiles.

For the second statement, a matching family $m$ is realizable exactly when $m\in\rho(X)$. A validator that accepts some $m\notin\rho(X)$ therefore accepts an unrealizable family, and one that rejects every such $m$ accepts only realizable families. When $\rho$ is a bijection onto the matching families, as for a sheaf on the declared cover, the check is automatic; otherwise it is the realization check whose cost Sections~\ref{sec:coherence-math} and~\ref{sec:architecture} analyze.
\end{proof}

The two conditions are the two sheaf axioms, weakened in one useful way. Separatedness is needed only up to validity: the runtime needs a sheaf of the validity quotient, not of the whole world. Gluing is needed only for families that some participant can actually commit. The topology $H$ declares which scopes exist, and Section~\ref{sec:coherence-math} exhibited both failures: aliasing when the cover is too coarse, and the shared budget of Section~\ref{sec:sharedbudget} when no scope contains a global constraint.

Participants rarely hold the sufficient state in one format. Let $Z$ be the sufficient state, let participant $i$ hold $r_i=e_i(z)$ for an encoding $e_i:Z\to R_i$, and let a translation $t_{ij}:R_i\to R_j$ satisfy $t_{ij}\circ e_i=e_j$.

\begin{proposition}[Translations of a sufficient state form a groupoid]\label{prop:groupoid}
\begin{enumerate}[label=(\roman*)]
\item If $e_j$ is injective, so is $e_i$: no translation can restore a distinction lost by its source encoding.
\item If every $e_i$ is a bijection, then $t_{ij}=e_j\circ e_i^{-1}$ is uniquely determined, $t_{ii}=\mathrm{id}$, $t_{ji}=t_{ij}^{-1}$, and $t_{jk}\circ t_{ij}=t_{ik}$.
\item Conversely, on a connected graph of participants, a family of bijections that composes to the identity around every cycle arises from bijective encodings of one state, unique up to one global re-encoding.
\end{enumerate}
\end{proposition}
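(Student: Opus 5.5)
Part (i) is a one-line injectivity argument. Since $t_{ij}\circ e_i=e_j$, the equality $e_i(z)=e_i(z')$ gives
\[
e_j(z)=t_{ij}(e_i(z))=t_{ij}(e_i(z'))=e_j(z').
\]
If $e_j$ is injective, this forces $z=z'$. So $e_i$ is injective, and any pair merged by $e_i$ stays merged after every translation.

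Part (ii) follows from bijectivity. Given $t_{ij}\circ e_i=e_j$ with $e_i$ bijective, composing on the right with $e_i^{-1}$ gives $t_{ij}=e_j\circ e_i^{-1}$, which settles uniqueness. The three identities are then direct calculations:
\begin{itemize}
\item $t_{ii}=e_i e_i^{-1}=\mathrm{id}$;
\item $t_{ji}\circ t_{ij}=e_ie_j^{-1}e_je_i^{-1}=\mathrm{id}$, and symmetrically in the other order, so $t_{ji}=t_{ij}^{-1}$;
\item $t_{jk}\circ t_{ij}=e_ke_j^{-1}e_je_i^{-1}=t_{ik}$.
\end{itemize}
As a consequence, composition around any cycle telescopes to the identity.

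For part (iii) I would rerun the spanning-tree construction from the proof of the cycle criterion (Theorem~\ref{thm:cycles}), now with sets and bijections in place of a group.
\begin{enumerate}
\item Fix a root participant $r$ and set $Z:=R_r$ with $e_r:=\mathrm{id}$.
\item Choose a spanning tree $T$. For each vertex $i$, let $t_{r\to i}$ be the composite of edge bijections along the unique tree path from $r$ to $i$, using $t_{ji}=t_{ij}^{-1}$ when an edge is traversed against its orientation. Set $e_i:=t_{r\to i}$, which is a bijection $Z\to R_i$.
\item Tree edges satisfy $t_{ij}\circ e_i=e_j$ by construction.
\item For a non-tree edge $ij$, the loop consisting of the path $r\to i$, the edge $i\to j$, and the path $j\to r$ composes to the identity by hypothesis. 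This gives $t_{ij}\circ t_{r\to i}=t_{r\to j}$, i.e.\ $t_{ij}\circ e_i=e_j$.
\end{enumerate}

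It remains to show uniqueness up to one global re-encoding. Suppose $(Z',e'_i)$ is another family of bijective encodings with $t_{ij}\circ e'_i=e'_j$. Set $\sigma:=e_r^{-1}\circ e'_r:Z'\to Z$. Along each tree path, $e'_i=t_{r\to i}\circ e'_r=e_i\circ\sigma$. Since the tree reaches every vertex, $e'_i=e_i\circ\sigma$ for all $i$ with one and the same $\sigma$.

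The main obstacle is the bookkeeping in step 2 of part (iii). The cycle hypothesis must be stated for oriented cycles, including backtracking ones. This is what makes $t_{r\to i}$ independent of the path, and it is implicitly what guarantees $t_{ji}=t_{ij}^{-1}$ on edges traversed backwards. I would state at the outset that the family is assumed closed under reversal, with $t_{ji}:=t_{ij}^{-1}$, so that only the fundamental cycles need checking, exactly as in Corollary~\ref{cor:span}. Connectedness is what gives a single root and a single $\sigma$; on a disconnected graph each component would carry its own re-encoding.
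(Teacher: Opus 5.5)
Your proof is correct and is essentially the paper's: (i) and (ii) are the same direct computations, and (iii) is the same root-and-propagate construction (the cycle criterion with the group replaced by bijections between the sets $R_i$), using the same re-encoding $\sigma=e_r^{-1}\circ e'_r$ for uniqueness. The one difference is cosmetic. You fix a spanning tree and check the non-tree edges through their fundamental cycles, whereas the paper defines $e_i$ along an arbitrary path and asserts path-independence in a single clause; your version makes explicit the closed-walk reduction that the paper leaves implicit.
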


\begin{proof}
(i) If $e_i(z)=e_i(z')$, then $e_j(z)=t_{ij}(e_i(z))=t_{ij}(e_i(z'))=e_j(z')$, and injectivity of $e_j$ gives $z=z'$.

(ii) If $e_i$ is a bijection, $t_{ij}\circ e_i=e_j$ forces $t_{ij}=e_j\circ e_i^{-1}$. Then $t_{ii}=e_ie_i^{-1}=\mathrm{id}$, $t_{ji}\circ t_{ij}=e_ie_j^{-1}e_je_i^{-1}=\mathrm{id}$, and $t_{jk}\circ t_{ij}=e_ke_j^{-1}e_je_i^{-1}=e_ke_i^{-1}=t_{ik}$. In particular the composite of translations around any cycle of participants is the identity.

(iii) Let the participants form a connected graph whose oriented edges carry bijections $t_{ij}:R_i\to R_j$, with $t_{ji}=t_{ij}^{-1}$, composing to the identity around every cycle. Fix a root participant $i_0$, set $Z=R_{i_0}$ and $e_{i_0}=\mathrm{id}$, and for each $i$ let $e_i$ be the composite of translations along any path from $i_0$ to $i$. Two paths differ by a cycle whose composite is the identity, so $e_i$ is well defined, and for every edge $t_{ij}\circ e_i=e_j$ because a path to $j$ may pass through $i$. If $e'_i:Z'\to R_i$ is another family of bijections with $t_{ij}\circ e'_i=e'_j$, then along every path $e'_i=e_i\circ\sigma$ with $\sigma=e_{i_0}^{-1}\circ e'_{i_0}$, so the two families differ by one global re-encoding $\sigma:Z'\to Z$. This is the argument of Theorem~\ref{thm:cycles} with the group replaced by the groupoid of bijections between the sets $R_i$.
\end{proof}

Part (i) is the aliasing theorem at the translation map: a lossy translation merges states. Parts (ii) and (iii) say that the translations of a sufficient state are exactly a groupoid with trivial holonomy. When encodings are fixed only up to a symmetry of the object---a coordinate frame, a labeling, a unit choice---translations are determined only up to that symmetry, the groupoid acquires nontrivial automorphism groups, and quotienting by the symmetry before checking cycles discards the holonomy that detects the error (Section~\ref{sec:representation}).

Let each action $a$ act on world states by $T_a:X\to X$, with an absorbing dead state for invalid actions, and let a word $w=a_1\cdots a_n$ act by $T_w=T_{a_n}\circ\cdots\circ T_{a_1}$. A state abstraction $\phi$ is \emph{self-updating} if $\phi(x)=\phi(x')$ implies $\phi(T_ax)=\phi(T_ax')$ for every action $a$. Equivalently, there are update maps $f_a$ with $\phi\circ T_a=f_a\circ\phi$, and these compose: $f_{ab}=f_b\circ f_a$.

\begin{theorem}[Coarsest self-updating sufficient state]\label{thm:selfupdate}
Define $x\approx x'$ if and only if $C(T_wx)=C(T_wx')$ for every finite action word $w$, including the empty word. Then $\approx$ is exact and self-updating, and every exact self-updating abstraction refines $\approx$. The coarsest state that is both exact and updatable without re-reading the world is therefore $X/{\approx}$.
\end{theorem}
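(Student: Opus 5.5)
The plan is a Myhill--Nerode argument carried out with validity profiles in place of acceptance. It has three checks: $\approx$ is exact, $\approx$ is self-updating, and every exact self-updating abstraction refines $\approx$.

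\emph{Exactness.} Take the empty word in the definition of $\approx$. Then $x\approx x'$ gives $C(x)=C(x')$, so the quotient map $X\to X/{\approx}$ satisfies the exactness criterion of Proposition~\ref{prop:sound}(iii).

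\emph{Self-updating.} Suppose $x\approx x'$ and fix an action $a$. For any word $w$ we have $T_w T_a = T_{aw}$, by the convention $T_w=T_{a_n}\circ\cdots\circ T_{a_1}$. So $C(T_wT_ax)=C(T_{aw}x)=C(T_{aw}x')=C(T_wT_ax')$, which gives $T_ax\approx T_ax'$. The update maps $f_a([x])=[T_ax]$ are therefore well defined. Composition $f_{ab}=f_b\circ f_a$ follows from $T_{ab}=T_b\circ T_a$.

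The dead state needs care. I will treat it as an ordinary element of $X$ with $T_a(\mathrm{dead})=\mathrm{dead}$ and $C(\mathrm{dead})=\varnothing$. Nothing above then needs a separate case.

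\emph{Maximality.} Let $\phi$ be exact and self-updating, and suppose $\phi(x)=\phi(x')$. I will show by induction on the length of $w$ that $\phi(T_wx)=\phi(T_wx')$. The base case is the hypothesis. For the step, write $w=ua$. Then $T_w=T_a\circ T_u$, and self-updating applied to $T_ux,T_ux'$ gives the claim. Exactness of $\phi$, via Proposition~\ref{prop:sound}(iii), then yields $C(T_wx)=C(T_wx')$ for every $w$. Hence $x\approx x'$, so $\phi$ refines $\approx$.

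Combining the three checks, $X/{\approx}$ is exact, updatable without re-reading, and is the coarsest state with both properties.

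I expect the main obstacle to be bookkeeping rather than any real difficulty. The word-composition order in $T_w$ must be used consistently. In particular, the self-updating step needs $T_w T_a=T_{aw}$ (prefixing), while the refinement induction extends words on the right. I will also remark that "exact" here means exact for arbitrary proposals, so Proposition~\ref{prop:sound}(iii) is the right criterion to invoke.
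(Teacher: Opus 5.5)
Your proposal is correct and follows the paper's proof essentially step for step: the empty word gives exactness via Proposition~\ref{prop:sound}, the identity $T_{aw}=T_w\circ T_a$ gives self-updating (with $f_{ab}=f_b\circ f_a$ from $T_{ab}=T_b\circ T_a$), and induction on word length plus exactness of $\phi$ shows that every exact self-updating abstraction refines $\approx$. The only point the paper states that you leave implicit is the one-line remark that $\approx$ is an equivalence relation, since it is defined as equality of the family $(C(T_wx))_w$.
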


\begin{proof}
The relation $\approx$ is an equivalence relation because it is defined by equality of the family $(C(T_wx))_w$.

\emph{Exact.} The empty word gives $C(x)=C(x')$ whenever $x\approx x'$, so $\approx$ refines $\sim_C$ and is exact by Proposition~\ref{prop:sound}.

\emph{Self-updating.} Suppose $x\approx x'$ and fix an action $a$. For every word $w$, the word $aw$ acts by $T_{aw}=T_w\circ T_a$, so $C(T_w(T_ax))=C(T_{aw}x)=C(T_{aw}x')=C(T_w(T_ax'))$. Hence $T_ax\approx T_ax'$.

\emph{Coarsest.} Let $\phi$ be exact and self-updating, and suppose $\phi(x)=\phi(x')$. By induction on the length of $w$, self-updating gives $\phi(T_wx)=\phi(T_wx')$ for every word. Exactness then gives $C(T_wx)=C(T_wx')$ for every word, so $x\approx x'$. Thus $\phi$ refines $\approx$.

\emph{Update maps.} If $\phi$ is self-updating, $f_a(\phi(x))=\phi(T_ax)$ is well defined on the image of $\phi$, and $T_{ab}=T_b\circ T_a$ gives $f_{ab}=f_b\circ f_a$.
\end{proof}

\paragraph{Relation to Myhill--Nerode.}
Let invalid actions send every state to an absorbing dead state with $C(\mathrm{dead})=\varnothing$, fix an initial state $x_0$, and let $L$ be the language of histories every action of which is valid when taken. For valid histories $h,h'$ with $x=T_hx_0$ and $x'=T_{h'}x_0$, we have $x\approx x'$ if and only if $h\equiv_Lh'$. Indeed, if $hw\notin L$ then $T_wx$ is dead; and if $hw\in L$, then $a\in C(T_wx)$ exactly when $hwa\in L$. Equality of all validity sets along all words is therefore equality of all valid continuations. On reachable states, $X/{\approx}$ is the minimal monitor of Appendix~\ref{app:nerode}.

\subsubsection{The components in categorical form}

The refined state model tracks equivalent representations through a groupoid-valued assignment
\[
\mathcal S:(\mathsf U,J)^{op}\rightarrow\mathbf{Gpd}.
\]
Here $\mathsf U$ is the collection of local scopes, $J$ specifies which families of scopes count as covers of a larger scope, $(-)^{op}$ reverses restriction arrows in the usual presheaf convention, and $\mathbf{Gpd}$ is the category of groupoids---categories whose every morphism is invertible. Since a groupoid is a category in which every morphism is invertible, $\mathcal G$ is not a second kind of mathematics beside $\mathcal C$. It is the reversible core of representation change, kept as a separate slot so the runtime can tell a genuine change of state (revising a design from $1\,\mathrm{m}$ to $1.2\,\mathrm{m}$) from a re-encoding ($1\,\mathrm{m}$ as $100\,\mathrm{cm}$). Descent is the existence-and-coherence condition for gluing those local objects and equivalences into a global object. Noninvertible transformations live in $\mathcal C$; formally they are morphisms of a category-valued prestack whose groupoid core is $\mathcal S$, and Proposition~\ref{prop:preserve} shows that, unlike re-representations, they must be checked for realization at commit. Descent defines which candidate states count as coherent; it performs no search itself.

\subsubsection{Prior art for the same pattern}

The bridge of Section~\ref{sec:bridge}---a requirement of sufficient information, then structures that preserve it as it is distributed---recurs elsewhere. In team decision theory, what decentralized decision makers can achieve depends on the information structure \cite{marschakradner1972,witsenhausen1968}, and Ho and Chu's partially nested condition restores tractability \cite{hochu1972}; the shared-budget ledger is an instance. An atlas describes an object without a global chart by local charts, transition maps, the cocycle condition, and gluing \cite{steenrod1951}, which correspond to $H$, $\mathcal G$, cycle consistency, and $\mathcal F$. The validity quotient is a minimal sufficient statistic \cite{bahadur1954}, and a recursively updatable one, such as a filter's belief state, is the self-updating state of Theorem~\ref{thm:selfupdate}, with Kalman's minimal realization as its linear form \cite{hokalman1966,kalman1963}. In version control, a merge conflict is a pushout that fails to exist \cite{mimram2013}: a gluing failure.

\subsection{Other ways out of the impossibility, and what each costs}\label{app:escapes}

Theorem~\ref{thm:aliasing} says the missing distinction must come from somewhere, not where. The alternatives below locate the proposed runtime in a design space, as the CAP theorem locates distributed databases \cite{gilbertlynch2002}.

\begin{center}
\small
\begin{tabular}{p{3.4cm}p{4.6cm}p{5.4cm}}
\toprule
Alternative & What it gives up & When the theory predicts it suffices\\
\midrule
Weaken the goal & progress & a guaranteed-valid action exists and stalling is acceptable\\
Obtain the information elsewhere & nothing formally, but the authoritative fact must still exist & queries or messages can split every aliased fiber within budget\\
Restructure state so local checks suffice & flexibility & the invariant is invariant-confluent, or can be split into local quotas\\
Accept probabilistic success & worst-case guarantees & the information is in the observation and errors are tolerable\\
Conventional specialized mechanisms & generality across boundaries & one mechanism already captures the full relevant state relation\\
\bottomrule
\end{tabular}
\normalsize
\end{center}

\paragraph{Weaken the goal.} If some action is valid throughout an aliased fiber (stop, escalate, do nothing), Theorem~\ref{thm:aliasing} no longer bites, at the price of progress (Proposition~\ref{prop:sound}). Asking for confirmation is this route; it fits when stalling is cheap.

\paragraph{Obtain the information another way.} Agents can query tools, broadcast state, write to a blackboard \cite{erman1980}, or delegate to an agent that sees everything. Theorem~\ref{thm:adaptive} says exactly when bounded queries suffice, and Corollary~\ref{cor:roles} caps communication at the joint observation. Every such route presupposes that the fact exists somewhere in authoritative form: the choice is between agents pulling authoritative state and a runtime pushing or enforcing it, not between having and not having it.

\paragraph{Restructure state so that local checks suffice.} Transactions that are \emph{invariant-confluent} need no global coordination \cite{bailis2014}, and conflict-free replicated data types realize suitable cases \cite{shapiro2011}; in the language of Section~\ref{sec:realization}, the reachable committed families glue. Independent spend increments are not invariant-confluent: from $s=0$, two replicas can commit $s_1=B$ and $s_2=B$, each valid, whose merge has $\sum_is_i=2B>B$. Escrow splits the bound into quotas $B_i$ with $\sum_iB_i=B$ \cite{oneil1986}, shrinking each local state space to $\{0,\ldots,B_i\}$ so that every tuple of local states is realizable, at the cost of flexibility. Study~IV shows that this suffices for an additive budget.

\paragraph{Accept probabilistic success.} When the distinction is present in the observation, learning can close the gap; the theorems bound only what is achievable when it is absent (Theorem~\ref{thm:minimax}). This fits when violations are rare, tolerable, and cheap to detect afterward; Proposition~\ref{prop:z2repair} indicates when afterward is expensive.

\paragraph{Conventional specialized mechanisms.} Transactions, dependency engines, version-aware restarts, residual solvers, and workflow engines implement parts of the semantics and tie with the general runtime where they capture the full relevant state (Studies III and VI, Corollary~\ref{cor:layers}). The general runtime is warranted where no single mechanism spans the state, such as a budget shared across roles with different tools, or provenance across artifacts managed by different systems.

\subsection{Full descent-style coherence-preservation argument}\label{app:preserve}

This states Section~\ref{sec:transitions} formally: re-expressions preserve coherence, and state changes must be checked at commit. The groupoid core isolates the invertible morphisms ($\mathcal G$), while $\mathcal C$ carries the generally noninvertible state changes, so the state assignment for transitions is category-valued with groupoid core $\mathcal S$. For each scope $U$ let $\mathcal S_{\mathcal C}(U)$ be a category of local states and local actions, with restriction functors, whose groupoid core is $\mathcal S$. Assume $\mathcal S_{\mathcal C}$ is a \emph{prestack}: local morphisms glue uniquely, so the comparison functor from $\mathcal S_{\mathcal C}(U)$ to descent data on a cover is fully faithful. A descent datum is \emph{effective} when it lies in the essential image of that functor, that is, when it comes from a global state; the coherent states are the effective data.

If $\mathcal S_{\mathcal C}$ were a stack, every descent datum would be effective and preservation would be free; the interesting setting is a prestack that need not be a stack.

Fix a cover $\{U_i\to U\}$. A descent datum consists of objects $x_i$ of $\mathcal S_{\mathcal C}(U_i)$ and overlap isomorphisms
\[
\phi_{ij}:x_i|_{U_{ij}}\overset{\sim}{\longrightarrow}x_j|_{U_{ij}}
\]
satisfying the cocycle condition on triple overlaps,
\[
\phi_{jk}\circ\phi_{ij}=\phi_{ik}.
\]
A morphism of descent data $(x_i,\phi_{ij})\to(y_i,\psi_{ij})$ is a family $f_i:x_i\to y_i$ with $\psi_{ij}\circ f_i|_{U_{ij}}=f_j|_{U_{ij}}\circ\phi_{ij}$. Restriction defines a comparison functor $\kappa:\mathcal S_{\mathcal C}(U)\to\mathrm{DD}(\{U_i\})$. The assignment is a \emph{prestack} for the cover when $\kappa$ is fully faithful, equivalently when morphisms glue uniquely, and a \emph{stack} when $\kappa$ is moreover essentially surjective. A datum is \emph{effective} when it is isomorphic to $\kappa(x)$ for some $x$.

With $\mathcal S_{\mathcal C}$ a prestack, let $(x_i,\phi_{ij})$ be effective with global realization $x$. Suppose local actions $f_i:x_i\to y_i$, together with target overlap isomorphisms $\psi_{ij}$ satisfying the cocycle law, form a morphism of descent data:
\[
\psi_{ij}\circ f_i=f_j\circ\phi_{ij}.
\]

\begin{proposition}[Coherence preservation and its limit]\label{prop:preserve}
\begin{enumerate}[label=(\roman*)]
\item If every $f_i$ is an isomorphism, the target datum $(y_i,\psi_{ij})$ is effective and the family glues to a unique global isomorphism $f:x\to y$.
\item If the target datum is effective with realization $y$, the family glues to a unique global transition $f:x\to y$.
\item For non-invertible $f_i$ the target datum need not be effective: there are prestacks, effective data, and locally valid actions whose combined result has no global realization.
\end{enumerate}
\end{proposition}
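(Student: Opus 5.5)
For parts (i) and (ii) I would work entirely inside the comparison functor $\kappa:\mathcal S_{\mathcal C}(U)\to\mathrm{DD}(\{U_i\})$. The prestack hypothesis, that $\kappa$ is fully faithful, does most of the work. The plan is to reduce (i) to (ii) by first proving that the target datum is effective, and then to prove (ii) by full faithfulness.

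For (ii), let $(y_i,\psi_{ij})$ be effective. Choose $y$ and an isomorphism of descent data $\beta:\kappa(y)\overset{\sim}{\to}(y_i,\psi_{ij})$. Choose also $\alpha:\kappa(x)\overset{\sim}{\to}(x_i,\phi_{ij})$ for the source. The family $(f_i)$ is a morphism of descent data by the displayed compatibility, so the composite $\beta^{-1}\circ(f_i)\circ\alpha$ is a morphism $\kappa(x)\to\kappa(y)$ in $\mathrm{DD}$. Full faithfulness gives a unique $f:x\to y$ with $\kappa(f)$ equal to this composite. That is the gluing, and uniqueness is faithfulness.

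For (i), each $f_i$ is invertible, so the family $(f_i^{-1})$ satisfies $\phi_{ij}\circ f_i^{-1}=f_j^{-1}\circ\psi_{ij}$. This is obtained by conjugating the compatibility equation. Hence $(f_i)$ is an isomorphism in $\mathrm{DD}$. Effectiveness is closed under isomorphism: $(y_i,\psi_{ij})\cong(x_i,\phi_{ij})\cong\kappa(x)$, so the target datum is effective with realization $x$ itself, or any $y$ isomorphic to it. Part (ii) then gives a unique $f$. The glued $f$ is an isomorphism because $\kappa$ is fully faithful and full faithfulness reflects isomorphisms: the inverse family glues to a two-sided inverse by uniqueness. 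The only care needed is the bookkeeping of which chosen realization is called $y$.

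For (iii) I would give an explicit countermodel built from the shared-budget gluing failure of Appendix~\ref{app:realize}.
\begin{enumerate}
\item Take the cover by disjoint scopes $U_1,U_2$ of a two-agent budget $B\ge1$.
\item Let the global category $\mathcal S_{\mathcal C}(U)$ have objects the spend vectors $s$ with $s_1+s_2\le B$ and morphisms the componentwise increases $s\le s'$ (a poset, so morphisms are unique).
\item Let each local category be the poset $\{0,\dots,B\}$, with restriction the projection.
\item With no overlaps, descent data are just pairs of local objects and $\kappa$ is the inclusion of a full subposet of $\{0,\dots,B\}^2$. This inclusion is fully faithful, so the assignment is a prestack.
\item The datum $(0,0)$ is effective. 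The local actions $0\to B$ on each side are valid, since each local state $B$ exists.
\item They form a morphism of descent data to $(B,B)$. This target is not in the image because $2B>B$, so it has no realization.
\end{enumerate}
This is Proposition~\ref{prop:inconfluent} in categorical dress. The step I expect to need the most care is confirming that this poset model really is a prestack: morphisms must glue uniquely, including between data that are not in the image. It may be cleaner to state full faithfulness only on the image, as the definition in the paper does.
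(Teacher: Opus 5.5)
Your proposal is correct and follows the paper's proof essentially step for step: (ii) by full faithfulness of $\kappa$ applied to the induced morphism $\kappa(x)\to\kappa(y)$; (i) by showing that $(f_i^{-1})$ is a morphism of descent data, so the target is isomorphic to $\kappa(x)$, and then using that $\kappa$ reflects isomorphisms; and (iii) by the same shared-budget poset countermodel (the paper takes general $m\ge2$ with identities on the remaining scopes, and your $m=2$ suffices for the existence claim). The extra care you flag at the end is unnecessary, because full faithfulness concerns only the maps $\mathrm{Hom}(s,t)\to\mathrm{Hom}_{\mathrm{DD}}(\kappa s,\kappa t)$ between data in the image of $\kappa$, and your inclusion of a full subposet satisfies this directly.
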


\begin{proof}
Assume $\mathcal S_{\mathcal C}$ is a prestack, $(x_i,\phi_{ij})\cong\kappa(x)$, and $(f_i)$ is a morphism of descent data to $(y_i,\psi_{ij})$.

(i) If every $f_i$ is an isomorphism, then $(f_i^{-1})$ is also a morphism of descent data, since $\psi_{ij}\circ f_i=f_j\circ\phi_{ij}$ gives $f_j^{-1}\circ\psi_{ij}=\phi_{ij}\circ f_i^{-1}$. Hence $(f_i)$ is an isomorphism in $\mathrm{DD}$ and $(y_i,\psi_{ij})\cong(x_i,\phi_{ij})\cong\kappa(x)$, so the target is effective, realized for instance by $y=x$. Full faithfulness of $\kappa$ gives a unique $f:x\to y$ with $\kappa(f)$ equal to the composite isomorphism, and $f$ is an isomorphism because $\kappa$ reflects isomorphisms.

(ii) If $(y_i,\psi_{ij})\cong\kappa(y)$, then $(f_i)$ determines a morphism $\kappa(x)\to\kappa(y)$ in $\mathrm{DD}$, and full faithfulness gives a unique $f:x\to y$ inducing it.

(iii) Take disjoint scopes $U_1,\ldots,U_m$ with $m\ge2$ and a budget $B\ge1$. Let $\mathcal S_{\mathcal C}(U_i)$ be the poset $\{0,\ldots,B\}$ viewed as a category, with a unique morphism $s\to s'$ when $s\le s'$ (spending never decreases). Let $\mathcal S_{\mathcal C}(U)$ be the poset $\{s\in\{0,\ldots,B\}^m:\sum_is_i\le B\}$ under the componentwise order, with restriction $s\mapsto s_i$. With no overlaps, descent data are arbitrary tuples and morphisms of descent data are componentwise inequalities. A global morphism $s\to t$ exists exactly when $s_i\le t_i$ for every $i$, and it is unique, so $\kappa$ is fully faithful and $\mathcal S_{\mathcal C}$ is a prestack. The datum $x=(0,\ldots,0)$ is effective. The local actions $f_1:0\to B$, $f_2:0\to B$, and identities elsewhere form a morphism of descent data, each $f_i$ is a valid local transition, and the target $(B,B,0,\ldots,0)$ is not effective because $2B>B$.

For the trajectory statement, let $X_{\mathrm{coh}}$ be the class of effective data and assume the initial state is effective. If every committed step is either a family of isomorphisms, covered by (i), or has a target that the runtime has checked to be effective, covered by (ii), then each step maps an effective state to an effective state and glues to a global transition. Induction on $t$ proves that every finite committed prefix remains coherent.
\end{proof}

The counterexample in (iii) is the shared budget of Section~\ref{sec:sharedbudget}. Each role's local transition is valid, every overlap condition holds vacuously, and the combined state is not a global state. A runtime that checks only local validity and overlap squares accepts it; a runtime that checks effectiveness, here the ledger constraint, rejects it. The descent argument describes one before/after family; a runtime that commits only part of it can expose an intermediate state, which is why atomic commit is needed to implement the transition as one step.

\section{Benchmark details}\label{app:empirical-app}
\subsection{Sobieski equation reconstruction}\label{app:sobieski}

We independently reconstructed the documented Propulsion and Mission submodels of the Sobieski SSBJ benchmark and matched published feasible-state values to numerical precision.

\begin{center}
\begin{tabular}{lrrr}
\toprule
Quantity & Reconstruction & GEMSEO reference & Absolute error\\
\midrule
SFC $y_{34}$ & 1.1075457743 & 1.1075457700 & $4.26\times10^{-9}$\\
Engine weight $y_{31}$ & 6354.324306910 & 6354.324306910 & $2.72\times10^{-10}$\\
ESF $y_{32}$ & 0.5027962509 & 0.5027962500 & $8.97\times10^{-10}$\\
Range $y_4$ (nm) & 535.788448111 & 535.788448180 & $6.89\times10^{-8}$\\
\bottomrule
\end{tabular}
\end{center}

This validates the controlled substrate, not the general theory.

\end{document}